\def\eqref#1{equation~\ref{#1}}
\def\1{\bm{1}}
\DeclareMathAlphabet{\mathsfit}{\encodingdefault}{\sfdefault}{m}{sl}
\SetMathAlphabet{\mathsfit}{bold}{\encodingdefault}{\sfdefault}{bx}{n}
\theoremstyle{plain}
\newtheorem{theorem}{Theorem}[section]
\newtheorem{definition}[theorem]{Definition}
\newtheorem{lemma}[theorem]{Lemma}
\newcommand{\abs}{\text{abs}}
\newcommand{\tr}{\text{Tr}}
\newtheorem{example}[theorem]{Example}
\title{Mutual Regression Distance}
\author{ {Dong Qiao} \\
	School of Science and Engineering\\
	The Chinese University of Hong Kong, Shenzhen\\
	\texttt{dongqiao@link.cuhk.edu.cn} \\
	\And
	{Jicong Fan}\thanks{Corresponding author} \\
	School of Data Science\\
	The Chinese University of Hong Kong, Shenzhen\\
	\texttt{fanjicong@cuhk.edu.cn} \\
}
\begin{document}
\maketitle

\begin{abstract}
The maximum mean discrepancy and Wasserstein distance are popular distance measures between distributions and play important roles in many machine learning problems such as metric learning, generative modeling, domain adaption, and clustering. However, since they are functions of pair-wise distances between data points in two distributions, they do not exploit the potential manifold properties of data such as smoothness and hence are not effective in measuring the dissimilarity between the two distributions in the form of manifolds. In this paper, different from existing measures, we propose a novel distance called Mutual Regression Distance (MRD) induced by a constrained mutual regression problem, 
which can exploit the manifold property of data.
We prove that MRD is a pseudometric that satisfies almost all the axioms of a metric. 
Since the optimization of the original MRD is costly, we provide a tight MRD and a simplified MRD, based on which a heuristic algorithm is established. 
We also provide kernel variants of MRDs that are more effective in handling nonlinear data. 
Our MRDs especially the simplified MRDs have much lower computational complexity than the Wasserstein distance. We provide theoretical guarantees, such as robustness, for MRDs. 
Finally, we apply MRDs to distribution clustering, generative models, and domain adaptation. 
The numerical results demonstrate the effectiveness and superiority of MRDs compared to the baselines.
\end{abstract}

\keywords{Metric learning \and distance \and kernel method \and domain adaptation}

\section{Introduction}
\label{sec: intro}
Probability distribution is an effective tool to describe certain complex data objects of many domains like document retrieval, image matching, and generative modeling \citep{chizat2020faster,tolstikhin2016minimax,arjovsky2017wasserstein}. A lot of machine learning methods intimately rely on comparison between data pairs \citep{vayer2023controlling,yang2006distance,von2007tutorial,van2008visualizing,kulis2013metric,kaya2019deep}. To compare two probability distributions, some divergences have been proposed like Kullback-Leibler divergence \citep{kullback1951information}, beta divergence \citep{basu1998robust}, and TV divergence \citep{pinsker1964information}. While these classical norms or distances only evaluate the difference of density point-wisely, another kind of distance can capture the geometric nature of probability distributions \citep{feydy2019interpolating}. For example, Wasserstein distance and sinkhorn distance were developed based on the optimal transport (OT) theory \citep{villani2009optimal,cuturi2013sinkhorn}. \citet{gretton2006kernel,gretton2012kernel} presented maximum mean distance (MMD) from the perspective of statistical test. Both OT distances and MMD have been intensively studied in theory and applications \citep{modeste2024characterization,agrawal2021optimal,schrab2023mmd}.

Wasserstein distance, a type of OT metric, measures the difference between probability measures \citep{kantorovich1960mathematical} and is also known as the Earth Mover's distance \citep{rubner2000earth}. To compute the mean of a set of empirical probability measures, \cite{wassersteinbarycenter2011} introduced the concept of a Wasserstein barycenter. Leveraging this metric, \cite{arjovsky2017wasserstein} presented Wasserstein generative adversarial networks (WGANs). Subsequently, \cite{gulrajani2017improved} proposed the WGAN-GP, incorporating a gradient penalty to enforce the Lipschitz constraint more effectively. Despite its strengths, Wasserstein distance is computationally expensive \citep{villani2009optimal}. To address this, \cite{haviv2024wasserstein} proposed Wasserstein Wormhole, which approximates OT distances using Euclidean distances in a latent space, significantly reducing computational costs.

Sinkhorn distance, proposed by \cite{cuturi2013sinkhorn}, extends classical OT distances by adding an entropy regularization term, significantly improving computational efficiency  through Sinkhorn's matrix scaling algorithm. Its sample complexity bounds were established by \cite{genevay2019sample}, providing theoretical guarantees for its use in practical application. To further accelerate computation, \cite{altschuler2017near} proposed the Greenkhorn algorithm, outperforming the classical Sinkhorn algorithm. For generative modeling, \cite{genevay2018learning} developed a scalable approach for training large-scale models by combining entropic smoothing with algorithm differentiation, allowing the loss function to interpolate between OT and MMD. Building on this, \cite{feydy2019interpolating} further introduced the Sinkhorn divergence. Sinkhorn distance now serves a robust and computationally efficient alternative to classical OT.

MMD was initially introduced to determine whether two samples originate from different distributions \citep{gretton2006kernel,gretton2012kernel}. Since then, it has been widely applied in machine learning \citep{dziugaite2015training}. For example, \cite{wang2024spectral} investigated the use of spectral clustering combined with MMD to cluster discrete distributions. In the context of generative modeling, \cite{li2017mmd} introduced MMD-GAN, leveraging adversarial kernel learning for improved GAN training. Later on, both \cite{binkowski2018demystifying} and \cite{arbel2018gradient} developed the gradient penalty techniques for MMD-GAN. Additionally, MMD has proven effective for domain adaptation. For instance, \cite{baktashmotlagh2016distribution} proposed a distribution-matching embedding approach, using MMD to align the source and target distributions. It is worthy of noting that MMD is computationally more efficient than OT distances, offering a practical and cost-effective alternative\citep{feydy2019interpolating}.

The definitions, detailed in Section \ref{sec: preliminaries}, of Wasserstein distance, Sinkhorn distance, and MMD imply that they are functions of dissimilarities between pair-wise points in two distributions, i.e., $f(\{\|\mathbf{x}_i^{\mathcal{D}_k}-\mathbf{x}_j^{\mathcal{D}_l}\|\}_{i,j,k,l})$. For instance, in Wasserstein distance, $f$ is the inner product between the cost matrix $M$ and the transport plan matrix $P$, where the cost matrix $M$ is usually the pair-wise distance matrix between data points. In many applications, data lie on low-dimensional manifolds, while functions on the dissimilarities between data pairs on two manifolds do not exploit the important properties of manifolds such as smoothness, and hence are not effective in measuring the dissimilarity between the two manifolds. In other words, the Wasserstein distance, Sinkhorn distance, and MMD may not be effective enough to quantify the dissimilarity between two distributions in the form of manifolds. 

In this work, different from the aforementioned distance measures, we propose a novel distance called mutual regression distance (MRD). MRD is built upon linear or kernel regression between two sets of samples drawn from two distributions respectively and is able to exploit the potential manifold property of data. The definition is concise and its computation is simple. Similar to MMD and OT distances, MRD can be applied to many problems such as generative modeling, clustering, and transfer learning. Our contribution is summarized as follows.
\begin{itemize}
    \item We present a novel distance metric MRD and provide an effective optimization algorithm for the computation.
    \item We theoretically show that MRD satisfies almost all axioms of metric.
    \item We provide a few variants of MRD such as simplified MRD and kernel MRD.
    \item We analyze the robustness of MRD to perturbations.
\end{itemize}
We apply MRD to synthetic-data distribution transformation, discrete distribution clustering, deep generative modeling, and domain adaptation, in comparison to many competitors. The results demonstrated the effectiveness and superiority of MRD.

\textbf{Notations}\hspace{1em} We use $x$, $\bm{x}$, $\bm{X}$, and $\mathcal{X}$ to denote scalar, vector, matrix, and set (or distribution), respectively and use $\|\cdot\|$, $\|\cdot\|_F$, $\|\cdot\|_2$, and $\|\cdot\|_\infty$ to denote vector's Euclidean norm, matrix's Frobenius norm, matrix's spectral norm (largest singular value), vector or matrix's $\ell_\infty$ norm (maximum absolute element), respectively. We denote by $\sigma_{min}(\bm X)$ the smallest singular value of a matrix $\bm X$. Given a set $I$, we use $|I|$ to denote the cardinality of $I$. If we use $i\in[N]$ for positive integers $i$ and $N$, it means $i\in\{1, 2, \dots, N\}$.

\section{Preliminaries and Related Work}
\label{sec: preliminaries}
In this section, we review some popular distances to evaluate the difference between distributions and their applications. 
\paragraph{Wasserstein distance} For two probability vectors $\bm r$ and $\bm c$ in the simplex $\Sigma_d = \{\bm x\in\mathbb R_+^d; \bm x^T\bm 1_d = 1\}$, the Wasserstein distance \citep{villani2009optimal,panaretos2019statistical} between $\bm r$ and $\bm c$ is defined as the optimum of the problem
\begin{equation*}
\begin{aligned}
d_M(\bm r,\bm c) = \min_{\bm P\in U(\bm r,\bm c)}\langle \bm P, \bm M \rangle
\end{aligned}
\end{equation*}
where $U(\bm r,\bm c) = \{\bm P\in\mathbb R_+^{d\times d}; \bm P\bm 1_d = \bm r,\bm P^T\bm 1_d = \bm c\}$ and $\bm M\in\mathcal{M} = \{\bm M\in\mathbb R_+^{d\times d};\forall i,j\le d, m_{ij} = 0\iff i = j, \forall i,j,k\le d,m_{ij}\le m_{ik} + m_{kj}\}$. $\bm{M}$ is often called the cost matrix in the problem of optimal transport. There are also a few variants or extensions of Wasserstein distance \citep{paty2019subspace,nguyen2022hierarchical}, such as sliced Wasserstein distance \citep{bonneel2015sliced,kolouri2019generalized}, which often focus on improving the computational efficiency.

\paragraph{Sinkhorn distance} For two probability vectors $\bm r$ and $\bm c$ in the simplex $\Sigma_d = \{\bm x\in\mathbb R_+^d; \bm x^T\bm 1_d = 1\}$, the Sinkhorn distance \citep{cuturi2013sinkhorn,feydy2019interpolating} between $\bm r$ and $\bm c$ is defined as the optimum of the problem
\begin{equation*}
\begin{aligned}
d_M(\bm r,\bm c) = \min_{\bm P\in U_\alpha(\bm r,\bm c)}\langle \bm P, \bm M \rangle
\end{aligned}
\end{equation*}
where $U_\alpha(\bm r,\bm c) = \{\bm P\in U(\bm r,\bm c); \text{KL}(\bm P||\bm r\bm c^T)\le \alpha\}\subset U(\bm r,\bm c)$. It was shown by \citep{cuturi2013sinkhorn} that for a large enough $\alpha$, the Sinkhorn distance is equal to the Wasserstein distance. In practice, the constraint of KL divergence is usually relaxed to a regularizer called entropic regularization.

\paragraph{Maximum mean discrepancy} Distinguishing two distributions by finite samples is known as \textit{Two-Sample Test} in statistics. Maximum mean distance (MMD) is exactly a common way to conduct two-sample test \citep{gretton2012kernel}. Given two distributions $P$ and $Q$, the MMD in $\mathcal{H}_k$ between two distributions $P$ and $Q$ over $\mathcal{X}$ associated to a kernel $k$ is defined by
\begin{align*}
\text{MMD}_k^2(P,Q) = \|\mu_P - \mu_Q\|_k^2 = \mathbb{E}_{P\otimes P}[k(\bm{x},\bm{x}')] - 2\mathbb{E}_{P\otimes Q}[k(\bm{x},\bm{y})] + \mathbb{E}_{Q\otimes Q}[k(\bm{y},\bm{y}')]
\end{align*}

In practice, we use finite samples $X = \{\bm{x}_1, \dots, \bm{x}_m\}$ and $Y = \{\bm{y}_1, \dots, \bm{y}_m\}$ from $P$ and $Q$ to estimate the MMD distance. That is,
\begin{align*}
\widehat{\text{MMD}}_k^2(X,Y) = \frac{1}{\binom{m}{2}}\sum_{i = 1}^m\sum_{j\neq i}^m k(\bm{x}_i,\bm{x}_j) - \frac{2}{\binom{m}{2}}\sum_{i = 1}^m\sum_{j = 1}^m k(\bm{x}_i,\bm{y}_j) + \frac{1}{\binom{m}{2}}\sum_{i = 1}^m\sum_{j\neq i}^m k(\bm{y}_i,\bm{y}_j)
\end{align*}

The aforementioned three distances have been applied to many machine learning problems such as generative modeling \citep{masud2023multivariate,arjovsky2017wasserstein,gulrajani2017improved,li2017mmd}, domain adaptation \citep{baktashmotlagh2016distribution,ben2006analysis,courty2016optimal,weiss2016survey,shen2018wasserstein}, and discrete distribution clustering \citep{Li_Wang_2007,ye2017fast}, and graph learning \citep{chen2020optimal,sun2024mmd}.
For instance, \citet{arjovsky2017wasserstein} proposed the Wasserstein generative adversarial networks (WGAN), where the Wasserstein distance was used to compare the distance between the distribution of real data and generated data. \citet{li2017mmd} proposed the MMD-GAN. \citet{shen2018wasserstein} proposed Wasserstein distance-guided representation learning for domain adaption. \cite{Li_Wang_2007} and \cite{ye2017fast} proposed to use Wasserstein barycenter \citep{wassersteinbarycenter2011} to cluster discrete distributions such as images and text. \citet{wang2024spectral} used Wasserstein distance, Sinkhorn distance, or MMD to construct a similarity matrix of discrete distribution and then conduct spectral clustering. \citet{sun2024mmd} proposed a graph kernel based on MMD and graph neural network.

As discussed before, Wasserstein distance, Sinkhorn distance, and MMD do not exploit the potential manifold property of data and hence are not effective in measuring the dissimilarity between the two distributions in the form of manifolds. To exploit the potential manifold structure of the data, we propose mutual regression distance. It is known that, on a manifold, the smoothness property implies that every data point can be well represented as a linear combination of its a few neighbors. This principle has been widely used in manifold learning and data clustering. If the two manifolds are closer to each other, we can represent every point on one manifold using a small number of points on the other manifold more accurately. This inspires us to use the regression error to quantify the dissimilarity between two distributions.

Note that our work is different from subspace distance \citep{sun2007further,ye2016schubert} and manifold alignment \citep{ham2003learning,wang2009manifold}. Subspace distance is usually based on the principal angles or Golub-Werman distance, which results in information loss when the data distributions are complex. Manifold alignment aims to find two projections to map the data points on two manifolds to a new space while preserving the neighborhood relationships. Most manifold alignment algorithms assume the correspondences between data points on two manifolds are known or partially known, which never holds when comparing two distributions. Although the method proposed by \citep{wang2009manifold} does not require known correspondence, it isn't easy to use the method as a distance metric between two distributions.

\section{Mutual Regression Distance}
\label{sec: mrd for spectral clustering}
In this section, we first define a mutual regression problem and then show that it satisfies some axioms of metric. Based on it, we define the mutual regression distance.

\subsection{Mutual regression problem} 

Suppose the columns of $\bm{X}_1 \in \mathbb R^{m\times n_1}$ and $\bm{X}_2 \in \mathbb R^{m\times n_2}$ are drawn from some unknown distributions $\mathcal{D}_1$ and $\mathcal{D}_2$ respectively. We would like to measure how different between $\bm{X}_1$ and $\bm{X}_2$ or even between $\mathcal{D}_1$ and $\mathcal{D}_2$.
First, we define a mutual regression problem (MRP) as follows.
\begin{definition}[Mutual regression problem]\label{def_MRP}
Assume $w_1, w_2 \in \mathbb{R}_+$ such that $w_1 + w_2 = 1$, let $\bm{X}_1 \in \mathbb R^{m\times n_1}$, $\bm{X}_2 \in\mathbb R^{m\times n_2}$, $\bm{S}_{12} \in \mathbb R^{n_2\times n_1}$,
$\bm{S}_{21} \in\mathbb R^{n_1\times n_2}$, and $\mathcal{S}_{2}^{\le 1} = \{\bm{S} \in \mathbb R^{n\times n'}: \|\bm{S}\|_2 \le 1, n\in\mathbb N^*, n'\in\mathbb N^* \}$, then a mutual regression problem can be defined as
\begin{equation}
\begin{aligned}
\mathop{\textup{minimize}}_{\bm{S}_{12},\bm{S}_{21}}~~ & \sqrt{w_1\Vert\bm{X}_1 - \bm{X}_2\bm{S}_{12}\Vert_F^2 + w_2\Vert\bm{X}_2 - \bm{X}_1\bm{S}_{21}\Vert_F^2}\\
\textup{subject to}~~& \{\bm{S}_{12}, \bm{S}_{21}\} \subseteq \mathcal{S}_{2}^{\le 1}
\end{aligned}
\end{equation}
\end{definition}
In MRP, we perform linear regression between the columns of $\bm{X}_1$ and $\bm{X}_2$ mutually, where the coefficients matrices $\bm{S}_1$ and $\bm{S}_2$ are required to have unit spectral norms. The mutual regressions mean that we try to use the columns of $\bm{X}_2$ to represent the columns $\bm{X}_1$ and vice versa. Some simple properties of MRP are as follows.
\begin{lemma}[Feasible set is convex]\label{lemma: feasible set is convex}
Assume $\bm{S}_1,\bm{S}_2 \in \mathcal{S}_{2}^{\le 1}$, and $\lambda \in [0, 1]$, let $\bm{S} = \lambda \bm{S}_1 + (1 - \lambda)\bm{S}_2$, then $\|\bm{S}\|_2 \in \mathcal{S}_{2}^{\le 1}$.
\end{lemma}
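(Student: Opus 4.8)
The plan is to exploit the fact that $\mathcal{S}_{2}^{\le 1}$ is nothing but the unit ball of the spectral norm, and that every norm ball is convex. Concretely, I would take an arbitrary convex combination $\bm{S} = \lambda\bm{S}_1 + (1-\lambda)\bm{S}_2$ with $\bm{S}_1,\bm{S}_2 \in \mathcal{S}_{2}^{\le 1}$ and $\lambda\in[0,1]$, and bound its spectral norm directly using the defining properties of a norm. I would also read the conclusion as the evident $\bm{S}\in\mathcal{S}_{2}^{\le 1}$, correcting the apparent typo $\|\bm{S}\|_2\in\mathcal{S}_{2}^{\le 1}$.

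First I would apply the triangle inequality for the spectral norm, which holds because $\|\cdot\|_2$ is a genuine matrix norm induced by the Euclidean vector norm, to obtain $\|\bm{S}\|_2 \le \|\lambda\bm{S}_1\|_2 + \|(1-\lambda)\bm{S}_2\|_2$. Next I would use absolute homogeneity together with $\lambda,1-\lambda\ge 0$ to pull the nonnegative scalars out, giving $\|\bm{S}\|_2 \le \lambda\|\bm{S}_1\|_2 + (1-\lambda)\|\bm{S}_2\|_2$. Finally, invoking the hypotheses $\|\bm{S}_1\|_2\le 1$ and $\|\bm{S}_2\|_2\le 1$ yields $\|\bm{S}\|_2 \le \lambda + (1-\lambda) = 1$, so $\bm{S}\in\mathcal{S}_{2}^{\le 1}$, which is exactly the claim.

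There is essentially no serious obstacle here, since the result is the standard convexity of a norm ball; the only point worth stating carefully is that $\bm{S}_1$ and $\bm{S}_2$ must share the same dimensions for the convex combination to be well defined, so that the triangle inequality applies within a single normed space $\mathbb{R}^{n\times n'}$. Consequently the convexity is really a statement about each fixed-dimension slice $\{\bm{S}\in\mathbb{R}^{n\times n'}:\|\bm{S}\|_2\le 1\}$ of $\mathcal{S}_{2}^{\le 1}$, and I would note this explicitly to keep the argument clean.
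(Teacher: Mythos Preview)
Your proposal is correct and follows essentially the same approach as the paper: apply the triangle inequality and homogeneity of the spectral norm to get $\|\bm{S}\|_2 \le \lambda\|\bm{S}_1\|_2 + (1-\lambda)\|\bm{S}_2\|_2 \le 1$. Your added remarks about the typo in the conclusion and the need for $\bm{S}_1,\bm{S}_2$ to share dimensions are valid clarifications that the paper leaves implicit.
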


\begin{lemma}\label{lemma: MRP is a convex problem}
MRP is a convex optimization problem
\end{lemma}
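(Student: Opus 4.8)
The plan is to show that the objective of MRP is a convex function of the decision variables $(\bm{S}_{12}, \bm{S}_{21})$ and that the feasible set is convex, since a convex objective minimized over a convex set constitutes a convex optimization problem. Lemma~\ref{lemma: feasible set is convex} already establishes that $\mathcal{S}_2^{\le 1}$ is convex, and the feasible set is just the Cartesian product $\mathcal{S}_2^{\le 1}\times\mathcal{S}_2^{\le 1}$, which inherits convexity. So the remaining work is entirely about the objective.

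First I would argue that each term $\bm{X}_1 - \bm{X}_2\bm{S}_{12}$ is an affine (hence in particular linear-plus-constant) map of $\bm{S}_{12}$, and likewise $\bm{X}_2 - \bm{X}_1\bm{S}_{21}$ is affine in $\bm{S}_{21}$. Composing an affine map with the Frobenius norm $\|\cdot\|_F$, which is convex, yields a convex function; thus $g_1(\bm{S}_{12}) := \|\bm{X}_1 - \bm{X}_2\bm{S}_{12}\|_F$ and $g_2(\bm{S}_{21}) := \|\bm{X}_2 - \bm{X}_1\bm{S}_{21}\|_F$ are each convex (and nonnegative) in their respective arguments, and therefore jointly convex in $(\bm{S}_{12}, \bm{S}_{21})$.

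The main obstacle is handling the square root of a weighted sum of squares, i.e. showing $(\bm{S}_{12}, \bm{S}_{21}) \mapsto \sqrt{w_1 g_1^2 + w_2 g_2^2}$ is convex. The key observation is that the outer map $(u,v) \mapsto \sqrt{w_1 u^2 + w_2 v^2}$, for $u,v \ge 0$, is exactly the weighted Euclidean norm $\|(\sqrt{w_1}\,u, \sqrt{w_2}\,v)\|_2$, which is a norm and hence convex, and moreover nondecreasing in each coordinate on the nonnegative orthant. Composing a convex, coordinatewise-nondecreasing outer function with the convex inner functions $(g_1, g_2)$ preserves convexity. I would invoke the standard composition rule: if $h:\mathbb{R}^2\to\mathbb{R}$ is convex and nondecreasing in each argument, and $g_1, g_2$ are convex, then $h(g_1, g_2)$ is convex. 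The nonnegativity of $g_1, g_2$ is what makes the coordinatewise monotonicity relevant here.

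Putting it together, the objective is a convex function of $(\bm{S}_{12}, \bm{S}_{21})$ over the convex feasible set $\mathcal{S}_2^{\le 1}\times\mathcal{S}_2^{\le 1}$, so MRP is a convex optimization problem. A cleaner alternative that avoids the composition subtlety is to stack the two residuals: define the block map $\bm{S}_{12}, \bm{S}_{21} \mapsto (\sqrt{w_1}(\bm{X}_1 - \bm{X}_2\bm{S}_{12}),\ \sqrt{w_2}(\bm{X}_2 - \bm{X}_1\bm{S}_{21}))$, which is affine, and observe that the objective equals the Frobenius norm of this stacked (block-diagonal) quantity; an affine map followed by a norm is convex, giving the result in one stroke. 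I would likely present this stacking argument as the primary proof and mention the composition rule only as a remark.
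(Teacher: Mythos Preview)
Your proposal is correct, and the stacking argument you identify as the primary proof is exactly what the paper does: it rewrites the objective as the Frobenius norm of an affine map in $(\bm{S}_{12},\bm{S}_{21})$ (via a block-diagonal arrangement) and then verifies convexity directly from the triangle inequality for $\|\cdot\|_F$. Your composition-rule alternative is also valid but is not used in the paper.
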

Without loss of generality, in MRP, for convenience, we let $w_1=w_2=\frac{1}{2}$, and define the mutual regression distance (MRD) as follows.
\begin{definition}[MRD]
With the same notations as Definition \ref{def_MRP}, the MRD between $\bm{X}_1$ and $\bm{X}_2$ is defined as
\begin{equation}
\begin{aligned}
\text{MRD}(\bm{X}_1, \bm{X}_2) = & \min_{\{\bm{S}_{12}, \bm{S}_{21}\} \subseteq \mathcal{S}_{2}^{\le 1}} ~\sqrt{\frac{1}{2}\Vert\bm{X}_1 - \bm{X}_2\bm{S}_{12}\Vert_F^2 + \frac{1}{2}\Vert\bm{X}_2 - \bm{X}_1\bm{S}_{21}\Vert_F^2}
\end{aligned}
\end{equation}
\end{definition}

\begin{theorem}[MRD is a pseudometric]
\label{thm: MRD is a pseudometric}
$\text{MRD}(\cdot, \cdot)$ satisfies all three axioms of pseudometric. Namely, for any $\bm{X}_1, \bm{X}_2, \bm{X}_3 \in \mathbb{R}^{m\times n}$, the following properties hold.
\begin{itemize}
\item Non-negativity: $\text{MRD}(\cdot, \cdot) \ge 0$ and $\text{MRD}(\bm{X}_1, \bm{X}_2) = 0$ if $\bm{X}_1 = \bm{X}_2$
\item Symmetry: $\text{MRD}(\bm{X}_1, \bm{X}_2) = \text{MRD}(\bm{X}_2, \bm{X}_1)$
\item Triangle inequality: $\text{MRD}(\bm{X}_1, \bm{X}_3) \le \text{MRD}(\bm{X}_1, \bm{X}_2) + \text{MRD}(\bm{X}_2, \bm{X}_3)$
\end{itemize}
\end{theorem}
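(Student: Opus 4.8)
The plan is to dispatch the three axioms in increasing order of difficulty. Non-negativity is immediate: the MRD objective is the square root of a non-negative quantity, so $\text{MRD}(\cdot,\cdot)\ge 0$. When $\bm{X}_1=\bm{X}_2$ (so in particular they share the same number of columns), I would exhibit the feasible point $\bm{S}_{12}=\bm{S}_{21}=\bm{I}$, which lies in $\mathcal{S}_2^{\le 1}$ because $\|\bm{I}\|_2=1$; both residuals $\bm{X}_1-\bm{X}_2\bm{I}$ and $\bm{X}_2-\bm{X}_1\bm{I}$ then vanish, forcing the minimum to be $0$. For symmetry, I would exploit that $w_1=w_2=\tfrac12$: the relabeling $(\bm{S}_{12},\bm{S}_{21})\mapsto(\bm{S}_{21},\bm{S}_{12})$ is a bijection of the common feasible set that carries the objective defining $\text{MRD}(\bm{X}_2,\bm{X}_1)$ exactly onto the one defining $\text{MRD}(\bm{X}_1,\bm{X}_2)$, so the two infima agree.

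The real work is the triangle inequality, and the idea is to construct feasible regression matrices for the pair $(\bm{X}_1,\bm{X}_3)$ by composing the optimizers of the two summands. Let $(\bm{S}_{12}^*,\bm{S}_{21}^*)$ and $(\bm{S}_{23}^*,\bm{S}_{32}^*)$ attain $\text{MRD}(\bm{X}_1,\bm{X}_2)$ and $\text{MRD}(\bm{X}_2,\bm{X}_3)$ respectively. I would take $\bm{S}_{13}=\bm{S}_{23}^*\bm{S}_{12}^*$ and $\bm{S}_{31}=\bm{S}_{21}^*\bm{S}_{32}^*$ as the candidate point for $\text{MRD}(\bm{X}_1,\bm{X}_3)$. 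Feasibility follows from submultiplicativity of the spectral norm, $\|\bm{S}_{23}^*\bm{S}_{12}^*\|_2\le\|\bm{S}_{23}^*\|_2\|\bm{S}_{12}^*\|_2\le 1$, and likewise for $\bm{S}_{31}$; this is exactly where the unit-spectral-norm constraint is essential.

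To control the residuals at this point, I would insert the intermediate term and regroup as $\bm{X}_1-\bm{X}_3\bm{S}_{23}^*\bm{S}_{12}^*=(\bm{X}_1-\bm{X}_2\bm{S}_{12}^*)+(\bm{X}_2-\bm{X}_3\bm{S}_{23}^*)\bm{S}_{12}^*$, then apply the Frobenius triangle inequality together with $\|\bm{A}\bm{B}\|_F\le\|\bm{A}\|_F\|\bm{B}\|_2$ and $\|\bm{S}_{12}^*\|_2\le 1$ to get $\|\bm{X}_1-\bm{X}_3\bm{S}_{13}\|_F\le a_1+a_2$, where $a_1=\|\bm{X}_1-\bm{X}_2\bm{S}_{12}^*\|_F$ and $a_2=\|\bm{X}_2-\bm{X}_3\bm{S}_{23}^*\|_F$. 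A mirror-image computation gives $\|\bm{X}_3-\bm{X}_1\bm{S}_{31}\|_F\le b_1+b_2$ with $b_1=\|\bm{X}_2-\bm{X}_1\bm{S}_{21}^*\|_F$ and $b_2=\|\bm{X}_3-\bm{X}_2\bm{S}_{32}^*\|_F$. Since $\text{MRD}(\bm{X}_1,\bm{X}_3)$ is a minimum, it is bounded above by the objective at $(\bm{S}_{13},\bm{S}_{31})$, so it remains to verify $\sqrt{\tfrac12(a_1+a_2)^2+\tfrac12(b_1+b_2)^2}\le\sqrt{\tfrac12 a_1^2+\tfrac12 b_1^2}+\sqrt{\tfrac12 a_2^2+\tfrac12 b_2^2}$. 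This is precisely the Euclidean triangle inequality for the vectors $\tfrac{1}{\sqrt2}(a_1,b_1)$ and $\tfrac{1}{\sqrt2}(a_2,b_2)$ in $\mathbb{R}^2$, whose right-hand side equals $\text{MRD}(\bm{X}_1,\bm{X}_2)+\text{MRD}(\bm{X}_2,\bm{X}_3)$.

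I expect the main obstacle to be the triangle inequality: the two insights that make it go through are choosing the composite $\bm{S}_{13}=\bm{S}_{23}^*\bm{S}_{12}^*$ as the feasible candidate (so that the spectral-norm bound both certifies feasibility and lets the cross term $\|(\bm{X}_2-\bm{X}_3\bm{S}_{23}^*)\bm{S}_{12}^*\|_F$ be absorbed), and recognizing that the residual bookkeeping reduces to the ordinary $\ell_2$ triangle inequality in two dimensions. The choice $w_1=w_2=\tfrac12$ only serves to make this last reduction symmetric and clean; a general convex weighting would go through identically.
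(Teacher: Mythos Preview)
Your proposal is correct and follows essentially the same route as the paper: both dispatch non-negativity via $\bm{S}=\bm{I}$, symmetry via the equal weights $w_1=w_2$, and the triangle inequality by taking the composite candidates $\bm{S}_{13}=\bm{S}_{23}^*\bm{S}_{12}^*$, $\bm{S}_{31}=\bm{S}_{21}^*\bm{S}_{32}^*$, using spectral-norm submultiplicativity for feasibility and $\|\bm{A}\bm{B}\|_F\le\|\bm{A}\|_F\|\bm{B}\|_2$ to absorb the extra factor. The only cosmetic difference is that the paper concatenates both residuals into one block matrix and applies the Frobenius triangle inequality there, whereas you bound each residual separately and invoke the $\ell_2$ triangle inequality in $\mathbb{R}^2$ at the end; these are equivalent presentations of the same estimate.
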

It is worth noting that the theorem does not explicitly take into account the case that there exists a permutation matrix $\bm{P}$ on the columns of $\bm{X}_1$ or $\bm{X}_2$ such that the three axioms hold. Therefore, MRD is permutation-invariant. MRD does not satisfy the separation property, i.e., $d(\bm{X}_1, \bm{X}_2) = 0$ does not imply $\bm{X}_1 = \bm{X}_2$ or $\bm{X}_1 = \bm{X}_2\bm{P}$ for some permutation $\bm{P}$. This, however, is a useful property of MRD. We provide an example.
\begin{example}
    Suppose the columns of $\bm{X}_1$ and $\bm{X}_2$ are drawn from a subspace in $\mathbb{R}^m$, where the orthogonal bases are $\bm{U}=[\bm{u}_1,\ldots,\bm{u}_d]$ and $d\leq\min\{m,n_1,n_2\}$. Suppose $\bm{X}_1=\bm{U}\bm{Z}_1$ and $\bm{X}_2=\bm{U}\bm{Z}_2$. Then $\bm{U}=\bm{X}_2\bm{Z}_2^\top(\bm{Z}_2\bm{Z}_2^\top)^{-1}$. This indicates that $\bm{X}_1=\bm{X}_2\bm{S}_{12}$, where $\bm{S}_{12}=\bm{Z}_2^\top(\bm{Z}_2\bm{Z}_2^\top)^{-1}\bm{Z}_1$. It is possible that the spectral norm of $\bm{S}_{12}$ is less than or equal to $1$ (e.g., 
 both $\|\bm{Z}_2^\top(\bm{Z}_2\bm{Z}_2^\top)^{-1}\|_2$ and $\|\bm{Z}_1\|_2$ are no larger than 1). Similarly, we can find an $\bm{S}_{21}$ such that $\bm{X}_2=\bm{X}_1\bm{S}_{21}$ and $\|\bm{S}_{21}\|_2\leq 1$. These show that even when the columns of $\bm{X}_1$ and $\bm{X}_2$ are different, $\text{MRD}(\bm{X}_1,\bm{X}_2)$ is still possible to be zero.
\end{example}

Note that the constraints on the spectral norms of  $\bm{S}_{12}$ and $\bm{S}_{21}$ can make it inefficient to solve this corresponding problem. To make it smooth, we can tighten the constraints by replacing matrix 2-norm with Frobenius norm because $\Vert\bm{S}\Vert_F \le 1$ implies $\Vert\bm{S}\Vert_2 \le 1$. Therefore, we can consider the following version.
\begin{definition}[Tightened MRD]
Assume $\bm{X}_1 \in \mathbb R^{m\times n_1}$, $\bm{X}_2 \in\mathbb R^{m\times n_2}$, $\bm{S}_{12} \in \mathbb R^{n_2\times n_1}$,
$\bm{S}_{21} \in\mathbb R^{n_1\times n_2}$, and $\mathcal{S}_{F}^{\le 1} = \{\bm{S} \in \mathbb R^{m\times n}: \|\bm{S}\|_F \le 1, m\in\mathbb N^*, n\in\mathbb N^* \}$, the tightened MRD is defined as
\begin{equation}
\begin{aligned}
\text{MRD}_t(\bm{X}_1, \bm{X}_2) = \min_{\{\bm{S}_{12}, \bm{S}_{21}\} \subseteq \mathcal{S}_{F}^{\le 1}} \sqrt{\frac{1}{2}\Vert\bm{X}_1 - \bm{X}_2\bm{S}_{12}\Vert_F^2 + \frac{1}{2}\Vert\bm{X}_2 - \bm{X}_1\bm{S}_{21}\Vert_F^2}
\end{aligned}
\end{equation}
\end{definition}

Obviously, it still satisfies the properties in Theorem \ref{thm: MRD is a pseudometric}. However, in this case, the mutual regression errors may not be sufficiently small especially when $n_1$ and $n_2$ are large. 

Note that in some scenarios such as generative modeling, the distance between two distributions is not necessarily a metric or pseudo-metric. Instead, one only needs to measure the dissimilarity between the source and target distributions. Therefore, we provide the following simplified MRD.
\begin{definition}[Simplified MRD]
Assume $\bm{X}_1 \in \mathbb R^{m\times n_1}$, $\bm{X}_2 \in\mathbb R^{m\times n_2}$, $\bm{S}_{12} \in \mathbb R^{n_2\times n_1}$, and
$\bm{S}_{21} \in\mathbb R^{n_1\times n_2}$, the simplified MRD is defined as
$$\text{MRD}_s(\bm{X}_1, \bm{X}_2) =  \sqrt{\frac{1}{2}\Vert\bm{X}_1 - \bm{X}_2\bm{S}_{12}^\ast\Vert_F^2 + \frac{1}{2}\Vert\bm{X}_2 - \bm{X}_1\bm{S}_{21}^\ast\Vert_F^2}$$
where 
$\bm{S}_{12}^\ast= \mathop{\textup{argmin}}_{\bm{S}_{12}} \frac{1}{2}\Vert\bm{X}_1 - \bm{X}_2\bm{S}_{12}\Vert_F^2+\frac{\lambda_{12}}{2}\|\bm{S}_{21}\|_F^2$ and $ \bm{S}_{21}^\ast= \mathop{\textup{argmin}}_{\bm{S}_{21}}\frac{1}{2}\Vert\bm{X}_2 - \bm{X}_1\bm{S}_{21}\Vert_F^2+\frac{\lambda_{21}}{2}\|\bm{S}_{21}\|_F^2$.   
\end{definition}
The simplified MRD has a closed-form solution, i.e., $\bm{S}_{12}^\ast = (\bm{X}_2^T\bm{X}_2 + \lambda_{12}\bm{I}_{n_2})^{-1}\bm{X}_2^T\bm{X}_1$ and 
$\bm{S}_{21}^\ast = (\bm{X}_1^T\bm{X}_1 + \lambda_{21}\bm{I}_{n_1})^{-1}\bm{X}_1^T\bm{X}_2$. This greatly improves the computational efficiency.

\renewcommand{\algorithmicrequire}{\textbf{Input:}}
\renewcommand{\algorithmicensure}{\textbf{Output:}}

\begin{algorithm}[!ht]
\caption{Heuristic algorithm for searching for feasible $\lambda_{12}$}
\label{alg: heuristic method for lambda}
\begin{algorithmic}[1]
\Require $\bm{X}_1, \bm{X}_2, tol$
\State Initialization: $l = 0, r = \Vert\bm{X}_2^T\bm{X}_1\Vert_2 - \Vert\bm{X}_2^T\bm{X}_2\Vert_2;$
\State $c = (l + r)/2;$
\While{True}
\State $\Vert S_{12}\Vert_2 = \Vert(\bm{X}_2^T\bm{X}_2 + c\bm{I}_{n_2})^{-1}\bm{X}_2^T\bm{X}_1\Vert_2$
\If{$\abs(\Vert S_{12}\Vert_2 - 1) < tol$}
\State break;
\EndIf
\If{$\Vert S_{12}\Vert_2 > 1$}
\State $l = c;$
\Else
\State $r = c;$
\EndIf
\State $c = (l + r)/2;$
\EndWhile
\end{algorithmic}
\textbf{Output: } $\lambda_{12}=c$
\end{algorithm}
  
\subsection{Optimization}

Although the proposed MRD is a convex optimization problem, it may not be that efficient to solve such a constrained problem. However, we can take advantage of  MRD$_t$ and MRD$_s$ to obtain MRD approximately and efficiently. Our idea is searching $\lambda_{12}$ and $\lambda_{21}$ using MRD$_s$ such that $\bm{S}_{12}$ and $\bm{S}_{21}$ have a unit spectral norm, which is detailed in the following context.






The following lemma provides guidance for searching $\lambda_{12}$.
\begin{lemma}[Upper bound on regularization coefficient]\label{lem:upper bdd of regularization coefficient}
Assume $\bm{X}_1 \in \mathbb R^{m\times n_1}$ and $\bm{X}_2 \in\mathbb R^{m\times n_2}$, there exists a unique $\lambda_{12}$ in $[0,r]$ such that 
\begin{equation}
\begin{aligned}
\Vert\bm{S}_{12}(\lambda_{12})\Vert_2 = \Vert(\bm{X}_2^T\bm{X}_2 + \lambda_{12}\bm{I}_{n_2})^{-1}\bm{X}_2^T\bm{X}_1\Vert_2 = 1
\end{aligned}
\end{equation}
where $r = \Vert\bm{X}_2^T\bm{X}_1\Vert_2 - \sigma_{min}(\bm{X}_2^T\bm{X}_2)$.
\end{lemma}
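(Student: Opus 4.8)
The plan is to reduce the claim to an elementary analysis of the scalar function
\[
g(\lambda) \;=\; \|\bm{S}_{12}(\lambda)\|_2 \;=\; \big\|(\bm{A} + \lambda\bm{I}_{n_2})^{-1}\bm{B}\big\|_2, \qquad \bm{A} := \bm{X}_2^\top\bm{X}_2,\quad \bm{B} := \bm{X}_2^\top\bm{X}_1,
\]
on the interval $[0,r]$. Since $\bm{A}$ is symmetric positive semidefinite, it suffices to prove that $g$ is continuous and \emph{strictly decreasing}, together with the endpoint bounds $g(0)\ge 1\ge g(r)$; the intermediate value theorem then yields a root, and strict monotonicity makes it unique. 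Continuity is immediate: $\lambda\mapsto(\bm{A}+\lambda\bm{I}_{n_2})^{-1}$ is continuous on $[0,r]$ because the eigenvalues of $\bm{A}+\lambda\bm{I}_{n_2}$ are $\mu_i+\lambda>0$, and the spectral norm is a continuous function of its matrix argument.

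The heart of the argument is monotonicity, which is not obvious because $g$ is the norm of a \emph{product}. I would write
\[
g(\lambda)^2 \;=\; \lambda_{\max}\!\big(\bm{B}^\top(\bm{A}+\lambda\bm{I}_{n_2})^{-2}\bm{B}\big).
\]
Diagonalizing $\bm{A}=\bm{Q}\,\mathrm{diag}(\mu_i)\,\bm{Q}^\top$, each eigenvalue of $(\bm{A}+\lambda\bm{I}_{n_2})^{-2}$ equals $(\mu_i+\lambda)^{-2}$, which is strictly decreasing in $\lambda$; hence for $\lambda_1<\lambda_2$ the difference $(\bm{A}+\lambda_1\bm{I}_{n_2})^{-2}-(\bm{A}+\lambda_2\bm{I}_{n_2})^{-2}$ is positive definite. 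Conjugating by $\bm{B}$ preserves the Loewner order, so $\bm{B}^\top(\bm{A}+\lambda_1\bm{I}_{n_2})^{-2}\bm{B}\succeq\bm{B}^\top(\bm{A}+\lambda_2\bm{I}_{n_2})^{-2}\bm{B}$, and the largest eigenvalue is monotone with respect to the Loewner order, giving $g(\lambda_1)\ge g(\lambda_2)$. For \emph{strict} decrease I would feed the top eigenvector $\bm{v}$ of $\bm{B}^\top(\bm{A}+\lambda_2\bm{I}_{n_2})^{-2}\bm{B}$ into the Courant--Fischer characterization: $g(\lambda_1)^2\ge \bm{v}^\top\bm{B}^\top(\bm{A}+\lambda_1\bm{I}_{n_2})^{-2}\bm{B}\bm{v}>\bm{v}^\top\bm{B}^\top(\bm{A}+\lambda_2\bm{I}_{n_2})^{-2}\bm{B}\bm{v}=g(\lambda_2)^2$, where the strict inequality uses $\bm{B}\bm{v}\neq\bm{0}$ (guaranteed when $\bm{B}=\bm{X}_2^\top\bm{X}_1$ has full column rank).

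It remains to bound the endpoints. For the upper end I would use submultiplicativity: $g(\lambda)\le\|(\bm{A}+\lambda\bm{I}_{n_2})^{-1}\|_2\,\|\bm{B}\|_2=\|\bm{B}\|_2/(\sigma_{\min}(\bm{A})+\lambda)$, since $\bm{A}$ being PSD gives $\|(\bm{A}+\lambda\bm{I}_{n_2})^{-1}\|_2=(\sigma_{\min}(\bm{A})+\lambda)^{-1}$. Substituting $\lambda=r=\|\bm{B}\|_2-\sigma_{\min}(\bm{A})$ collapses the bound to exactly $1$, so $g(r)\le 1$. The lower end, $g(0)=\|\bm{A}^{-1}\bm{B}\|_2\ge 1$, is the step I expect to be the main obstacle: the only free lower bound, $g(0)\ge\|\bm{B}\|_2/\|\bm{A}\|_2=\|\bm{X}_2^\top\bm{X}_1\|_2/\|\bm{X}_2^\top\bm{X}_2\|_2$, exceeds $1$ only in the regime where the unconstrained regression coefficient already violates the unit-norm constraint. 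I would therefore either invoke this as the operating assumption (the case in which regularization is actually needed, consistent with the binary search in Algorithm~\ref{alg: heuristic method for lambda}) or add the hypothesis $\|\bm{X}_2^\top\bm{X}_1\|_2\ge\|\bm{X}_2^\top\bm{X}_2\|_2$; outside this regime $g(0)<1$ and, by the monotonicity just proved, no root exists, so such a hypothesis is essentially necessary. Granting $g(0)\ge 1\ge g(r)$, continuity and strict monotonicity deliver a unique $\lambda_{12}\in[0,r]$ with $g(\lambda_{12})=1$, completing the proof.
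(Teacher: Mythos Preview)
Your approach is essentially the same as the paper's: both use continuity, monotonicity of $\lambda\mapsto\|\bm S_{12}(\lambda)\|_2$, the intermediate value theorem, and the submultiplicative bound $\|\bm S_{12}(\lambda)\|_2\le\|\bm X_2^\top\bm X_1\|_2/(\sigma_{\min}(\bm X_2^\top\bm X_2)+\lambda)$ to pin down $r$. Two differences are worth noting. First, your Loewner-order argument for monotonicity is more rigorous than the paper's, which simply asserts a sign on the derivative (and in fact states it with the wrong sign, writing ``non-decreasing'' where ``non-increasing'' is meant). Second, you correctly flag that $g(0)\ge 1$ is an extra hypothesis; the paper handles this identically to your suggestion, opening with the case split ``If $\|\bm S_{12}(0)\|_2=1$ we are done; assume $\|\bm S_{12}(0)\|_2>1$'' without further justification, so your instinct that this is the operating regime rather than a provable consequence is exactly right. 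One small over-requirement in your strictness step: you ask that $\bm B=\bm X_2^\top\bm X_1$ have full column rank to guarantee $\bm B\bm v\ne\bm 0$, but since $\bm v$ is a top eigenvector of $\bm B^\top(\bm A+\lambda_2\bm I)^{-2}\bm B$ with eigenvalue $g(\lambda_2)^2>0$, one automatically has $\bm B\bm v\ne\bm 0$ whenever $\bm B\ne\bm 0$.
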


By Lemma \ref{lem:upper bdd of regularization coefficient}, we can construct a sequence of regularized least square estimators $\bm{S}_{12}(\lambda_{12}^{(k)})$, $k\in\mathbb N$ to get a feasible solution to the MRD by the heuristic Algorithm \ref{alg: heuristic method for lambda}, and so is $\bm S_{21}(\lambda_{21})$. Then, we can evaluate the MRD between $\bm{X}_1$ and $\bm{X}_2$ efficiently.

For this algorithm, we have to evaluate the spectral norm of $\bm{S}_{12}(\lambda_{12})$ or $\bm{S}_{21}(\lambda_{21})$ with multiple times. It may be very time-consuming when either $n_1$ or $n_2$ is large. One way to improve it is based on Sherman-Woodbury-Morrison formula if $m \ll \min\{n_1,n_2\}$. Another alternative is to precompute the SVD before the search loop starts. Specifically, assume $\bm{X}_2^T\bm{X}_2 = \bm V\Sigma\bm V^T$, one has
\begin{equation}
\begin{aligned}
\Vert\bm S_{12}\Vert_2 & = \Vert(\bm{X}_2^T\bm{X}_2 + c\bm{I}_{n_2})^{-1}\bm{X}_2^T\bm{X}_1\Vert_2 = \Vert(\bm V\Sigma\bm V^T + c\bm{I}_{n_2})^{-1}\bm{X}_2^T\bm{X}_1\Vert_2\\
& = \Vert[\bm V(\boldsymbol{\Sigma} + c\bm{I}_{n_2})\bm V^T]^{-1}\bm{X}_2^T\bm{X}_1\Vert_2 = \Vert\bm V(\boldsymbol{\Sigma} + c\bm{I}_{n_2})^{-1}\bm V^T\bm{X}_2^T\bm{X}_1\Vert_2
\end{aligned}
\end{equation}
where $(\boldsymbol{\Sigma} + c\bm{I}_{n_2})^{-1}$ is easy to evaluate since both $\boldsymbol{\Sigma}$ and $\bm I_{n_2}$ are diagonal matrices.




\begin{figure}
    \centering
    \includegraphics[width=0.8\linewidth]{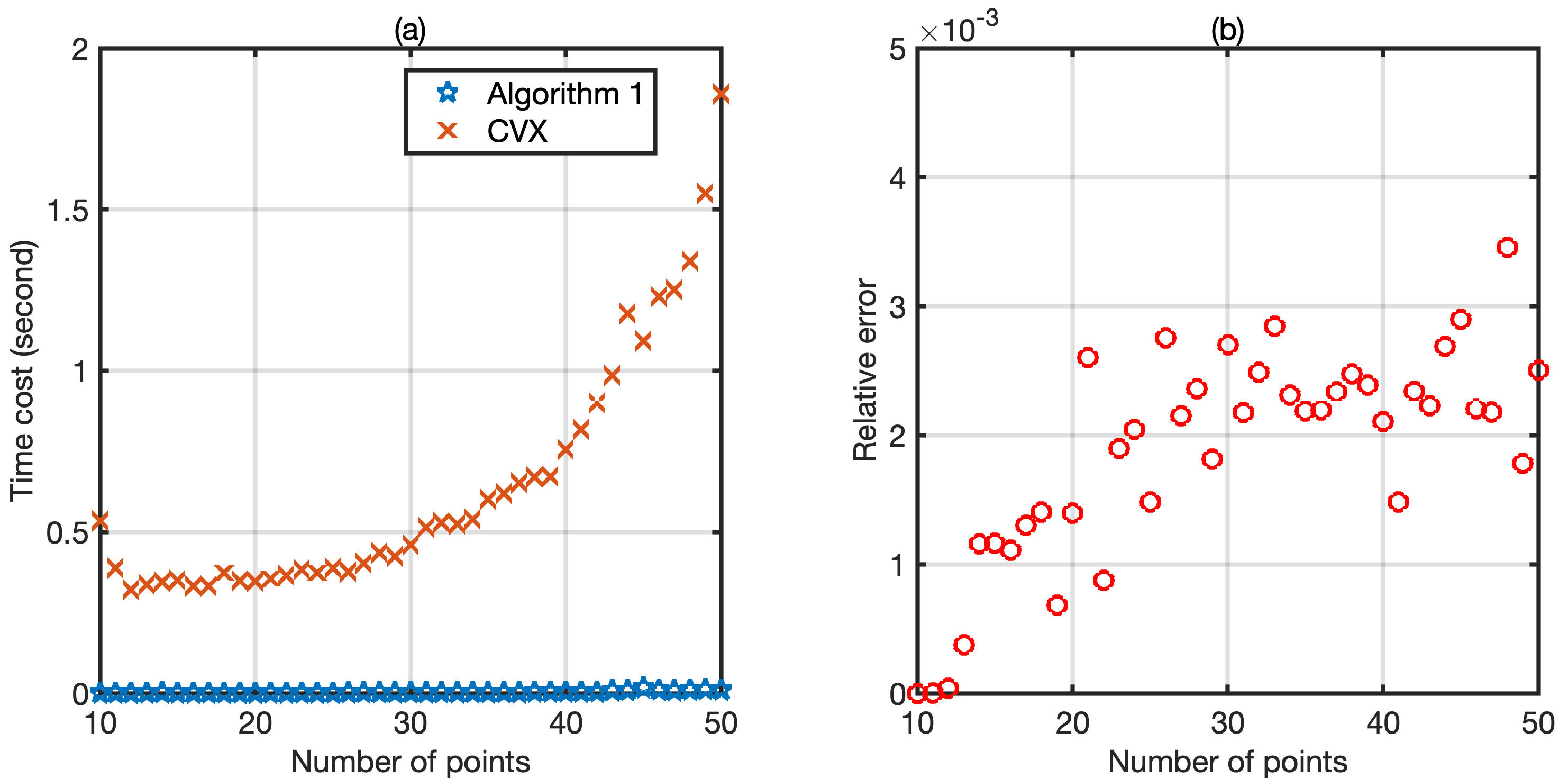}
    \caption{Comparison of constrained optimization solved by CVX and Algorithm 1}
    \label{fig:opt}
\end{figure}

To show the effectiveness of Algorithm \ref{alg: heuristic method for lambda}, we generate a toy example by drawing samples from two Gaussian distributions with different means and the same variance. We compare Algorithm \ref{alg: heuristic method for lambda} with the constrained optimization solved by CVX in MATLAB. The time cost comparison when the number of samples increases from 10 to 50 is shown in Figure \ref{fig:opt}(a). We see that Algorithm \ref{alg: heuristic method for lambda} is much faster than the constrained optimization solved by CVX. Figure \ref{fig:opt}(b) shows the relative error, defined as (MRD$_\text{Alg1}$-MRD$_\text{CVX})/$MRD$_\text{CVX}$. We see that the relative error is always less than $0.5\%$, which demonstrates the high accuracy of Algorithm \ref{alg: heuristic method for lambda}.

\subsection{Kernel MRD}
We can equip the proposed MRD with some kernel functions (e.g., polynomial and Gaussian kernels), which can further augment its discriminative power because the kernel implicitly maps the data into a very high dimensional vector space in which the mapped data may be easier to identify.
\begin{definition}[Kernel MRD]
Using the same notations as Definition \ref{def_MRP} and letting $\phi:\mathbb{R}^m\rightarrow\mathbb{R}^M$ be a feature map induced by a kernel function $\mathcal{K}(\cdot,\cdot)$, the kernel MRD is defined as
\begin{equation}
\begin{aligned}
\text{KMRD}(\bm{X}_1, \bm{X}_2) = & ~\min_{\{\bm{S}_{12}, \bm{S}_{21}\} \subseteq \mathcal{S}_{2}^{\le 1}} ~\sqrt{\frac{1}{2}\Vert\phi(\bm{X}_1) - \phi(\bm{X}_2)\bm{S}_{12}\Vert_F^2 + \frac{1}{2}\Vert\phi(\bm{X}_2) - \phi(\bm{X}_1)\bm{S}_{21}\Vert_F^2}\\
= & ~\min_{\{\bm{S}_{12}, \bm{S}_{21}\} \subseteq \mathcal{S}_{2}^{\le 1}} ~\left\{\frac{1}{2}\textup{Tr}\left(\mathcal{K}(\bm X_1,\bm X_1) - 2\mathcal{K}(\bm X_1,\bm X_2)\bm S_{12} + \bm S_{12}^T\mathcal{K}(\bm X_2,\bm X_2)\bm S_{12}\right)\right.\\
& \left.+ \frac{1}{2}\textup{Tr}\left(\mathcal{K}(\bm X_2,\bm X_2) - 2\mathcal{K}(\bm X_2,\bm X_1)\bm S_{21} + \bm S_{21}^T\mathcal{K}(\bm X_1,\bm X_1)\bm S_{21}\right)\right\}^{1/2}
\end{aligned}
\end{equation}
\end{definition}
Clearly, it still satisfies the properties in Theorem \ref{thm: MRD is a pseudometric}. Analogously, we define a tightened kernel MRD as follows.
\begin{definition}[Tightened Kernel MRD]
Assume $\bm{X}_1 \in \mathbb R^{m\times n_1}$, $\bm{X}_2 \in\mathbb R^{m\times n_2}$, $\bm{S}_{12} \in \mathbb R^{n_2\times n_1}$,
$\bm{S}_{21} \in\mathbb R^{n_1\times n_2}$, and $\mathcal{S}_{F}^{\le 1} = \{\bm{S} \in \mathbb R^{m\times n}; \|\bm{S}\|_F \le 1, m\in\mathbb N^*, n\in\mathbb N^* \}$, the tightened kernel MRD is defined as
\begin{equation}
\begin{aligned}
\text{KMRD}_t(\bm{X}_1, \bm{X}_2) = & \min_{\bm{S}_{1,2}} \sqrt{\frac{1}{2}\Vert\phi(\bm{X}_1) - \phi(\bm{X}_2)\bm{S}_{12}\Vert_F^2 + \frac{1}{2}\Vert\phi(\bm{X}_2) - \phi(\bm{X}_1)\bm{S}_{21}\Vert_F^2}\\
& \text{s.t. }\bm{S}_{1,2} = \{\bm{S}_{12}, \bm{S}_{21}\} \subseteq \mathcal{S}_{F}^{\le 1}
\end{aligned}
\end{equation}
\end{definition}

Similar to the simplified MRD, we can define a simplified kernel MRD.
\begin{definition}[Simplified KMRD]
Assume $\bm{X}_1 \in \mathbb R^{m\times n_1}$, $\bm{X}_2 \in\mathbb R^{m\times n_2}$, $\bm{S}_{12} \in \mathbb R^{n_2\times n_1}$, and
$\bm{S}_{21} \in\mathbb R^{n_1\times n_2}$, the simplified kernel MRD is defined as
$$\text{KMRD}_s(\bm{X}_1, \bm{X}_2) =  \sqrt{\frac{1}{2}\Vert\phi(\bm{X}_1) - \phi(\bm{X}_2)\bm{S}_{12}^\ast\Vert_F^2 + \frac{1}{2}\Vert\phi(\bm{X}_2) - \phi(\bm{X}_1)\bm{S}_{21}^\ast\Vert_F^2}$$
where 
$\bm{S}_{12}^\ast= \mathop{\textup{argmin}}_{\bm{S}_{12}} \frac{1}{2}\Vert\phi(\bm{X}_1) - \phi(\bm{X}_2)\bm{S}_{12}\Vert_F^2+\frac{\lambda_{12}}{2}\|\bm{S}_{21}\|_F^2$ and $ \bm{S}_{21}^\ast= \mathop{\textup{argmin}}_{\bm{S}_{21}}\frac{1}{2}\Vert\phi(\bm{X}_2) - \phi(\bm{X}_1)\bm{S}_{21}\Vert_F^2+\frac{\lambda_{21}}{2}\|\bm{S}_{21}\|_F^2$.   
    
\end{definition}
The simplified MRD has a closed-form solution, i.e., $\bm{S}_{12}^\ast = (\mathcal{K}(\bm{X}_2,\bm{X}_2) + \lambda_{12}\bm{I}_{n_2})^{-1}\mathcal{K}(\bm{X}_2,\bm{X}_1)$ and 
$\bm{S}_{21}^\ast = (\mathcal{K}(\bm{X}_1,\bm{X}_1) + \lambda_{21}\bm{I}_{n_1})^{-1}\mathcal{K}(\bm{X}_1,\bm{X}_2)$. This greatly improves the computational efficiency. Analogously, we can use the heuristic Algorithm \ref{alg: heuristic method for lambda} to help evaluate the kernel MRD between $\bm X_1$ and $\bm X_2$.

Compared to the linear MRD, kernel MRD is able to handle data with more complex structures. The following provides an example of nonlinear data for which the kernel MRD archives zero value.
\begin{example}
    Suppose the columns of $\bm{X}_1$ and $\bm{X}_2$ are generated by an $\alpha$-order polynomial model on a $d$-dimensional latent variable, meaning that they are lying on the same manifold. Let $\mathcal{K}$ be a $q$-order polynomial kernel. Then the ranks of $\phi(\boldsymbol{X}_1)$ and $\phi(\boldsymbol{X}_2)$ are both no larger than $\min \left\{\binom{d+\alpha q}{\alpha q},\binom{m+q}{q}, n\right\}$. Thus we can write $\phi(\bm{X}_1)=\bm{U}\bm{Z}_1$ and $\phi(\bm{X}_2)=\bm{U}\bm{Z}_2$, where $\text{rank}(\bm{U})\leq \binom{d+\alpha q}{\alpha q}$ provided that $n\gg m\gg d$ and $\alpha,q$ are not too large. We have $\phi(\bm{X})_1=\phi(\bm{X})_2\bm{S}_{12}$, where $\bm{S}_{12}=\bm{Z}_2^\top(\bm{Z}_2\bm{Z}_2^\top)^{-1}\bm{Z}_1$. It is possible that the spectral norm of $\bm{S}_{12}$ is less than or equal to $1$. Similar results can be obtained for $\bm{S}_{21}$. In this case, KMRD is zero.
\end{example}

\section{Theoretical Analysis}
\label{sec: theoretical guarantees}
\label{subsec: sensitivity to noise}
As real data are often noisy, a reliable distance measure between distributions shouldn't be too sensitive to noise or perturbations. Therefore, in this section, we investigate the robustness of MRD to perturbations. 

The following theorem shows the robustness of linear MRD to Gaussian noise.
\begin{theorem}[Perturbation analysis on MRD]\label{thm_perturbation_analysis_1}
Given $\bm X_1\in\mathbb R^{m\times n_1}, \bm X_2\in\mathbb R^{m\times n_2}$, let $\Tilde{\bm X}_1 = \bm X_1 + \bm \Delta_1$ and $\Tilde{\bm X}_2 = \bm X_2 + \bm \Delta_2$ be respectively the perturbed data of $\bm X_1$ and $\bm X_2$ where all entries of $\bm \Delta_1 = (e_{i,j}^{(1)})_{i\in[m],j\in[n_1]}$ and $\bm \Delta_2 = (e_{i,j}^{(2)})_{i\in[m],j\in[n_2]}$ are sampled from $\mathcal{N}(0,\sigma^2)$. Then with probability at least $1 - e^{-t}$, the perturbed MRD between $\Tilde{\bm X}_1$ and $\Tilde{\bm X}_2$ is bounded as
\begin{equation}
\begin{aligned}
\left|\text{MRD}(\bm{X}_1 + \bm{\Delta}_1, \bm{X}_2 + \bm{\Delta}_2) - \text{MRD}(\bm{X}_1, \bm{X}_2)\right| \le 2\sigma\xi_{m, n_1,n_2}\sqrt{w_1 + w_2}
\end{aligned}
\end{equation}
where $\xi_{m, n_1,n_2} = \sqrt{m\max\{n_1,n_2\} + 2\sqrt{m\max\{n_1,n_2\}t} + 2t}$.
\end{theorem}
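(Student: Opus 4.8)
The plan is to bound the change in MRD when each argument is perturbed, and then to control the size of the perturbations probabilistically. The backbone is the triangle inequality from Theorem~\ref{thm: MRD is a pseudometric}. Writing $\tilde{\bm X}_1 = \bm X_1 + \bm\Delta_1$ and $\tilde{\bm X}_2 = \bm X_2 + \bm\Delta_2$, I would apply the triangle inequality twice along the chain $\tilde{\bm X}_1 \to \bm X_1 \to \bm X_2 \to \tilde{\bm X}_2$ to obtain
\begin{equation*}
\text{MRD}(\tilde{\bm X}_1, \tilde{\bm X}_2) - \text{MRD}(\bm X_1, \bm X_2) \le \text{MRD}(\tilde{\bm X}_1, \bm X_1) + \text{MRD}(\bm X_2, \tilde{\bm X}_2),
\end{equation*}
and then repeat with the perturbed and clean arguments interchanged; symmetry of MRD makes the two right-hand sides coincide, so that $\lvert \text{MRD}(\tilde{\bm X}_1, \tilde{\bm X}_2) - \text{MRD}(\bm X_1, \bm X_2)\rvert \le \text{MRD}(\tilde{\bm X}_1, \bm X_1) + \text{MRD}(\bm X_2, \tilde{\bm X}_2)$. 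This reduces the problem to bounding how far MRD moves when a \emph{single} argument is shifted by its noise matrix.

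Each such single-argument term admits a clean upper bound obtained by exhibiting a feasible point of the minimization. Since $\tilde{\bm X}_1$ and $\bm X_1$ share the same number of columns $n_1$, the identity $\bm I_{n_1}$ is feasible because $\|\bm I_{n_1}\|_2 = 1$, and taking $\bm S_{12} = \bm S_{21} = \bm I_{n_1}$ gives
\begin{equation*}
\text{MRD}(\tilde{\bm X}_1, \bm X_1) \le \sqrt{w_1\|\bm\Delta_1\|_F^2 + w_2\|\bm\Delta_1\|_F^2} = \sqrt{w_1 + w_2}\,\|\bm\Delta_1\|_F,
\end{equation*}
and likewise $\text{MRD}(\bm X_2, \tilde{\bm X}_2) \le \sqrt{w_1 + w_2}\,\|\bm\Delta_2\|_F$. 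This yields the deterministic estimate $\lvert \text{MRD}(\tilde{\bm X}_1, \tilde{\bm X}_2) - \text{MRD}(\bm X_1, \bm X_2)\rvert \le \sqrt{w_1 + w_2}\,(\|\bm\Delta_1\|_F + \|\bm\Delta_2\|_F)$.

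It then remains to control $\|\bm\Delta_1\|_F + \|\bm\Delta_2\|_F$ with high probability. Since the entries of $\bm\Delta_i$ are i.i.d.\ $\mathcal N(0, \sigma^2)$, we have $\|\bm\Delta_i\|_F^2 / \sigma^2 \sim \chi^2_{m n_i}$, so I would invoke the Laurent--Massart tail bound, $\mathbb P(\chi^2_k \ge k + 2\sqrt{kt} + 2t) \le e^{-t}$. Because $k \mapsto k + 2\sqrt{kt} + 2t$ is increasing and $n_i \le \max\{n_1, n_2\}$, this controls each $\|\bm\Delta_i\|_F$ by $\sigma\,\xi_{m,n_1,n_2}$. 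To keep the overall probability at $1 - e^{-t}$, I would apply the bound to the stacked noise $[\bm\Delta_1\ \bm\Delta_2]\in\mathbb R^{m\times(n_1+n_2)}$, whose squared Frobenius norm is $\sigma^2\chi^2_{m(n_1+n_2)}$, and combine it with $\|\bm\Delta_1\|_F + \|\bm\Delta_2\|_F \le \sqrt2\,\|[\bm\Delta_1\ \bm\Delta_2]\|_F$ together with $n_1+n_2 \le 2\max\{n_1,n_2\}$; a short comparison of the two radicands then shows $\|\bm\Delta_1\|_F + \|\bm\Delta_2\|_F \le 2\sigma\,\xi_{m,n_1,n_2}$ on that single high-probability event. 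Substituting into the deterministic estimate reproduces the stated bound $2\sigma\,\xi_{m,n_1,n_2}\sqrt{w_1+w_2}$.

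I expect the main obstacle to be bookkeeping rather than a deep idea. First, the triangle inequality in Theorem~\ref{thm: MRD is a pseudometric} is stated for equal-sized matrices, so I must verify that the composition-of-regressions argument behind it still applies when $n_1 \ne n_2$; since only the shared row dimension $m$ enters (the coefficient matrices simply compose, and spectral norms multiply submultiplicatively to stay in $\mathcal{S}_2^{\le 1}$), it does extend. Second, aligning the concentration constants with the precise form of $\xi_{m,n_1,n_2}$ while retaining the claimed probability $1-e^{-t}$ is the delicate point, because the naive route of bounding each $\|\bm\Delta_i\|_F$ separately and union-bounding would cost a factor of two in the failure probability; routing the argument through the stacked-noise $\chi^2_{m(n_1+n_2)}$ variable avoids this.
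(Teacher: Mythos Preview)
Your approach is correct and takes a genuinely different route from the paper. The paper does not invoke the pseudometric triangle inequality at all; instead it fixes the \emph{optimal} coefficients $(\bm S_{12},\bm S_{21})$ for the clean pair $(\bm X_1,\bm X_2)$, plugs them as feasible (suboptimal) points into the perturbed objective, and splits via the Frobenius-norm triangle inequality to isolate a pure noise term $\bigl\|[\sqrt{w_1}(\bm\Delta_1-\bm\Delta_2\bm S_{12}),\,\sqrt{w_2}(\bm\Delta_2-\bm\Delta_1\bm S_{21})]\bigr\|_F$, which it then controls using $\|\bm S_{12}\|_2,\|\bm S_{21}\|_2\le 1$ to reach $2\sqrt{w_1\|\bm\Delta_1\|_F^2+w_2\|\bm\Delta_2\|_F^2}$; the reverse inequality comes from swapping the roles of clean and perturbed data. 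Your route is more modular: it leans on Theorem~\ref{thm: MRD is a pseudometric} along the chain $\tilde{\bm X}_1\to\bm X_1\to\bm X_2\to\tilde{\bm X}_2$ and reduces to the easy same-column comparison $\text{MRD}(\bm X_i,\tilde{\bm X}_i)$ via the identity coefficient, never touching the unknown optimal $\bm S$'s. A concrete payoff of your version is the probability bookkeeping: your stacked-noise $\chi^2_{m(n_1+n_2)}$ argument delivers the stated $1-e^{-t}$ exactly, whereas the paper applies Laurent--Massart to $\|\bm\Delta_1\|_F^2$ and $\|\bm\Delta_2\|_F^2$ separately and intersects the two events, obtaining only $(1-e^{-t})^2=1-2e^{-t}+e^{-2t}$. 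The paper's route has the minor advantage of not needing the unequal-column extension of the triangle inequality that you flag, but as you note, that extension follows immediately from the same gluing construction $\bm S_{13}=\bm S_{23}\bm S_{12}$ used in the paper's proof of Theorem~\ref{thm: MRD is a pseudometric}.
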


To present the robustness of kernel MRD, we first show the robustness of the kernel matrix to Gaussian noises in the following lemma.
\begin{lemma}[Perturbation error on kernel matrix]\label{lem_perturbation_error_on_kernel_matrix}
Given $\bm X_1\in\mathbb R^{m\times n_1}, \bm X_2\in\mathbb R^{m\times n_2}$, let $\Tilde{\bm X}_1 = \bm X_1 + \bm \Delta_1$ and $\Tilde{\bm X}_2 = \bm X_2 + \bm \Delta_2$ be respectively the perturbed data of $\bm X_1$ and $\bm X_2$ where all entries of $\bm \Delta_1 = (e_{i,j}^{(1)})_{i\in[m],j\in[n_1]}$ and $\bm \Delta_2 = (e_{i,j}^{(2)})_{i\in[m],j\in[n_2]}$ are sampled from $\mathcal{N}(0,\sigma^2)$. Then with the probability at least $1 - n_1n_2e^{-t}$, the perturbation error on the Gaussian kernel matrix is bounded as
\begin{equation}\begin{aligned}
\|\mathcal{K}(\Tilde{\bm X}_1, \Tilde{\bm X}_2) - \mathcal{K}(\bm X_1,\bm X_2)\|_{\infty} \le \frac{1}{r^2}\left[\left(\sigma\xi_m + \frac{\|\mathcal{D}(\bm X_1,\bm X_2)\|_\infty}{\sqrt{2}}\right)^2 - \frac{\|\mathcal{D}(\bm X_1,\bm X_2)\|_\infty^2}{2}\right]
\end{aligned}\end{equation}
where $\xi_m = \sqrt{m + 2\sqrt{mt} + 2t}$, $r$ is the hyperparameter of the Gaussian kernel controlling the smoothness, and $\|\mathcal{D}(\bm X_1,\bm X_2)\|_\infty$ is the maximum entry of the pairwise distance matrix between $\bm X_1$ and $\bm X_2$.
\end{lemma}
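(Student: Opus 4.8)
The plan is to reduce the matrix-level statement to an entrywise bound on the Gaussian kernel and then control each entry through the concentration of a single noisy difference vector. Writing the Gaussian kernel as $\mathcal{K}(\bm{x},\bm{y}) = \exp(-\|\bm{x}-\bm{y}\|^2/(2r^2))$, the $(i,j)$ entry of $\mathcal{K}(\tilde{\bm X}_1,\tilde{\bm X}_2)$ depends only on $\tilde{\bm x}_i^{(1)} - \tilde{\bm x}_j^{(2)} = (\bm x_i^{(1)} - \bm x_j^{(2)}) + \bm e_{ij}$, where $\bm e_{ij} = \bm\delta_i^{(1)} - \bm\delta_j^{(2)}$ collects the $i$-th column of $\bm\Delta_1$ minus the $j$-th column of $\bm\Delta_2$. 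Since these two columns are independent Gaussians, each coordinate of $\bm e_{ij}$ is $\mathcal{N}(0, 2\sigma^2)$, so $\|\bm e_{ij}\|^2/(2\sigma^2)$ is a $\chi^2_m$ variable.

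First I would apply the Laurent--Massart tail bound for chi-squared variables, which gives $\|\bm e_{ij}\|^2 \le 2\sigma^2(m + 2\sqrt{mt} + 2t) = 2\sigma^2\xi_m^2$, i.e.\ $\|\bm e_{ij}\| \le \sqrt{2}\,\sigma\xi_m$, with probability at least $1 - e^{-t}$ for each fixed pair $(i,j)$. Writing $d_{ij} = \|\bm x_i^{(1)} - \bm x_j^{(2)}\|$ and $\tilde d_{ij}$ for its perturbed counterpart, the triangle inequality gives $|\tilde d_{ij} - d_{ij}| \le \|\bm e_{ij}\|$, so that $|\tilde d_{ij}^2 - d_{ij}^2| = |\tilde d_{ij} - d_{ij}|\,(\tilde d_{ij} + d_{ij}) \le \|\bm e_{ij}\|\,(2 d_{ij} + \|\bm e_{ij}\|)$.

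Next I would pass from squared distances to kernel values using that $g(s) = \exp(-s/(2r^2))$ is $\tfrac{1}{2r^2}$-Lipschitz on $s \ge 0$ (its derivative is bounded in magnitude by $1/(2r^2)$). Hence $|\mathcal{K}(\tilde{\bm x}_i^{(1)}, \tilde{\bm x}_j^{(2)}) - \mathcal{K}(\bm x_i^{(1)}, \bm x_j^{(2)})| \le \tfrac{1}{2r^2}|\tilde d_{ij}^2 - d_{ij}^2| \le \tfrac{1}{2r^2}(2\sqrt{2}\,\sigma\xi_m d_{ij} + 2\sigma^2\xi_m^2)$. Substituting the uniform upper bound $d_{ij} \le \|\mathcal{D}(\bm X_1,\bm X_2)\|_\infty$ and completing the square, the right-hand side equals exactly $\tfrac{1}{r^2}\big[(\sigma\xi_m + \|\mathcal{D}(\bm X_1,\bm X_2)\|_\infty/\sqrt{2})^2 - \|\mathcal{D}(\bm X_1,\bm X_2)\|_\infty^2/2\big]$, which is the claimed entrywise bound.

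Finally, I would take a union bound over all $n_1 n_2$ pairs $(i,j)$: the per-entry concentration event fails with probability at most $e^{-t}$, so all entries simultaneously satisfy the bound with probability at least $1 - n_1 n_2 e^{-t}$. Since the matrix $\infty$-norm is the maximum absolute entry, this yields the stated inequality. The main obstacle is the bookkeeping of the variance doubling of the difference noise (which produces the $\sqrt{2}$ factors) together with the correct chi-squared tail constants, and verifying that the bound on $|\tilde d_{ij}^2 - d_{ij}^2|$ collapses cleanly into the completed-square form; the union-bound step and the Lipschitz estimate are routine once these are in place.
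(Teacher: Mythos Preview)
Your proposal is correct and follows essentially the same route as the paper: reduce to a single entry, use the $\tfrac{1}{2r^2}$-Lipschitz property of $s\mapsto e^{-s/(2r^2)}$, control $\|\bm e_{ij}\|$ via the Laurent--Massart $\chi^2_m$ tail (noting the variance doubling to $2\sigma^2$), bound $d_{ij}$ by $\|\mathcal D(\bm X_1,\bm X_2)\|_\infty$, complete the square, and finish with a union bound over the $n_1n_2$ pairs. The only cosmetic difference is that the paper expands $\|\,(\bm x_i^{(1)}-\bm x_j^{(2)})+\bm e_{ij}\,\|^2$ directly and applies Cauchy--Schwarz to the cross term, whereas you factor $\tilde d_{ij}^2-d_{ij}^2=(\tilde d_{ij}-d_{ij})(\tilde d_{ij}+d_{ij})$ and use the triangle inequality; both manipulations produce the identical bound $\|\bm e_{ij}\|^2+2d_{ij}\|\bm e_{ij}\|$.
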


\begin{theorem}[Perturbation analysis on kernel MRD]\label{thm_perturbation_analysis_2}
Given $\bm X_1\in\mathbb R^{m\times n_1}, \bm X_2\in\mathbb R^{m\times n_2}$, let $\Tilde{\bm X}_1 = \bm X_1 + \bm \Delta_1$ and $\Tilde{\bm X}_2 = \bm X_2 + \bm \Delta_2$ be respectively the perturbed data of $\bm X_1$ and $\bm X_2$ where all entries of $\bm \Delta_1 = (e_{i,j}^{(1)})_{i\in[m],j\in[n_1]}$ and $\bm \Delta_2 = (e_{i,j}^{(2)})_{i\in[m],j\in[n_2]}$ are sampled from $\mathcal{N}(0,\sigma^2)$. Then with probability at least $1 - (n_1 + n_2)^2e^{-t}$, the perturbed kernel MRD between $\Tilde{\bm X}_1$ and $\Tilde{\bm X}_2$ is bounded as
\begin{equation}
\begin{aligned}
\left|\text{KMRD}(\bm X_1 + \bm\Delta_1,\bm X_2 + \bm\Delta_2) - \text{KMRD}(\bm X_1,\bm X_2)\right| \le \psi_\varepsilon(n_1, n_2)
\end{aligned}
\end{equation}
where $\psi_\varepsilon(n_1, n_2) = 2\sqrt{(w_1 + w_2)\varepsilon n_1 n_2}$, $\varepsilon = \max\{\varepsilon_{11},\varepsilon_{12},\varepsilon_{21},\varepsilon_{22}\}$ for $\varepsilon_{ij} = \frac{1}{r^2}[\big(\sigma\xi_m + \frac{\Vert\mathcal{D}(\bm X_i,\bm X_j)\Vert_\infty}{\sqrt{2}}\big)^2 - \frac{\Vert\mathcal{D}(\bm X_i,\bm X_j)\Vert_\infty^2}{2}]$, and $\xi_m = \sqrt{m + 2\sqrt{mt} + 2t}$.
\end{theorem}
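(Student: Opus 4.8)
The plan is to mirror the strategy behind the linear result (Theorem \ref{thm_perturbation_analysis_1}) but to route all data dependence through the Gram matrices, so that the kernel perturbation bound of Lemma \ref{lem_perturbation_error_on_kernel_matrix} can be applied directly. First I would record two elementary reductions that pass from a difference of minima to a single uniform bound on the objectives. Since both $\text{KMRD}(\tilde{\bm X}_1, \tilde{\bm X}_2)$ and $\text{KMRD}(\bm X_1,\bm X_2)$ minimize the \emph{same} functional form over the \emph{same} feasible set $\{\bm S_{12},\bm S_{21}\}\subseteq\mathcal S_2^{\le 1}$, I use the general inequality $|\min_{\bm S} f(\bm S) - \min_{\bm S} g(\bm S)| \le \sup_{\bm S}|f(\bm S)-g(\bm S)|$ together with $|\sqrt a-\sqrt b|\le\sqrt{|a-b|}$ for $a,b\ge 0$. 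Writing $G(\bm S)$ for the quantity under the square root (the squared objective) and $\tilde G(\bm S)$ for its perturbed counterpart, these give
\begin{equation*}
\left|\text{KMRD}(\tilde{\bm X}_1,\tilde{\bm X}_2)-\text{KMRD}(\bm X_1,\bm X_2)\right| \le \sqrt{\sup_{\bm S}\left|\tilde G(\bm S)-G(\bm S)\right|}.
\end{equation*}

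Next I would expand $G$ into traces of Gram matrices exactly as in the definition of KMRD and subtract. Setting $\mathcal E_{ij} = \mathcal K(\tilde{\bm X}_i,\tilde{\bm X}_j)-\mathcal K(\bm X_i,\bm X_j)$, the $w_1$-block of $\tilde G - G$ equals $\textup{Tr}(\mathcal E_{11}) - 2\textup{Tr}(\bm S_{12}^\top\mathcal E_{21}) + \textup{Tr}(\bm S_{12}^\top\mathcal E_{22}\bm S_{12})$, and symmetrically for the $w_2$-block (the two blocks decouple since $\bm S_{12}$ and $\bm S_{21}$ do not interact). Lemma \ref{lem_perturbation_error_on_kernel_matrix} supplies $\|\mathcal E_{ij}\|_\infty\le\varepsilon_{ij}\le\varepsilon$ on a high-probability event, so it remains to bound each trace term uniformly over the spectral-norm ball. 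Here I would exploit that $\|\bm S_{12}\|_2\le 1$ forces every entry $|(\bm S_{12})_{k\ell}|\le 1$ and $\|\bm S_{12}\|_F^2\le\min\{n_1,n_2\}$: the diagonal term is at most $n_1\varepsilon$, the cross term at most $2n_1n_2\varepsilon$ via the entrywise estimate $|\langle\bm S_{12},\mathcal E_{21}\rangle|\le\sum_{k\ell}|(\bm S_{12})_{k\ell}|\,|(\mathcal E_{21})_{k\ell}|$, and the quadratic term at most $\|\mathcal E_{22}\|_2\|\bm S_{12}\|_F^2\le (n_2\varepsilon)\min\{n_1,n_2\}\le n_1n_2\varepsilon$. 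Summing, each block is at most $4n_1n_2\varepsilon$, whence $\sup_{\bm S}|\tilde G-G|\le 4(w_1+w_2)n_1n_2\varepsilon$; substituting into the display yields exactly $\psi_\varepsilon(n_1,n_2)=2\sqrt{(w_1+w_2)\varepsilon n_1n_2}$.

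Finally I would assemble the confidence statement by a union bound. The four Gram matrices $\mathcal K(\bm X_i,\bm X_j)$ have sizes $n_i\times n_j$, so Lemma \ref{lem_perturbation_error_on_kernel_matrix} controls each $\|\mathcal E_{ij}\|_\infty$ on an event of probability at least $1-n_in_je^{-t}$; intersecting the four events bounds the total failure probability by $(n_1^2+2n_1n_2+n_2^2)e^{-t}=(n_1+n_2)^2e^{-t}$, the claimed level. I expect the main obstacle to be the quadratic trace term $\textup{Tr}(\bm S_{12}^\top\mathcal E_{22}\bm S_{12})$: unlike the linear case, the feasible set constrains the \emph{spectral} norm of $\bm S$ while the perturbation bound is entrywise ($\ell_\infty$), so the delicate point is selecting the right mix of $\|\cdot\|_2$, $\|\cdot\|_F$, and $\|\cdot\|_\infty$ estimates so that the contribution scales like $n_1n_2$ rather than the naive $n_1n_2\min\{n_1,n_2\}$. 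A secondary point is verifying that the single event on which all four kernel-perturbation bounds hold is exactly where the deterministic inequality is invoked, so the two can be combined without degrading the stated probability.
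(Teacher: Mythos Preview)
Your argument is correct and lands on exactly the same bound $\psi_\varepsilon(n_1,n_2)$ with the same probability $1-(n_1+n_2)^2e^{-t}$, but the route differs from the paper's in two respects.

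First, the reduction from a difference of minima to a uniform estimate: you use $|\min f-\min g|\le\sup|f-g|$ followed by $|\sqrt a-\sqrt b|\le\sqrt{|a-b|}$, which handles both the upper and lower bounds simultaneously. The paper instead evaluates the perturbed objective at the minimizer of the unperturbed problem, applies $\sqrt{a+b}\le\sqrt a+\sqrt b$ to split off an additive error, and then invokes symmetry (swapping the roles of $\bm X_i$ and $\tilde{\bm X}_i$) to obtain the other direction. Your reduction is shorter and avoids the separate symmetry step.

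Second, the control of the cross and quadratic trace terms: you bound $\langle\bm S_{12},\mathcal E_{21}\rangle$ entrywise using $|(\bm S_{12})_{k\ell}|\le\|\bm S_{12}\|_2\le1$, and bound $\textup{Tr}(\bm S_{12}^\top\mathcal E_{22}\bm S_{12})$ via $\|\mathcal E_{22}\|_2\|\bm S_{12}\|_F^2\le(n_2\varepsilon)\min\{n_1,n_2\}$. The paper instead decomposes $\bm S_{12}=\bm S_{12}^+-\bm S_{12}^-$ into its positive and negative parts and uses order-preserving multiplication lemmas (Lemmas \ref{lem_order_property_1}--\ref{lem_order_property_2}) to push the entrywise bound $|\mathcal E_{ij}|\le\varepsilon\mathbb I\mathbb I^\top$ through the products, arriving at an error expression $\varepsilon\||\bm S_{12}|^\top\mathbb I_{n_2}+\mathbb I_{n_1}\|_2^2$ which it then bounds by $4n_1n_2\varepsilon$ separately. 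Your direct norm estimates bypass the signed decomposition and the auxiliary lemmas entirely; the paper's route is more elaborate but makes the perfect-square structure of the error visible. Both paths yield the same per-block budget of $4n_1n_2\varepsilon$.
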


\section{Applications of MRD}
\subsection{Distribution transformation}
Given a dataset $\bm X\in\mathcal{X}$ with the empirical data distribution $P_{\mathcal{X}}$, it is natural to learn an approximation distribution that closely matches the data distribution, enabling efficient sampling from a standard Gaussian distribution $\mathcal{N}(\bm 0,\bm I)$. Substanial progress has been made in probabilistic generative models, such as Autoregressive models \citep{van2016wavenet}, RealNVP \citep{dinh2016density}, and Generative Adversarial Networks \citep{gulrajani2017improved,arbel2018gradient}. Here, we simply focus on a straightforward task where a neural network $f_\theta$ is used to learn an approximation distribution that aligns closely with the target distribution $P_{\mathcal{X}}$, guided by a probability distance metric as the loss function. The details are presented in Algorithm \ref{alg: distribution transformation}.
\begin{algorithm}[!ht]
\caption{Distribution transformation using MRD}
\label{alg: distribution transformation}
\begin{algorithmic}[1]
\Require
$\{\bm{x}_i\}_{i=1}^N$, $\textup{MRD}(\cdot,\cdot)$
\State Initialization: $\theta$, initial generator's parameters; $\alpha$, initial learning rate.
\While{$\theta$ has not converged}
\State Sample a mini-batch $\{\bm x_j\}_{j\in \mathcal{B}}\subseteq\{\bm{x}_i\}_{i=1}^N$
\State Sample a mini-batch $\{\bm z_j\}_{j\in [|\mathcal{B}|]}$ from $\mathcal{N}(\bm 0,\bm I)$
\State $\{\Tilde{\bm x}_j\}_{j\in[|\mathcal{B}|]} = f(\{\bm z_j\}_{j\in [|\mathcal{B}|]};\theta)$
\State $\ell = \textup{MRD}(\{\bm x_j\}_{j\in \mathcal{B}}, \{\Tilde{\bm x}_j\}_{j\in[|\mathcal{B}|]})$
\State $\theta \leftarrow \theta - \alpha\cdot\frac{\partial \ell}{\partial \theta}$
\EndWhile
\Ensure $\theta$.
\end{algorithmic}
\end{algorithm}

\subsection{Discrete distribution spectral clustering using MRD} 
Given a set of distributions $\{\mu_i\}_{i\in[N]}$ in $\mathbb{R}^m$ that can be organized into $K$ groups $\{C_i\}_{i\in[K]}$, suppose $\bm{X}_i=[\bm{x}_1,\bm{x}_2,\ldots,\bm{x}_{n_i}] \in \mathbb{R}^{m\times n_i}$ are independently drawn from $\{\mu_i\}_{i\in[N]}$, the discrete distribution spectral clustering aims at constructing a pairwise distance matrix based on a distance metric $d(\cdot,\cdot)$ and perform vanilla spectral clustering algorithm \citep{shi2000normalized,ng2002spectral,von2007tutorial} on it to partition $\{\bm{X}_i\}_{i\in[N]}$ into $K$ groups corresponding to $\{C_i\}_{i\in[K]}$, respectively \citep{wang2024spectral}. Here, we present the discrete distribution spectral clustering (DDSC) based on MRD in Algorithm \ref{alg: ddsc_mrd}.

\begin{algorithm}[!ht]
\caption{Discrete distribution spectral clustering using MRD}
\label{alg: ddsc_mrd}
\begin{algorithmic}[1]
\Require
$\{\bm{X}_1,\bm{X}_2,\ldots,\bm{X}_N\}$, $\textup{MRD}(\cdot,\cdot)$, $K$, $\tau$
\State Initialization: $\bm{D}=\bm{0}_{N \times N}$
\For{$i=1,2,\ldots,N$}
	\For{$j=i+1,i+2,\ldots,N$}
          \State $D_{ij}=\text{MRD}(\bm{X}_i,\bm{X}_j)$
	\EndFor
\EndFor
\State $\bm{D}=(\bm{D}+\bm{D}^\top)/2$
\State $\bm{A}=\left[\exp(-\gamma D_{ij}^2)\right]_{N\times N}-\bm{I}_N$
\State $\bm{S}=\text{diag}(\sum_{i}A_{i1},\sum_{i}A_{i2},\ldots,\sum_{i}A_{iN})$
\State $\bm{L}=\bm{I}_N-\bm{S}^{-1/2}\bm{A}\bm{S}^{-1/2}$
\State Eigenvalue decomposition: $\bm{L}=\bm{V}\bm{\Lambda}\bm{V}^\top$, where $\lambda_1\leq\lambda_2\cdots\leq \lambda_N$
\State $\bm{V}_K=[\bm{v}_1,\bm{v}_2,\ldots,\bm{v}_K]$
\State Normalize the rows of $\bm{V}_K$ to have unit $\ell_2$ norm.
\State Perform $K$-means on $\bm{V}_K$.
\Ensure $K$ clusters: $C_1,\ldots,C_K$.
\end{algorithmic}
\end{algorithm}

\subsection{SMRDGAN}\label{subsec:mrd gan}
Consider a random variable $X \in \mathcal{X}$ with an empirical data distribution $P_{\mathcal{X}}$ to be learned, our discriminator $D$ processes both real and generated samples, similar to the approach used like SMMDGAN \citep{arbel2018gradient}. It maps these samples into a latent space and maximizes their Kernel MRD. Conversely, the generator $G$ aims to minimize the Kernel MRD between the real and generated samples. Consequently, the loss function of our proposed SMRDGAN is
\begin{equation}
\begin{aligned}
\text{SMRD}: \frac{\textup{KMRD}\left(D(\bm X), D(G(\bm Z))\right)}{1 + 10\mathbb E_{\hat{\mathbb P}}\Vert\nabla D(\bm X)\Vert_F^2}
\end{aligned}
\end{equation}
where $\bm X\sim P_{\mathcal{X}}$, $\bm Z\sim P_{\mathcal{Z}}$. In particular, the denominator incorporates gradient and smooth penalties on $\phi\circ D$, as proposed by \cite{arbel2018gradient}, to enforce the necessary constraint for kernel-based IPM loss.

\subsection{Domain adaptation}
In unsupervised domain adaptation, we are given a source domain $\mathcal{D}_s = \{(\bm x_i^s, y_i^s)\}_{i=1}^{n_s}$ with $n_s$ labeled examples and a target domain $\mathcal{D}_t = \{\bm x_j^t\}_{j=1}^{n_t}$ with $n_t$ unlabeled examples. The source domain and target domain are characterized by probability measures $p_s(\bm x)$ and $q_t(\bm x)$ which typically differ due to the domain shift. Let $f_\theta: \mathcal{X}\rightarrow \mathcal{Z}$ denote the feature extractor (\textit{e.g.}, ResNet-50), which maps input samples to a feature space $\mathcal{Z}$. The resulting feature distributions are denoted as $p_s^f = f_\theta\# p_s$ and $p_t^f = f_\theta\# p_t$ where $f_\theta\#$ denote the pushforward measure induced by the feature extractor. The objective of feature alignment is to minimize the divergence between the source and target domains, \textit{i.e.}, $\min_f d(p_s^f,p_t^f)$ where $d(\cdot,\cdot)$ is a suitable divergence measure (\textit{e.g.}, MRD). For image classification tasks on the unlabeled target domain, this objective is typically combined with a cross-entropy loss (classification loss) on the labeled source domain. It can be formalized by
\begin{equation}
\begin{aligned}
\min_\theta \frac{1}{n_s}\sum_{i=1}^{n_s}J(f_\theta(\bm x_i^s), y_i^s) + \lambda\textup{MRD}(\{\bm x_j^s\}_{j=1}^{n_s}, \{\bm x_j^t\}_{j=1}^{n_t})
\end{aligned}
\end{equation}
where $J(\cdot, \cdot)$ is the classification loss function, and $\lambda\in\mathbb R_{++}$ is the trade-off parameter that balance the classification loss and the domain adaptation loss.

\section{Numerical results}
\label{sec: expr}

\subsection{Distribution transformation}
We compared the discriminative utility of our MRD on synthetic distributions against other distances including Wasserstein distance, Sinkhorn distance, and MMD. Specifically, we used these distances as loss functions in a multilayer perceptron (MLP) to transform a 2D Gaussian distribution into various synthetic target distributions. Details of experimental setup are provided in Appendix \ref{subsec: exp setup}.
\begin{figure}[!ht]
    \centering
    \subfigure[Target]{
    \includegraphics[width=0.16\linewidth]{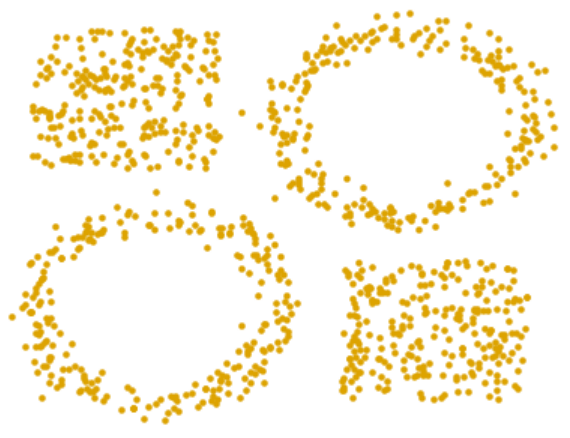}
    \label{fig:toy_gt_cvx}
    }
    \subfigure[Wasserstein]{
    \includegraphics[width=0.16\linewidth]{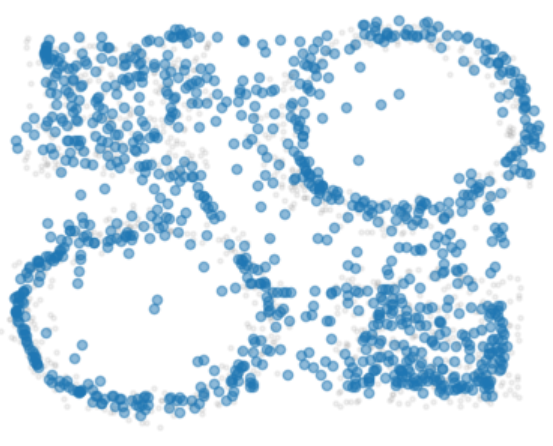}
    \label{fig:toy_emd2_cvx}
    }
    \subfigure[Sinkhorn]{
    \includegraphics[width=0.16\linewidth]{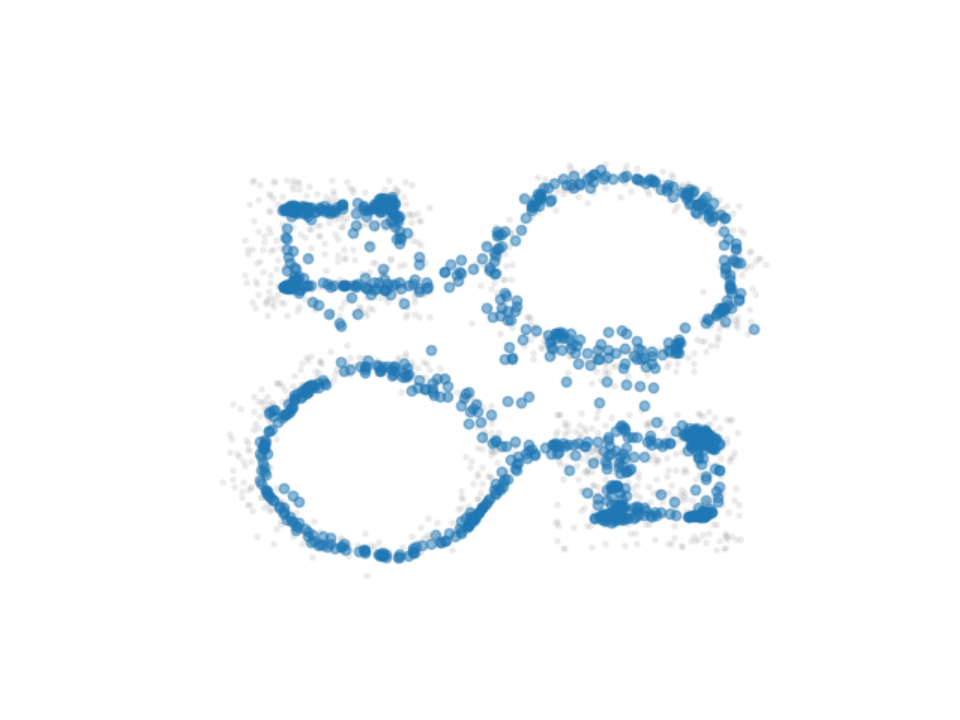}
    \label{fig:toy_sinkhorn_cvx}
    }
    \subfigure[MMD]{
    \includegraphics[width=0.16\linewidth]{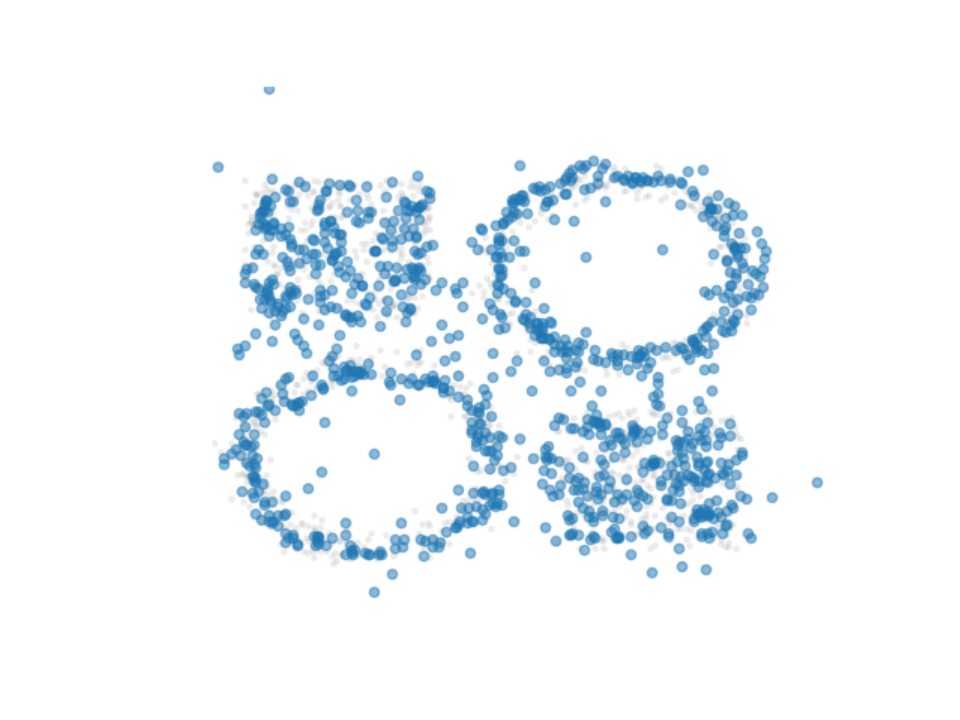}
    \label{fig:toy_mmd_cvx}
    }
    \subfigure[MRD]{
    \includegraphics[width=0.16\linewidth]{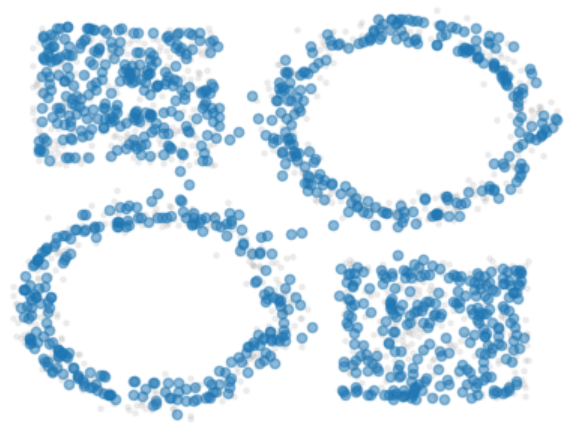}
    \label{fig:toy_mrd_cvx}
    }\par
    \subfigure[Target]{
    \includegraphics[width=0.16\linewidth]{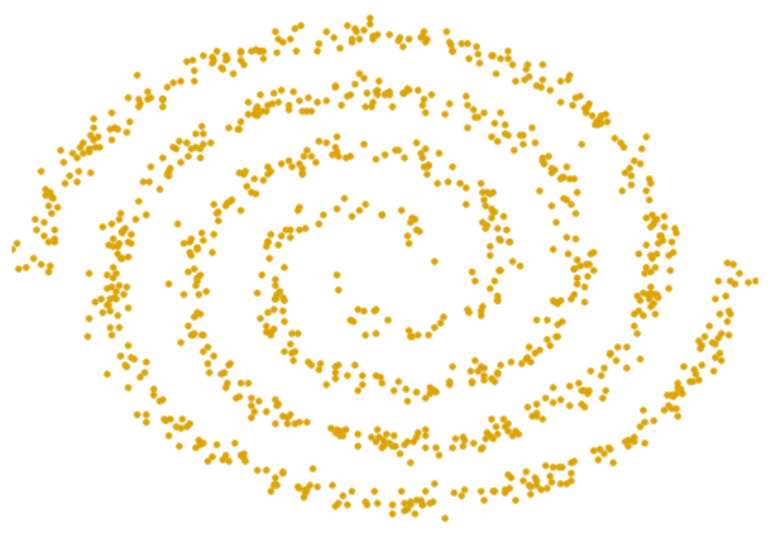}
    \label{fig:toy_gt_spiral}
    }
    \subfigure[Wasserstein]{
    \includegraphics[width=0.16\linewidth]{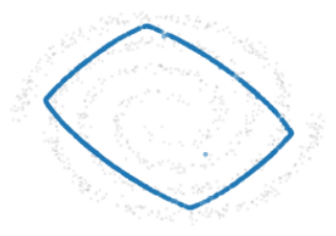}
    \label{fig:toy_emd2_spiral}
    }
    \subfigure[Sinkhorn]{
    \includegraphics[width=0.16\linewidth]{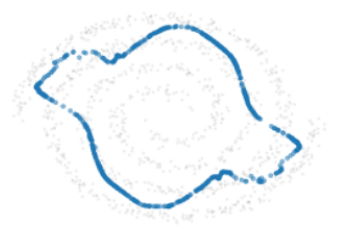}
    \label{fig:toy_sinkhorn_spiral}
    }
    \subfigure[MMD]{
    \includegraphics[width=0.16\linewidth]{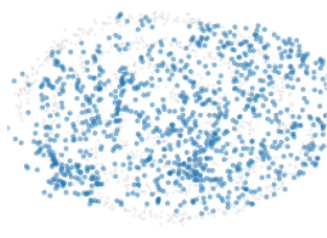}
    \label{fig:toy_mmd_spiral}
    }
    \subfigure[MRD]{
    \includegraphics[width=0.16\linewidth]{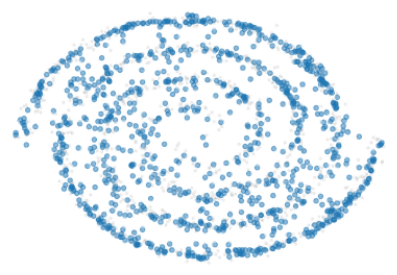}
    \label{fig:toy_mrd_spiral}
    }
    \caption{Distribution transformation on a toy example}
    \label{fig:toy_example}
\end{figure}

As shown in Figure \ref{fig:toy_example}, our proposed MRD outperforms all other distances on the first target distribution that consists of two squares and two circles. However, in the spiral dataset, both the optimal transport distance and the Sinkhorn distance fail to serve as effective loss functions for transforming Gaussian noise into the target distribution. Similarly, MMD did not produce satisfactory results. In contrast, our MRD performed well on this non-convex data distribution, which demonstrated its potential for better capturing and leveraging the manifold properties of complex data.

\subsection{Text clustering}
We checked the utility of our proposed MRD for text clustering on three widely-used text datasets including BBC News, the domain-specific BBC Sports, and a subset of Reuters. Details of them can be found in Appendices \ref{subsec:data_description} and \ref{subsec: exp setup}. Additionally, we compared our method against existing approaches. Specifically, \cite{ye2017fast} and \cite{huang2021projection} introduced fast D2 clustering and PD2 clustering, respectively, while \cite{wang2024spectral} explored the performance of various distances (\textit{i.e.}, Wasserstein distance, Sinkhorn distance, MMD) in DDSC. The latter also proposed using linear optimal transport to construct affinity matrices on large datasets.
\begin{table}[!ht]
\centering
\caption{Clustering results on real text datasets}
\label{tab:clustering comparison}
\begin{tabular}{l|cccccccc}
\toprule
 &
  \multicolumn{2}{c}{BBC-Sports abstr.} &
  \multicolumn{2}{c}{BBCNews abstr.} &
  \multicolumn{2}{c}{Reuters Subsets} &
  \multicolumn{2}{c}{Average Score} \\
Methods             & AMI    & ARI    & AMI    & ARI    & AMI    & ARI    & AMI    & ARI    \\ \midrule
K-means             & 0.3408 & 0.3213 & 0.5328 & 0.4950 & 0.4783 & 0.4287 & 0.4506 & 0.4150 \\
K-means$^*$         & 0.4276 & -      & 0.3877 & -      & 0.4627 & -      & 0.4260 & -      \\
SC                  & 0.3646 & 0.2749 & 0.4891 & 0.4659 & 0.3955 & 0.3265 & 0.4164 & 0.3558 \\
D2                  & 0.6234 & 0.4665 & 0.6111 & 0.5572 & 0.4244 & 0.3966 & 0.5530 & 0.4734 \\
D2$^*$              & 0.6510 & -      & 0.6095 & -      & 0.4200 & -      & 0.5602 & -      \\
PD2                 & 0.6300 & 0.4680 & 0.6822 & 0.6736 & 0.4958 & 0.3909 & 0.6027 & 0.5108 \\
PD2$^*$             & 0.6892 & -      & 0.6557 & -      & 0.4713 & -      & 0.6054 & -      \\ \midrule
DDSC$_{\text{MMD}}$ & 0.6724 & 0.5399 & 0.7108 & 0.6479 & 0.5803 & 0.5105 & 0.6545 & 0.5661 \\
DDSC$_{\text{Sinkhorn}}$ &
  \textbf{0.7855} &
  0.7514 &
  0.7579 &
  0.7642 &
  0.6096 &
  0.5457 &
  0.7177 &
  0.6871 \\
DDSC$_{\text{W}}$   & 0.7755 & 0.7424 & 0.7549 & 0.7585 & 0.6096 & 0.5457 & 0.7133 & 0.6802 \\
DDSC$_{\text{LOT}}$ & 0.7150 & 0.6712 & 0.7265 & 0.7499 & 0.5290 & 0.4325 & 0.6580 & 0.6129 \\ \midrule
DDSC$_{\text{MRD}}$ &
  0.7668 &
  \textbf{0.8078} &
  \textbf{0.7777} &
  \textbf{0.7795} &
  \textbf{0.6385} &
  \textbf{0.6324} &
  \textbf{0.7277} &
  \textbf{0.7399} \\ \bottomrule
\end{tabular}%
\end{table}

From Table \ref{tab:clustering comparison}, one can easily observe that DDSC-class methods consistently outperform other approaches, such as like K-means, spectral clustering, PD2. Notably, our proposed MRD further enhances the performance of DDSC with significant improvements observed from the Adjusted Rand Index (ARI).

\subsection{Image generation}
We evaluated the effectiveness of SMRDGAN on the image generation task and compared it with baselines, including SMMDGAN \citep{arbel2018gradient} and WGAN-GP \citep{gulrajani2017improved}. More details about the experimental setup can be found in Appendix \ref{subsec: exp setup}. Figure \ref{fig: image gen} showcases sample images generated by three GAN models. Additionally, we report the inception score and FID metrics in Table \ref{tab: image generation} for all three models on the MNIST, Fashion-MNIST, CIFAR10, and CelebA datasets. More generated samples of MNIST and Fashion-MNIST can be found in Appendix \ref{subsec: more generated samples}.

\begin{figure}[!ht]
    \centering
    \subfigure[WGAN-GP]{
    \includegraphics[width=0.3\linewidth]{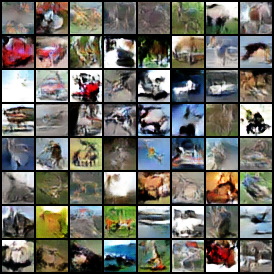}
    \label{fig: wgan-gp cifar10}
    }
    \subfigure[SMMDGAN]{\includegraphics[width=0.3\linewidth]{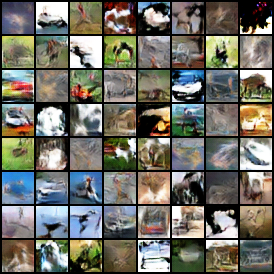}
    \label{fig: mmd cifar10}
    }
    \subfigure[SMRDGAN]{
    \includegraphics[width=0.3\linewidth]{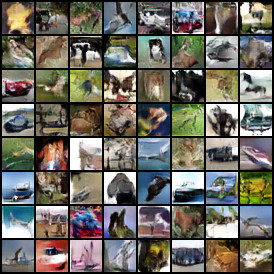}
    \label{fig: mrd cifar10}
    }\par
    \subfigure[WGAN-GP]{
    \includegraphics[width=0.3\linewidth]{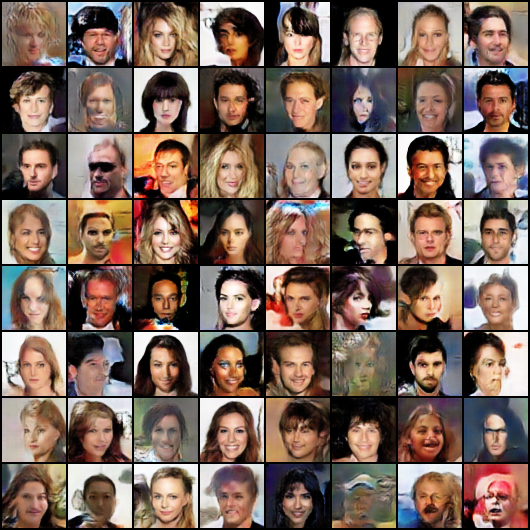}
    \label{fig: wgan-gp celeba}
    }
    \subfigure[SMMDGAN]{
    \includegraphics[width=0.3\linewidth]{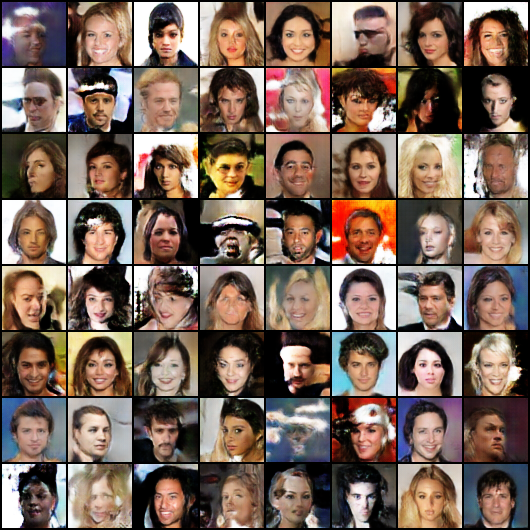}
    \label{fig: mmd celeba}
    }
    \subfigure[SMRDGAN]{
    \includegraphics[width=0.3\linewidth]{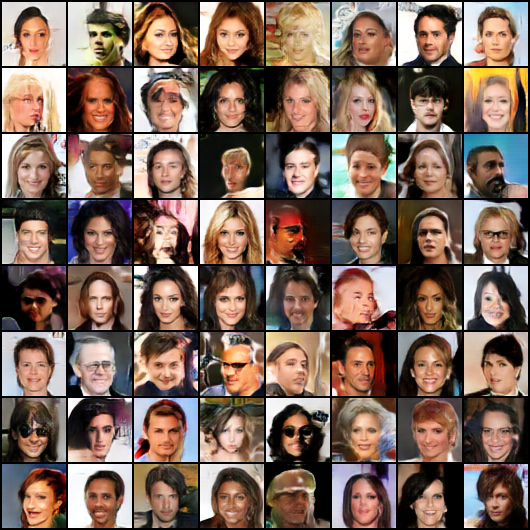}
    \label{fig: mrd celeba}
    }
    \caption{Samples from WGAN-GP, SMMDGAN, and our SMRDGAN. Top: $32\times 32$ CIFAR-10; bottom: $64\times 64$ CelebA.}
    \vspace{-10pt}
    \label{fig: image gen}
\end{figure}

\begin{table}[!ht]
\centering
\caption{Mean (Standard deviation) of score estimators, based on 10000 samples from each model}
\label{tab: image generation}
\resizebox{\textwidth}{!}{%
\begin{tabular}{l|llllllll}
\toprule
        & \multicolumn{2}{l}{MNIST}        & \multicolumn{2}{l}{Fashion-MNIST} & \multicolumn{2}{l}{CIFAR10}      & \multicolumn{2}{l}{CelebA}       \\
Method  & IS $\uparrow$ & FID $\downarrow$ & IS $\uparrow$  & FID $\downarrow$ & IS $\uparrow$ & FID $\downarrow$ & IS $\uparrow$ & FID $\downarrow$ \\ \midrule
WGAN-GP &
  $\mathbf{2.1\pm 0.0}$ &
  $9.9\pm 0.6$ &
  $\mathbf{4.0\pm 0.1}$ &
  $24.3\pm 1.0$ &
  $5.3\pm 0.1$ &
  $59.2\pm 2.1$ &
  $\mathbf{2.5\pm 0.0}$ &
  $38.6\pm 2.4$ \\
SMMDGAN & $2.1\pm 0.0$  & $7.7\pm 0.4$     & $3.9\pm 0.1$   & $22.7\pm 0.7$    & $5.4\pm 0.2$  & $57.6\pm 1.7$    & $2.5\pm 0.0$  & $37.6\pm 3.0$    \\
SMRDGAN &
  $2.0\pm 0.0$ &
  $\mathbf{7.6\pm 1.4}$ &
  $4.0\pm 0.1$ &
  $\mathbf{16.9\pm 0.8}$ &
  $\mathbf{6.2\pm 0.1}$ &
  $\mathbf{46.5\pm 1.4}$ &
  $2.5\pm 0.1$ &
  $\mathbf{25.3\pm 1.7}$ \\ \bottomrule
\end{tabular}%
}
\end{table}

From Table \ref{tab: image generation}, it is evident that our SMRDGAN consistently outperforms both SMMDGAN and WGAN-GP in most cases across three benchmark datasets. Moreover, the proposed SMRDGAN achieved comparable performance to the existing models on MNIST, Fashion-MNIST, and CelebA in terms of the inception score. These results highlight that our proposed MRD serves as a robust alternative to optimal transport distances and MMD for IPM-based GAN models. As our focus is on evaluating the effectiveness of MRD, MMD, and OT distances, we did not include comparisons between SMRDGAN and diffusion models. More work may be done in the future.

\subsection{Domain adaptation}
We evaluated the utility of MRD as a matching measure for feature alignment in domain adaptation on Office-31 classification. The details of the experimental setup can be found in Appendix \ref{sec: more details of exp}. We present the classification result in Table \ref{tab:da on office-31}. It can be found that all metrics take themself into effect in such a standard task. Our MRD outperforms all compared metrics, especially significantly for $A \rightarrow W$ and $A \rightarrow D$. It is well known that the other four transfer tasks including $W\rightarrow A, D\rightarrow A, D\rightarrow W, W\rightarrow D$ are more challenging, but MRD is comparable to those metrics in these tasks. In total, our MRD is more powerful in such transfer learning tasks.

\begin{table}[!ht]
\centering
\caption{Accuracy on Office-31 of unsupervised domain adaptation with different metrics}
\label{tab:da on office-31}
\resizebox{\textwidth}{!}{%
\begin{tabular}{c|ccccc}
\toprule
                 & Source only        & Wasserstein        & Sinkhorn           & MMD                & MRD                \\ \midrule
$A\rightarrow W$ & $0.7996\pm 0.0048$ & $0.9009\pm 0.0055$ & $0.9152\pm 0.0045$ & $0.8765\pm 0.0057$ & $\bm{0.9187}\pm 0.0028$ \\
$W\rightarrow A$ & $0.6753\pm 0.0022$ & $0.7235\pm 0.0051$ & $0.7184\pm 0.0034$ & $0.7042\pm 0.0042$ & $\bm{0.7395}\pm 0.0039$ \\
$A\rightarrow D$ & $0.8301\pm 0.0082$ & $0.8622\pm 0.0058$ & $0.8618\pm 0.0074$ & $0.8588\pm 0.0038$ & $\bm{0.9102}\pm 0.0039$ \\
$D\rightarrow A$ & $0.6743\pm 0.0037$ & $0.7108\pm 0.0058$ & $0.7029\pm 0.0086$ & $0.6960\pm 0.0035$ & $\bm{0.7498}\pm 0.0062$ \\
$D\rightarrow W$ & $0.9823\pm 0.0017$ & $0.9493\pm 0.0052$ & $0.9541\pm 0.0048$ & $0.9800\pm 0.0023$ & $\bm{0.9870}\pm 0.0016$ \\
$W\rightarrow D$ & $0.9986\pm 0.0017$ & $0.9954\pm 0.0014$ & $0.9960\pm 0.0023$ & $0.9974\pm 0.0010$ & $\bm{1.0000}\pm 0.0000$ \\ \midrule
Average          & $0.8267\pm 0.0020$ & $0.8570\pm 0.0025$ & $0.8581\pm 0.0020$ & $0.8521\pm 0.0018$ & $\bm{0.8842}\pm 0.0015$ \\ \bottomrule
\end{tabular}%
}
\end{table}

\section{Conclusion}
\label{sec: conclusion}

This work proposed a novel distance measure called MRD between two distributions. As constrained optimization is costly, we proposed tightened MRD, simplified MRD, and a heuristic algorithm for MRD, which is more efficient than the CVX solver. We also present kernel versions of MRD such as simplified KMRD that is able to hand data with nonlinear structures. We studied the robustness of MRDs theoretically. The experiments of distribution transformation on synthetic data, discrete distribution clustering, generative modeling, and domain adaptation demonstrated the effectiveness of our method in comparison to the baselines. 


\subsubsection*{Acknowledgments}
This work was supported by the National Natural Science Foundation of China under Grant No.62376236. The authors declare that they have no known competing financial interests or personal relationships that could have appeared to influence the work reported in this paper.

\bibliographystyle{unsrtnat}
\bibliography{references}  

\begin{thebibliography}{71}
\providecommand{\natexlab}[1]{#1}
\providecommand{\url}[1]{\texttt{#1}}
\expandafter\ifx\csname urlstyle\endcsname\relax
  \providecommand{\doi}[1]{doi: #1}\else
  \providecommand{\doi}{doi: \begingroup \urlstyle{rm}\Url}\fi

\bibitem[Chizat et~al.(2020)Chizat, Roussillon, L{\'e}ger, Vialard, and
  Peyr{\'e}]{chizat2020faster}
Lenaic Chizat, Pierre Roussillon, Flavien L{\'e}ger, Fran{\c{c}}ois-Xavier
  Vialard, and Gabriel Peyr{\'e}.
\newblock Faster wasserstein distance estimation with the sinkhorn divergence.
\newblock \emph{Advances in Neural Information Processing Systems},
  33:\penalty0 2257--2269, 2020.

\bibitem[Tolstikhin et~al.(2016)Tolstikhin, Sriperumbudur, and
  Sch{\"o}lkopf]{tolstikhin2016minimax}
Ilya~O Tolstikhin, Bharath~K Sriperumbudur, and Bernhard Sch{\"o}lkopf.
\newblock Minimax estimation of maximum mean discrepancy with radial kernels.
\newblock \emph{Advances in Neural Information Processing Systems}, 29, 2016.

\bibitem[Arjovsky et~al.(2017)Arjovsky, Chintala, and
  Bottou]{arjovsky2017wasserstein}
Martin Arjovsky, Soumith Chintala, and L{\'e}on Bottou.
\newblock Wasserstein generative adversarial networks.
\newblock In \emph{International conference on machine learning}, pages
  214--223. PMLR, 2017.

\bibitem[Vayer and Gribonval(2023)]{vayer2023controlling}
Titouan Vayer and R{\'e}mi Gribonval.
\newblock Controlling wasserstein distances by kernel norms with application to
  compressive statistical learning.
\newblock \emph{Journal of Machine Learning Research}, 24\penalty0
  (149):\penalty0 1--51, 2023.

\bibitem[Yang and Jin(2006)]{yang2006distance}
Liu Yang and Rong Jin.
\newblock Distance metric learning: A comprehensive survey.
\newblock \emph{Michigan State Universiy}, 2\penalty0 (2):\penalty0 4, 2006.

\bibitem[Von~Luxburg(2007)]{von2007tutorial}
Ulrike Von~Luxburg.
\newblock A tutorial on spectral clustering.
\newblock \emph{Statistics and computing}, 17:\penalty0 395--416, 2007.

\bibitem[Van~der Maaten and Hinton(2008)]{van2008visualizing}
Laurens Van~der Maaten and Geoffrey Hinton.
\newblock Visualizing data using t-sne.
\newblock \emph{Journal of machine learning research}, 9\penalty0 (11), 2008.

\bibitem[Kulis et~al.(2013)]{kulis2013metric}
Brian Kulis et~al.
\newblock Metric learning: A survey.
\newblock \emph{Foundations and Trends{\textregistered} in Machine Learning},
  5\penalty0 (4):\penalty0 287--364, 2013.

\bibitem[Kaya and Bilge(2019)]{kaya2019deep}
Mahmut Kaya and Hasan~{\c{S}}akir Bilge.
\newblock Deep metric learning: A survey.
\newblock \emph{Symmetry}, 11\penalty0 (9):\penalty0 1066, 2019.

\bibitem[Kullback and Leibler(1951)]{kullback1951information}
Solomon Kullback and Richard~A Leibler.
\newblock On information and sufficiency.
\newblock \emph{The annals of mathematical statistics}, 22\penalty0
  (1):\penalty0 79--86, 1951.

\bibitem[Basu et~al.(1998)Basu, Harris, Hjort, and Jones]{basu1998robust}
Ayanendranath Basu, Ian~R Harris, Nils~L Hjort, and MC~Jones.
\newblock Robust and efficient estimation by minimising a density power
  divergence.
\newblock \emph{Biometrika}, 85\penalty0 (3):\penalty0 549--559, 1998.

\bibitem[Pinsker(1964)]{pinsker1964information}
Mark~S Pinsker.
\newblock Information and information stability of random variables and
  processes.
\newblock \emph{Holden-Day}, 1964.

\bibitem[Feydy et~al.(2019)Feydy, S{\'e}journ{\'e}, Vialard, Amari, Trouv{\'e},
  and Peyr{\'e}]{feydy2019interpolating}
Jean Feydy, Thibault S{\'e}journ{\'e}, Fran{\c{c}}ois-Xavier Vialard, Shun-ichi
  Amari, Alain Trouv{\'e}, and Gabriel Peyr{\'e}.
\newblock Interpolating between optimal transport and mmd using sinkhorn
  divergences.
\newblock In \emph{The 22nd International Conference on Artificial Intelligence
  and Statistics}, pages 2681--2690. PMLR, 2019.

\bibitem[Villani et~al.(2009)]{villani2009optimal}
C{\'e}dric Villani et~al.
\newblock \emph{Optimal transport: old and new}, volume 338.
\newblock Springer, 2009.

\bibitem[Cuturi(2013)]{cuturi2013sinkhorn}
Marco Cuturi.
\newblock Sinkhorn distances: Lightspeed computation of optimal transport.
\newblock \emph{Advances in neural information processing systems}, 26, 2013.

\bibitem[Gretton et~al.(2006)Gretton, Borgwardt, Rasch, Sch{\"o}lkopf, and
  Smola]{gretton2006kernel}
Arthur Gretton, Karsten Borgwardt, Malte Rasch, Bernhard Sch{\"o}lkopf, and
  Alex Smola.
\newblock A kernel method for the two-sample-problem.
\newblock \emph{Advances in neural information processing systems}, 19, 2006.

\bibitem[Gretton et~al.(2012)Gretton, Borgwardt, Rasch, Sch{\"o}lkopf, and
  Smola]{gretton2012kernel}
Arthur Gretton, Karsten~M Borgwardt, Malte~J Rasch, Bernhard Sch{\"o}lkopf, and
  Alexander Smola.
\newblock A kernel two-sample test.
\newblock \emph{The Journal of Machine Learning Research}, 13\penalty0
  (1):\penalty0 723--773, 2012.

\bibitem[Modeste and Dombry(2024)]{modeste2024characterization}
Thibault Modeste and Cl{\'e}ment Dombry.
\newblock Characterization of translation invariant mmd on rd and connections
  with wasserstein distances.
\newblock \emph{Journal of Machine Learning Research}, 25\penalty0
  (237):\penalty0 1--39, 2024.

\bibitem[Agrawal and Horel(2021)]{agrawal2021optimal}
Rohit Agrawal and Thibaut Horel.
\newblock Optimal bounds between f-divergences and integral probability
  metrics.
\newblock \emph{Journal of Machine Learning Research}, 22\penalty0
  (128):\penalty0 1--59, 2021.

\bibitem[Schrab et~al.(2023)Schrab, Kim, Albert, Laurent, Guedj, and
  Gretton]{schrab2023mmd}
Antonin Schrab, Ilmun Kim, M{\'e}lisande Albert, B{\'e}atrice Laurent, Benjamin
  Guedj, and Arthur Gretton.
\newblock Mmd aggregated two-sample test.
\newblock \emph{Journal of Machine Learning Research}, 24\penalty0
  (194):\penalty0 1--81, 2023.

\bibitem[Kantorovich(1960)]{kantorovich1960mathematical}
Leonid~V Kantorovich.
\newblock Mathematical methods of organizing and planning production.
\newblock \emph{Management science}, 6\penalty0 (4):\penalty0 366--422, 1960.

\bibitem[Rubner et~al.(2000)Rubner, Tomasi, and Guibas]{rubner2000earth}
Yossi Rubner, Carlo Tomasi, and Leonidas~J Guibas.
\newblock The earth mover's distance as a metric for image retrieval.
\newblock \emph{International journal of computer vision}, 40:\penalty0
  99--121, 2000.

\bibitem[Agueh and Carlier(2011)]{wassersteinbarycenter2011}
Martial Agueh and Guillaume Carlier.
\newblock Barycenters in the wasserstein space.
\newblock \emph{SIAM Journal on Mathematical Analysis}, 43\penalty0
  (2):\penalty0 904--924, 2011.

\bibitem[Gulrajani et~al.(2017)Gulrajani, Ahmed, Arjovsky, Dumoulin, and
  Courville]{gulrajani2017improved}
Ishaan Gulrajani, Faruk Ahmed, Martin Arjovsky, Vincent Dumoulin, and Aaron~C
  Courville.
\newblock Improved training of wasserstein gans.
\newblock \emph{Advances in neural information processing systems}, 30, 2017.

\bibitem[Haviv et~al.(2024)Haviv, Kunes, Dougherty, Burdziak, Nawy, Gilbert,
  and Pe’Er]{haviv2024wasserstein}
Doron Haviv, Russell~Zhang Kunes, Thomas Dougherty, Cassandra Burdziak, Tal
  Nawy, Anna Gilbert, and Dana Pe’Er.
\newblock Wasserstein wormhole: Scalable optimal transport distance with
  transformers.
\newblock \emph{ArXiv}, pages arXiv--2404, 2024.

\bibitem[Genevay et~al.(2019)Genevay, Chizat, Bach, Cuturi, and
  Peyr{\'e}]{genevay2019sample}
Aude Genevay, L{\'e}naic Chizat, Francis Bach, Marco Cuturi, and Gabriel
  Peyr{\'e}.
\newblock Sample complexity of sinkhorn divergences.
\newblock In \emph{The 22nd international conference on artificial intelligence
  and statistics}, pages 1574--1583. PMLR, 2019.

\bibitem[Altschuler et~al.(2017)Altschuler, Niles-Weed, and
  Rigollet]{altschuler2017near}
Jason Altschuler, Jonathan Niles-Weed, and Philippe Rigollet.
\newblock Near-linear time approximation algorithms for optimal transport via
  sinkhorn iteration.
\newblock \emph{Advances in neural information processing systems}, 30, 2017.

\bibitem[Genevay et~al.(2018)Genevay, Peyr{\'e}, and
  Cuturi]{genevay2018learning}
Aude Genevay, Gabriel Peyr{\'e}, and Marco Cuturi.
\newblock Learning generative models with sinkhorn divergences.
\newblock In \emph{International Conference on Artificial Intelligence and
  Statistics}, pages 1608--1617. PMLR, 2018.

\bibitem[Dziugaite et~al.(2015)Dziugaite, Roy, and
  Ghahramani]{dziugaite2015training}
Gintare~Karolina Dziugaite, Daniel~M Roy, and Zoubin Ghahramani.
\newblock Training generative neural networks via maximum mean discrepancy
  optimization.
\newblock \emph{arXiv preprint arXiv:1505.03906}, 2015.

\bibitem[Wang et~al.(2024)Wang, Qiao, and Fan]{wang2024spectral}
Zixiao Wang, Dong Qiao, and Jicong Fan.
\newblock Spectral clustering for discrete distributions.
\newblock \emph{arXiv preprint arXiv:2401.13913}, 2024.

\bibitem[Li et~al.(2017)Li, Chang, Cheng, Yang, and P{\'o}czos]{li2017mmd}
Chun-Liang Li, Wei-Cheng Chang, Yu~Cheng, Yiming Yang, and Barnab{\'a}s
  P{\'o}czos.
\newblock Mmd gan: Towards deeper understanding of moment matching network.
\newblock \emph{Advances in neural information processing systems}, 30, 2017.

\bibitem[Bi{\'n}kowski et~al.(2018)Bi{\'n}kowski, Sutherland, Arbel, and
  Gretton]{binkowski2018demystifying}
Miko{\l}aj Bi{\'n}kowski, Danica~J Sutherland, Michael Arbel, and Arthur
  Gretton.
\newblock Demystifying mmd gans.
\newblock \emph{arXiv preprint arXiv:1801.01401}, 2018.

\bibitem[Arbel et~al.(2018)Arbel, Sutherland, Bi{\'n}kowski, and
  Gretton]{arbel2018gradient}
Michael Arbel, Danica~J Sutherland, Miko{\l}aj Bi{\'n}kowski, and Arthur
  Gretton.
\newblock On gradient regularizers for mmd gans.
\newblock \emph{Advances in neural information processing systems}, 31, 2018.

\bibitem[Baktashmotlagh et~al.(2016)Baktashmotlagh, Har, Salzmann,
  et~al.]{baktashmotlagh2016distribution}
Mahsa Baktashmotlagh, Mehrtash Har, Mathieu Salzmann, et~al.
\newblock Distribution-matching embedding for visual domain adaptation.
\newblock \emph{Journal of Machine Learning Research}, 17\penalty0
  (108):\penalty0 1--30, 2016.

\bibitem[Panaretos and Zemel(2019)]{panaretos2019statistical}
Victor~M Panaretos and Yoav Zemel.
\newblock Statistical aspects of wasserstein distances.
\newblock \emph{Annual review of statistics and its application}, 6:\penalty0
  405--431, 2019.

\bibitem[Paty and Cuturi(2019)]{paty2019subspace}
Fran{\c{c}}ois-Pierre Paty and Marco Cuturi.
\newblock Subspace robust wasserstein distances.
\newblock In \emph{International conference on machine learning}, pages
  5072--5081. PMLR, 2019.

\bibitem[Nguyen et~al.(2022)Nguyen, Ren, Nguyen, Rout, Nguyen, and
  Ho]{nguyen2022hierarchical}
Khai Nguyen, Tongzheng Ren, Huy Nguyen, Litu Rout, Tan Nguyen, and Nhat Ho.
\newblock Hierarchical sliced wasserstein distance.
\newblock \emph{arXiv preprint arXiv:2209.13570}, 2022.

\bibitem[Bonneel et~al.(2015)Bonneel, Rabin, Peyr{\'e}, and
  Pfister]{bonneel2015sliced}
Nicolas Bonneel, Julien Rabin, Gabriel Peyr{\'e}, and Hanspeter Pfister.
\newblock Sliced and radon wasserstein barycenters of measures.
\newblock \emph{Journal of Mathematical Imaging and Vision}, 51:\penalty0
  22--45, 2015.

\bibitem[Kolouri et~al.(2019)Kolouri, Nadjahi, Simsekli, Badeau, and
  Rohde]{kolouri2019generalized}
Soheil Kolouri, Kimia Nadjahi, Umut Simsekli, Roland Badeau, and Gustavo Rohde.
\newblock Generalized sliced wasserstein distances.
\newblock \emph{Advances in neural information processing systems}, 32, 2019.

\bibitem[Masud et~al.(2023)Masud, Werenski, Murphy, and
  Aeron]{masud2023multivariate}
Shoaib~Bin Masud, Matthew Werenski, James~M Murphy, and Shuchin Aeron.
\newblock Multivariate soft rank via entropy-regularized optimal transport:
  Sample efficiency and generative modeling.
\newblock \emph{Journal of Machine Learning Research}, 24\penalty0
  (160):\penalty0 1--65, 2023.

\bibitem[Ben-David et~al.(2006)Ben-David, Blitzer, Crammer, and
  Pereira]{ben2006analysis}
Shai Ben-David, John Blitzer, Koby Crammer, and Fernando Pereira.
\newblock Analysis of representations for domain adaptation.
\newblock \emph{Advances in neural information processing systems}, 19, 2006.

\bibitem[Courty et~al.(2016)Courty, Flamary, Tuia, and
  Rakotomamonjy]{courty2016optimal}
Nicolas Courty, R{\'e}mi Flamary, Devis Tuia, and Alain Rakotomamonjy.
\newblock Optimal transport for domain adaptation.
\newblock \emph{IEEE transactions on pattern analysis and machine
  intelligence}, 39\penalty0 (9):\penalty0 1853--1865, 2016.

\bibitem[Weiss et~al.(2016)Weiss, Khoshgoftaar, and Wang]{weiss2016survey}
Karl Weiss, Taghi~M Khoshgoftaar, and DingDing Wang.
\newblock A survey of transfer learning.
\newblock \emph{Journal of Big data}, 3:\penalty0 1--40, 2016.

\bibitem[Shen et~al.(2018)Shen, Qu, Zhang, and Yu]{shen2018wasserstein}
Jian Shen, Yanru Qu, Weinan Zhang, and Yong Yu.
\newblock Wasserstein distance guided representation learning for domain
  adaptation.
\newblock In \emph{Proceedings of the Thirty-Second AAAI Conference on
  Artificial Intelligence and Thirtieth Innovative Applications of Artificial
  Intelligence Conference and Eighth AAAI Symposium on Educational Advances in
  Artificial Intelligence}, AAAI'18/IAAI'18/EAAI'18. AAAI Press, 2018.
\newblock ISBN 978-1-57735-800-8.

\bibitem[Li and Wang(2008)]{Li_Wang_2007}
Jia Li and James~Z. Wang.
\newblock Real-time computerized annotation of pictures.
\newblock \emph{IEEE Transactions on Pattern Analysis and Machine
  Intelligence}, 30\penalty0 (6):\penalty0 985--1002, June 2008.
\newblock ISSN 1939-3539.
\newblock \doi{10.1109/TPAMI.2007.70847}.

\bibitem[Ye et~al.(2017)Ye, Wu, Wang, and Li]{ye2017fast}
Jianbo Ye, Panruo Wu, James~Z Wang, and Jia Li.
\newblock Fast discrete distribution clustering using wasserstein barycenter
  with sparse support.
\newblock \emph{IEEE Transactions on Signal Processing}, 65\penalty0
  (9):\penalty0 2317--2332, 2017.

\bibitem[Chen et~al.(2020)Chen, B{\'e}cigneul, Ganea, Barzilay, and
  Jaakkola]{chen2020optimal}
Benson Chen, Gary B{\'e}cigneul, Octavian-Eugen Ganea, Regina Barzilay, and
  Tommi Jaakkola.
\newblock Optimal transport graph neural networks.
\newblock \emph{arXiv preprint arXiv:2006.04804}, 2020.

\bibitem[Sun and Fan(2024)]{sun2024mmd}
Yan Sun and Jicong Fan.
\newblock {MMD} graph kernel: Effective metric learning for graphs via maximum
  mean discrepancy.
\newblock In \emph{The Twelfth International Conference on Learning
  Representations}, 2024.
\newblock URL \url{https://openreview.net/forum?id=GZ6AcZwA8r}.

\bibitem[Sun et~al.(2007)Sun, Wang, and Feng]{sun2007further}
Xichen Sun, Liwei Wang, and Jufu Feng.
\newblock Further results on the subspace distance.
\newblock \emph{Pattern recognition}, 40\penalty0 (1):\penalty0 328--329, 2007.

\bibitem[Ye and Lim(2016)]{ye2016schubert}
Ke~Ye and Lek-Heng Lim.
\newblock Schubert varieties and distances between subspaces of different
  dimensions.
\newblock \emph{SIAM Journal on Matrix Analysis and Applications}, 37\penalty0
  (3):\penalty0 1176--1197, 2016.

\bibitem[Ham et~al.(2003)Ham, Lee, and Saul]{ham2003learning}
Ji~Hun Ham, Daniel~D Lee, and Lawrence~K Saul.
\newblock Learning high dimensional correspondences from low dimensional
  manifolds.
\newblock In \emph{International Conference on Machine Learning}, 2003.

\bibitem[Wang and Mahadevan(2009)]{wang2009manifold}
Chang Wang and Sridhar Mahadevan.
\newblock Manifold alignment without correspondence.
\newblock In \emph{IJCAI}, volume~2, page~3, 2009.

\bibitem[Van Den~Oord et~al.(2016)Van Den~Oord, Dieleman, Zen, Simonyan,
  Vinyals, Graves, Kalchbrenner, Senior, Kavukcuoglu, et~al.]{van2016wavenet}
Aaron Van Den~Oord, Sander Dieleman, Heiga Zen, Karen Simonyan, Oriol Vinyals,
  Alex Graves, Nal Kalchbrenner, Andrew Senior, Koray Kavukcuoglu, et~al.
\newblock Wavenet: A generative model for raw audio.
\newblock \emph{arXiv preprint arXiv:1609.03499}, 12, 2016.

\bibitem[Dinh et~al.(2016)Dinh, Sohl-Dickstein, and Bengio]{dinh2016density}
Laurent Dinh, Jascha Sohl-Dickstein, and Samy Bengio.
\newblock Density estimation using real nvp.
\newblock \emph{arXiv preprint arXiv:1605.08803}, 2016.

\bibitem[Shi and Malik(2000)]{shi2000normalized}
Jianbo Shi and Jitendra Malik.
\newblock Normalized cuts and image segmentation.
\newblock \emph{IEEE Transactions on pattern analysis and machine
  intelligence}, 22\penalty0 (8):\penalty0 888--905, 2000.

\bibitem[Ng et~al.(2001)Ng, Jordan, and Weiss]{ng2002spectral}
Andrew Ng, Michael Jordan, and Yair Weiss.
\newblock On spectral clustering: Analysis and an algorithm.
\newblock In T.~Dietterich, S.~Becker, and Z.~Ghahramani, editors,
  \emph{Advances in Neural Information Processing Systems}, volume~14. MIT
  Press, 2001.

\bibitem[Huang et~al.(2021)Huang, Ma, and Lai]{huang2021projection}
Minhui Huang, Shiqian Ma, and Lifeng Lai.
\newblock Projection robust wasserstein barycenters.
\newblock In \emph{International Conference on Machine Learning}, pages
  4456--4465. PMLR, 2021.

\bibitem[Greene and Cunningham(2006)]{greene06icml}
Derek Greene and P\'{a}draig Cunningham.
\newblock Practical solutions to the problem of diagonal dominance in kernel
  document clustering.
\newblock In \emph{Proc. 23rd International Conference on Machine learning
  (ICML'06)}, pages 377--384. ACM Press, 2006.

\bibitem[Lewis(1987)]{reuters-21578_text_categorization_collection_137}
David Lewis.
\newblock {Reuters-21578 Text Categorization Collection}.
\newblock UCI Machine Learning Repository, 1987.
\newblock {DOI}: https://doi.org/10.24432/C52G6M.

\bibitem[LeCun et~al.(1998)LeCun, Bottou, Bengio, and
  Haffner]{lecun1998gradient}
Yann LeCun, L{\'e}on Bottou, Yoshua Bengio, and Patrick Haffner.
\newblock Gradient-based learning applied to document recognition.
\newblock \emph{Proceedings of the IEEE}, 86\penalty0 (11):\penalty0
  2278--2324, 1998.

\bibitem[Xiao et~al.(2017)Xiao, Rasul, and Vollgraf]{xiao2017fashion}
Han Xiao, Kashif Rasul, and Roland Vollgraf.
\newblock Fashion-mnist: a novel image dataset for benchmarking machine
  learning algorithms.
\newblock \emph{arXiv preprint arXiv:1708.07747}, 2017.

\bibitem[Krizhevsky et~al.(2009)Krizhevsky, Hinton,
  et~al.]{krizhevsky2009learning}
Alex Krizhevsky, Geoffrey Hinton, et~al.
\newblock Learning multiple layers of features from tiny images.
\newblock 2009.

\bibitem[Liu et~al.(2015)Liu, Luo, Wang, and Tang]{liu2015faceattributes}
Ziwei Liu, Ping Luo, Xiaogang Wang, and Xiaoou Tang.
\newblock Deep learning face attributes in the wild.
\newblock In \emph{Proceedings of International Conference on Computer Vision
  (ICCV)}, December 2015.

\bibitem[Saenko et~al.(2010)Saenko, Kulis, Fritz, and
  Darrell]{saenko2010adapting}
Kate Saenko, Brian Kulis, Mario Fritz, and Trevor Darrell.
\newblock Adapting visual category models to new domains.
\newblock In \emph{Computer Vision--ECCV 2010: 11th European Conference on
  Computer Vision, Heraklion, Crete, Greece, September 5-11, 2010, Proceedings,
  Part IV 11}, pages 213--226. Springer, 2010.

\bibitem[Wang et~al.()]{transferlearning.xyz}
Jindong Wang et~al.
\newblock Everything about transfer learning and domain adapation.
\newblock \url{http://transferlearning.xyz}.

\bibitem[Long et~al.(2015)Long, Cao, Wang, and Jordan]{long2015learning}
Mingsheng Long, Yue Cao, Jianmin Wang, and Michael Jordan.
\newblock Learning transferable features with deep adaptation networks.
\newblock In \emph{International conference on machine learning}, pages
  97--105. PMLR, 2015.

\bibitem[Zhu et~al.(2020)Zhu, Zhuang, Wang, Ke, Chen, Bian, Xiong, and
  He]{zhu2020deep}
Yongchun Zhu, Fuzhen Zhuang, Jindong Wang, Guolin Ke, Jingwu Chen, Jiang Bian,
  Hui Xiong, and Qing He.
\newblock Deep subdomain adaptation network for image classification.
\newblock \emph{IEEE transactions on neural networks and learning systems},
  32\penalty0 (4):\penalty0 1713--1722, 2020.

\bibitem[Long et~al.(2017)Long, Zhu, Wang, and Jordan]{long2017deep}
Mingsheng Long, Han Zhu, Jianmin Wang, and Michael~I Jordan.
\newblock Deep transfer learning with joint adaptation networks.
\newblock In \emph{International conference on machine learning}, pages
  2208--2217. PMLR, 2017.

\bibitem[Heusel et~al.(2017)Heusel, Ramsauer, Unterthiner, Nessler, and
  Hochreiter]{heusel2017gans}
Martin Heusel, Hubert Ramsauer, Thomas Unterthiner, Bernhard Nessler, and Sepp
  Hochreiter.
\newblock Gans trained by a two time-scale update rule converge to a local nash
  equilibrium.
\newblock \emph{Advances in neural information processing systems}, 30, 2017.

\bibitem[Falcon(2019)]{falcon2019pytorch}
William~A Falcon.
\newblock Pytorch lightning.
\newblock \emph{GitHub}, 3, 2019.

\bibitem[Laurent and Massart(2000)]{laurent2000adaptive}
Beatrice Laurent and Pascal Massart.
\newblock Adaptive estimation of a quadratic functional by model selection.
\newblock \emph{Annals of statistics}, pages 1302--1338, 2000.

\end{thebibliography}






\appendix
\section{More details on numerical experiments}\label{sec: more details of exp}
\subsection{Datasets}\label{subsec:data_description}
\textbf{Synthetic datasets} are used to intuitively show the effectiveness of the proposed method. The first dataset consists of some simple convex polygons:  two circles and two squares. Assume that we sample $N$ points and that each polygon includes $n=N/4$ points, this dataset can be generated as follows.

\begin{itemize}
\item Step 1: Sample $\bm\theta\in\mathbb R^{n\times 1}$ from $\textup{Uniform}(0,1)$, set $\bm c = 2\pi\bm\theta$, and generate the lower-left circle by $\mathbb R^{n\times 2} \ni \bm X_1 = (r\cos(\bm c) + \bm e, r\sin(\bm c) + \bm e')$ where $\bm e,\bm e'\in\mathbb R^{n\times 1}$ are sampled from $\mathcal{N}(0,0.01)$;
\item Step 2: Sample $\bm\theta\in\mathbb R^{n\times 1}$ from $\textup{Uniform}(0,1)$, set $\bm c = 2\pi\bm\theta$, and generate the upper-right circle by $\mathbb R^{n\times 2} \ni \bm X_2 = (r\cos(\bm c) + 2 + \bm e, r\sin(\bm c) + 2 + \bm e')$ where $\bm e,\bm e'\in\mathbb R^{n\times 1}$ are sampled from $\mathcal{N}(0,0.01)$;
\item Step 3: Sample $\bm x,\bm x'\in\mathbb R^{n\times 2}$ from $\textup{Uniform}(0,1)$ and generate the lower-right square by $\mathbb R^{n\times 2} \ni \bm X_3 = (1.5\bm x + 1.5, 1.5\bm x' - 1)$;
\item Step 4: Sample $\bm x,\bm x'\in\mathbb R^{n\times 2}$ from $\textup{Uniform}(0,1)$ and generate the upper-left square by $\mathbb R^{n\times 2} \ni \bm X_4 = (1.5\bm x - 1, 1.5\bm x' + 1.5)$;
\end{itemize}

The second one is a spiral dataset. Assume that we sample $N$ points and that each spiral curve has $n=N/2$ points, it can be generated by 
\begin{itemize}
\item Step 1: Sample $\bm\theta\in\mathbb R^{n\times 1}$ from $\textup{Uniform}(0,1)$ and set $\bm c = 4\pi\sqrt{\bm\theta}$;
\item Step 2: Calculate $\bm r_1 = 2\bm c + \pi$ and generate one of the spiral curves by $\bm X_1 = (\bm r_1\cos(\bm c) + \bm e,\bm r_1\sin(\bm c) + \bm e')$ where $\bm e,\bm e'\in\mathbb R^{n\times 1}$ are sampled from $\mathcal{N}(0,0.64)$;
\item Step 3: Calculate $\bm r_2 = -2\bm c - \pi$ and generate the other spiral curve by $\bm X_2 = (\bm r_2\cos(\bm c) + \bm e,\bm r_2\sin(\bm c) + \bm e')$ where $\bm e,\bm e'\in\mathbb R^{n\times 1}$ are sampled from $\mathcal{N}(0,0.64)$;
\end{itemize}

\textbf{BBC News abstract dataset} is created by concatenating the title and the first sentence of news posts from BBC News \citep{greene06icml} comprised of 2225 articles, each labeled under one of 5 categories: business, entertainment, politics, sport or tech. For each article, we retain the first 16 high-frequency words, each of which is converted into a 300-dimensional vector by some natural language processing tools (\textit{e.g.}, NLTK, and Gensim).

\textbf{BBC Sports abstract dataset} is a domain-specific dataset \citep{greene06icml} that consists of 737 documents with 5 classes and also comes from BBC and is similarly constructed as the BBC News abstract dataset.

\textbf{Reuters subset} is a 5-class subset of the Reuters from Reuters financial news services \citep{reuters-21578_text_categorization_collection_137}. We preprocess it as the BBC News abstract dataset.

\textbf{MNIST, Fashion-MNIST, CIFAR10, and CelebA} are the common datasets for machine learning tasks \citep{lecun1998gradient,xiao2017fashion,krizhevsky2009learning, liu2015faceattributes}.

\textbf{Office-31} is a computer vision classification dataset with 31 classes from \cite{saenko2010adapting} with images from three domains: amazon (A), webcam (W), and dslr (D). It is a \textit{de facto} standard for domain adaptation algorithms in computer vision. Amazon, the largest domain, is a composition of 2817 images. Webcam and dslr have 795 and 498 images, respectively.

\subsection{Experimental setup}
\label{subsec: exp setup}
\textbf{Distribution transformation} is based on two synthetic datasets, each with 1024 data points. We use a multilayer perceptron (MLP) to transform a 2-D Gaussian distribution into the synthetic target distributions. This MLP consists of four fully connected linear layers, each followed by the hyperbolic tangent activation function except the last one. That is, the MLP has the structure: $\textup{Linear}(2,nz)\rightarrow\textup{Tanh}\rightarrow\textup{Linear}(nz,nz)\rightarrow\textup{Tanh}\rightarrow\textup{Linear}(nz,nz)\rightarrow\textup{Tanh}\rightarrow\textup{Linear}(nz,2)$. In experiment, we set $nz=255$ to ensure MLP has enough capacity for all compared metrics.

\textbf{Text clustering} is based on three text datasets: BBC News abstract dataset, BBC Sports abstract dataset, and Reuters subset. We perform DDSC on those text datasets. As described in Algorithm \ref{alg: ddsc_mrd}, the computation of Gaussian kernel occurs in both MRD and spectral clustering. We adaptively estimate the 
bandwidth $\sigma$ by averaging all entries of the pairwise distance matrix in both scenarios. That is, we do not have to fine-tune those hyperparameters for improving the performance of DDSC. The results of other methods are directly referenced from \cite{wang2024spectral}.

\textbf{Image generation} is based on four image datasets: MNIST, Fashion-MNIST, CIFAR-10, and CelebA. For MNIST, Fashion-MNIST, and CIFAR-10, we used a 4-layer DCGAN discriminator and a 6-layer ResNet generator. For CelebA, we used a 5-layer DCGAN discriminator and a 8-layer ResNet generator (See \ref{subsec: network arch}). The input codes for the generator are drawn from a standard Gaussian $\mathcal{N}(\bm 0, \bm I_{128})$. The output dimension of the discriminators across all GAN models was set to 1. All models were trained for 10,000 generator updates on a single GPU, using a batch size of 128 and 5 critic updates per generator step. The initial learning rate was set to 0.0001 for all datasets. As in previous literature, we used Adam optimizer with $\beta_1 = 0.5$ and $\beta_2 = 0.9$.

\textbf{Domain adaptation} is based on Office-31. We adopt the unified transfer learning framework developed by \cite{transferlearning.xyz}, which leverages a pretrained ResNet50 as the backbone and employs an MLP classifier for unsupervised domain adaptation. This framework is standard in deep transfer learning \citep{long2015learning,zhu2020deep,long2017deep}. For a fair comparison, we integrated MMD, Wasserstein distance, Sinkhorn distance, and our proposed MRD into their Python loss function module. Additionally, we enhance our MRD using a multi-kernel technique, following the approach outlined in \cite{long2015learning}. Consistent with prior work, we evaluate performance across six common transfer tasks: $A\rightarrow W$, $W\rightarrow A$, $A\rightarrow D$, $D\rightarrow A$, $D\rightarrow W$, and $W\rightarrow D$. To ensure optimal performance, we fine-tuned the trade-off parameter $\lambda$ for all four distances.

\subsection{Evaluation metrics}

\textbf{Fr\'echet inception distance (FID)} is a common metric to assess the quality of images generated by a generative model, like a generative adversarial network (GAN) \citep{heusel2017gans}. Given a collection of real images $(N_1,C,H,W)$ and another collection of fake images $(N_2,C,H,W)$, the FID score between them is computed as follows.
\begin{itemize}
\item Step 1: transform all the images to the shape of $(3, 299, 299)$ which is required as the standard input of the inception v3;
\item Step 2: feed the collection of real images (\textit{resp.}, fake images) into inception v3 so as to get the corresponding collection of feature vectors $(N_1,2048)$ (\textit{resp.}, $(N_2, 2048)$);
\item Step 3: compute $\bm \mu_1$ (\textit{resp.}, $\bm \mu_2$) as the feature-wise mean of the feature vectors of real images (\textit{resp.}, fake images) and also calculate $\bm C_1$ and $\bm C_2$ the covariance matrix for those two collections of feature vectors;
\item Step 4: Calculate the FID score between two collections of feature vectors
\begin{align*}
d^2 = \Vert\bm \mu_1 - \bm \mu_2\Vert_2^2 + \textup{Tr}\left(\bm C_1 + \bm C_2 - 2(\bm C_1\bm C_2)^{1/2}\right)
\end{align*}
\end{itemize}

In our image generative task, we use the test sets from MNIST, Fashion-MNIST, CIFAR10, and CelebA as collections of real images. The generator produces an equal number of fake images, forming a corresponding collection of generated images. Then, we calculate the FID score between the real and generated image sets, assessing the performance of the generator. For this computation, we employed the TorchMetrics library \citep{falcon2019pytorch}.


\subsection{More generated samples}
\label{subsec: more generated samples}
We presented more samples generated by WGAN-GP, SMMDGAN, and our SMRDGAN trained on MNIST and Fashion-MNIST in Figure \ref{fig:gan - more samples}.
\begin{figure}[!ht]
    \centering
    \subfigure[WGAN-GP]{
    \includegraphics[width=0.3\linewidth]{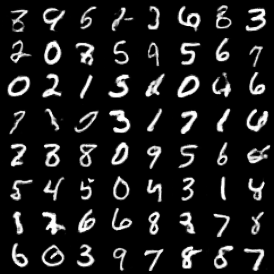}
    \label{fig: wgan-gp mnist}
    }\hfill
    \subfigure[SMMDGAN]{\includegraphics[width=0.3\linewidth]{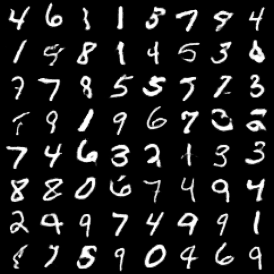}
    \label{fig: mmd mnist}
    }\hfill
    \subfigure[SMRDGAN]{
    \includegraphics[width=0.3\linewidth]{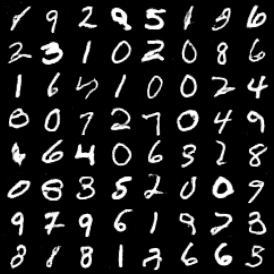}
    \label{fig: mrd mnist}
    }
    \subfigure[WGAN-GP]{
    \includegraphics[width=0.3\linewidth]{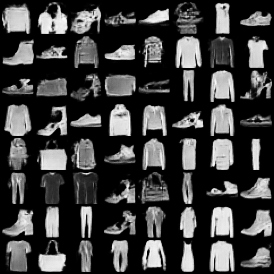}
    \label{fig: wgan-gp fmnist}
    }\hfill
    \subfigure[SMMDGAN]{\includegraphics[width=0.3\linewidth]{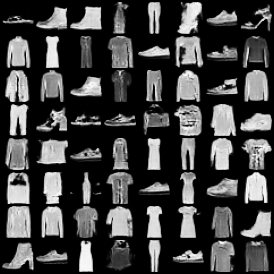}
    \label{fig: mmd fmnist}
    }\hfill
    \subfigure[SMRDGAN]{
    \includegraphics[width=0.3\linewidth]{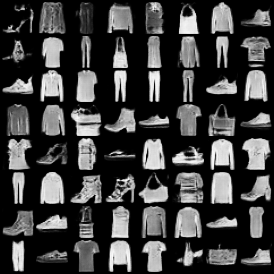}
    \label{fig: mrd fmnist}
    }
    \caption{Samples from WGAN-GP, SMMDGAN, and our SMRDGAN. Top: $32\times 32$ MNIST; bottom: $32\times 32$ Fashion-MNIST.}
    \label{fig:gan - more samples}
    \vspace{-10pt}
\end{figure}

\subsection{Network architectures}
\label{subsec: network arch}
For the image generation tasks on CelebA, we used the following network architectures. For MNIST, Fashion-MNIST, and CIFAR-10, we simplify those two network architectures accordingly.
\begin{table}[!ht]
\centering
\caption{ResNet Generator}
\label{tab:ResNetGenerator}
\begin{tabular}{@{}lccccc@{}}
\toprule
\textbf{Layer Type}  & \textbf{Kernel Size} & \textbf{Stride} & \textbf{Padding} & \textbf{Output Channels} & \textbf{Output Shape} \\ \midrule
Input                & --                  & --              & --               & --                       & $128$ \\ 
Unflatten                & --                  & --              & --               & 128                       & $128 \times 1 \times 1$ \\ 
Deconv 1                & $4 \times 4$        & 1               & 0                & 512                       & $512 \times 4 \times 4$ \\ 
Residual Block 1     & $4 \times 4$        & 2               & 1                & 256                       & $256 \times 8 \times 8$ \\ 
Residual Block 2     & $4 \times 4$        & 2               & 1                & 128                      & $128 \times 16 \times 16$ \\ 
Residual Block 3     & $4 \times 4$        & 2               & 1                & 64                      & $64 \times 32 \times 32$ \\ 
Deconv 2 & $4\times 4$                  & 2              & 1               & 3                     & $3 \times 64 \times 64$ \\ \bottomrule
\end{tabular}
\end{table}

\begin{table}[!ht]
\centering
\caption{DCGAN Discriminator}
\label{tab:DCDiscriminator}
\begin{tabular}{@{}lccccc@{}}
\toprule
\textbf{Layer Type}  & \textbf{Kernel Size} & \textbf{Stride} & \textbf{Padding} & \textbf{Output Channels} & \textbf{Output Shape} \\ \midrule
Input                & --                  & --              & --               & 3                       & $3 \times 64 \times 64$ \\ 
Conv 1                & $4 \times 4$        & 2               & 1                & 64                       & $64 \times 32 \times 32$ \\ 
Conv 2     & $4 \times 4$        & 2               & 1                & 128                       & $128 \times 16 \times 16$ \\ 
Conv 3     & $4 \times 4$        & 2               & 1                & 256                      & $256 \times 8 \times 8$ \\ 
Conv 4     & $4 \times 4$        & 2               & 1                & 512                      & $512 \times 4 \times 4$ \\ 
Conv 5 & $4\times 4$                  & 1              & 0               & ndod                     & $ndod \times 1 \times 1$ \\ 
Flatten & --                  & --              & --               & --                     & $ndod$ \\ \bottomrule
\end{tabular}
\end{table}

It is worth noting that batch normalization is not employed in the discriminator of WGAN-GP \citep{gulrajani2017improved}, whereas SMMDGAN in \cite{arbel2018gradient} utilizes spectral normalization to its convolutional layers. In our proposed SMRDGAN, we incorporate batch normalization on the top of convolutional layers, preceding the Leaky ReLU activation. These strategies are thoughtfully considered to enhance the performance of their respective models.

\section{Proof for Lemma \ref{lemma: feasible set is convex}}
\begin{proof}
Assume $\bm{S}_1,\bm{S}_2 \in \mathcal{S}_{2}^{\le 1}$, and $\lambda \in [0, 1]$, let $\bm{S} = \lambda \bm{S}_1 + (1 - \lambda)\bm{S}_2$, then it has

\begin{equation}
\begin{aligned}
\|\bm{S}\|_2 & = \|\lambda \bm{S}_1 + (1 - \lambda)\bm{S}_2\|_2\\
& \le \lambda\|\bm{S}_1\|_2 + (1 - \lambda)\|\bm{S}_2\|_2\\
& \le 1
\end{aligned}
\end{equation}
\end{proof}

\section{Proof for Lemma \ref{lemma: MRP is a convex problem}}
\begin{proof}
Since the feasible set is convex, we only need to show that the objective function is convex. Consider the objective function as 
\begin{equation}
\begin{aligned}
g(\bm{S}_{12}, \bm{S}_{21}) & = f(\bm{X}_1, \bm{X}_2)\\
& = \sqrt{w_1\Vert\bm{X}_1 - \bm{X}_2\bm{S}_{12}\Vert_F^2 + w_2\Vert\bm{X}_2 - \bm{X}_1\bm{S}_{21}\Vert_F^2}
\end{aligned}
\end{equation}

\vspace{1em}

\begin{equation}
\begin{aligned}
g(\bm{S}_{12}, \bm{S}_{21}) & = \sqrt{\left\Vert\left[\sqrt{w_1}(\bm{X}_1 - \bm{X}_2\bm{S}_{12}),\sqrt{w_2}(\bm{X}_2 - \bm{X}_1\bm{S}_{21})\right]\right\Vert_F^2}\\
& = \left\Vert\left[\sqrt{w_1}(\bm{X}_1 - \bm{X}_2\bm{S}_{12}),\sqrt{w_2}(\bm{X}_2 - \bm{X}_1\bm{S}_{21})\right]\right\Vert_F\\
& = \left\Vert\left[\sqrt{w_1}\bm{X}_1, \sqrt{w_2}\bm{X}_2\right] - \left[\sqrt{w_1}\bm{X}_2\bm{S}_{12}, \sqrt{w_2}\bm{X}_1\bm{S}_{21}\right]\right\Vert_F\\
& = \left\Vert\left[\sqrt{w_1}\bm{X}_1, \sqrt{w_2}\bm{X}_2\right] - \left[\sqrt{w_1}\bm{X}_2, \sqrt{w_2}\bm{X}_1\right]\begin{bmatrix}
\bm{S}_{12} & \bm{O}\\
\bm{O} & \bm{S}_{21}
\end{bmatrix}\right\Vert_F
\end{aligned}
\end{equation}

Thus, we have

\begin{equation}
\begin{aligned}
& g(\lambda\bm{S}_{12} + (1 - \lambda)\bm{S}_{12}', \lambda\bm{S}_{21} + (1 - \lambda)\bm{S}_{21}')\\
= & \left\Vert\left[\sqrt{w_1}\bm{X}_1, \sqrt{w_2}\bm{X}_2\right] - \left[\sqrt{w_1}\bm{X}_2, \sqrt{w_2}\bm{X}_1\right]\begin{bmatrix}
\lambda\bm{S}_{12} + (1 - \lambda)\bm{S}_{12}' & \bm{O}\\
\bm{O} & \lambda\bm{S}_{21} + (1 - \lambda)\bm{S}_{21}'
\end{bmatrix}\right\Vert_F\\
\le & \lambda\left\Vert\left[\sqrt{w_1}\bm{X}_1, \sqrt{w_2}\bm{X}_2\right] - \left[\sqrt{w_1}\bm{X}_2, \sqrt{w_2}\bm{X}_1\right]\begin{bmatrix}
\bm{S}_{12} & \bm{O}\\
\bm{O} & \bm{S}_{21}
\end{bmatrix}\right\Vert_F\\
& + (1 - \lambda)\left\Vert\left[\sqrt{w_1}\bm{X}_1, \sqrt{w_2}\bm{X}_2\right] - \left[\sqrt{w_1}\bm{X}_2, \sqrt{w_2}\bm{X}_1\right]\begin{bmatrix}
\bm{S}_{12}' & \bm{O}\\
\bm{O} & \bm{S}_{21}'
\end{bmatrix}\right\Vert_F\\
= & \lambda g(\bm{S}_{12}, \bm{S}_{21}) + (1 - \lambda)g(\bm{S}_{12}', \bm{S}_{21}')
\end{aligned}
\end{equation}

Therefore, it is a convex optimization.
\end{proof}

\section{Proof for Theorem \ref{thm: MRD is a pseudometric}}

\begin{proof}

(1) ($\Leftarrow$) If $\bm{X}_1 = \bm{X}_2$, 
\begin{equation}
\begin{aligned}
d(\bm{X}_1, \bm{X}_1) = & \min_{\bm{S}_{1,1}} \sqrt{w_1\Vert\bm{X}_1 - \bm{X}_1\bm{S}_{11}\Vert_F^2 + w_2\Vert\bm{X}_1 - \bm{X}_1\bm{S}_{11}\Vert_F^2}\\
& \text{s.t. }\bm{S}_{1,1} = \{\bm{S}_{11}\} \subseteq \mathcal{S}_{2}^{\le 1}
\end{aligned}
\end{equation}

It is clear that $d(\bm{X}_1, \bm{X}_2) = d(\bm{X}_1, \bm{X}_1) = d(\bm{X}_2, \bm{X}_2) = 0$ at $\bm{S}_{11} = I$.

($\Rightarrow$) If $d(\bm{X}_1, \bm{X}_2) = 0$, it has

\begin{equation}
\begin{aligned}
\bm{X}_1 - \bm{X}_2\bm{S}_{12} = 0\\
\bm{X}_2 - \bm{X}_1\bm{S}_{21} = 0\\
\end{aligned}
\end{equation}

which implies that $\bm{X}_1 \sim \bm{X}_2$ which can be indistinguishable.

(2) \begin{equation}
\begin{aligned}
d(\bm{X}_1, \bm{X}_2) = & \min_{\bm{S}_{1,2}} \sqrt{w_1\Vert\bm{X}_1 - \bm{X}_2\bm{S}_{12}\Vert_F^2 + w_2\Vert\bm{X}_2 - \bm{X}_1\bm{S}_{21}\Vert_F^2}\\
& \text{s.t. }\bm{S}_{1,2} = \{\bm{S}_{12}, \bm{S}_{21}\} \subseteq \mathcal{S}_{2}^{\le 1}
\end{aligned}
\end{equation}

\begin{equation}
\begin{aligned}
d(\bm{X}_2, \bm{X}_1) = & \min_{\bm{S}_{2,1}} \sqrt{w_1\Vert\bm{X}_2 - \bm{X}_1\bm{S}_{21}\Vert_F^2 + w_2\Vert\bm{X}_1 - \bm{X}_2\bm{S}_{12}\Vert_F^2}\\
& \text{s.t. }\bm{S}_{2,1} = \{\bm{S}_{21}, \bm{S}_{12}\} \subseteq \mathcal{S}_{2}^{\le 1}
\end{aligned}
\end{equation}

When $w_1 = w_2 = 1/2$, both of them are the same optimization problem. Hence, $d(\bm{X}_1, \bm{X}_2) = d(\bm{X}_2, \bm{X}_1)$ for equally distributed weights.

\vspace{1em}

(3) \begin{equation}
\begin{aligned}
& d(\bm{X}_1, \bm{X}_3) \le f(\bm{X}_1, \bm{X}_3)\\
= & \sqrt{w_1\Vert\bm{X}_1 - \bm{X}_3\bm{S}_{13}\Vert_F^2 + w_2\Vert\bm{X}_3 - \bm{X}_1\bm{S}_{31}\Vert_F^2}\\
= & \sqrt{\Vert\left[\sqrt{w_1}(\bm{X}_1 - \bm{X}_3\bm{S}_{13}),\sqrt{w_2}(\bm{X}_3 - \bm{X}_1\bm{S}_{31})\right]\Vert_F^2} = \Vert\left[\sqrt{w_1}(\bm{X}_1 - \bm{X}_3\bm{S}_{13}),\sqrt{w_2}(\bm{X}_3 - \bm{X}_1\bm{S}_{31})\right]\Vert_F\\
= & \Vert\left[\sqrt{w_1}(\bm{X}_1 - \bm{X}_2\bm{S}_{12} + \bm{X}_2\bm{S}_{12} - \bm{X}_3\bm{S}_{13}),\sqrt{w_2}(\bm{X}_3 - \bm{X}_2\bm{S}_{32} + \bm{X}_2\bm{S}_{32} - \bm{X}_1\bm{S}_{31})\right]\Vert_F\\
= & \Vert\left[\sqrt{w_1}(\bm{X}_1 - \bm{X}_2\bm{S}_{12}),\sqrt{w_2}(\bm{X}_2\bm{S}_{32} - \bm{X}_1\bm{S}_{31})\right] + \left[\sqrt{w_1}(\bm{X}_2\bm{S}_{12} - \bm{X}_3\bm{S}_{13}),\sqrt{w_2}(\bm{X}_3 - \bm{X}_2\bm{S}_{32})\right]\Vert_F \textcolor{red}{*}\\
\le & \Vert\left[\sqrt{w_1}(\bm{X}_1 - \bm{X}_2\bm{S}_{12}),\sqrt{w_2}(\bm{X}_2\bm{S}_{32} - \bm{X}_1\bm{S}_{31})\right]\Vert_F \\
& \quad + \Vert\left[\sqrt{w_1}(\bm{X}_2\bm{S}_{12} - \bm{X}_3\bm{S}_{13}),\sqrt{w_2}(\bm{X}_3 - \bm{X}_2\bm{S}_{32})\right]\Vert_F\\
= & \sqrt{\Vert\left[\sqrt{w_1}(\bm{X}_1 - \bm{X}_2\bm{S}_{12}),\sqrt{w_2}(\bm{X}_2\bm{S}_{32} - \bm{X}_1\bm{S}_{31})\right]\Vert_F^2} \\
& \quad + \sqrt{\Vert\left[\sqrt{w_1}(\bm{X}_2\bm{S}_{12} - \bm{X}_3\bm{S}_{13}),\sqrt{w_2}(\bm{X}_3 - \bm{X}_2\bm{S}_{32})\right]\Vert_F^2}\\
= & \sqrt{\Vert\left[\sqrt{w_1}(\bm{X}_1 - \bm{X}_2\bm{S}_{12}),\sqrt{w_2}(\bm{X}_2\bm{S}_{32} - \bm{X}_1\bm{S}_{21}\bm{S}_{32})\right]\Vert_F^2} \\
& \quad + \sqrt{\Vert\left[\sqrt{w_1}(\bm{X}_2\bm{S}_{12} - \bm{X}_3\bm{S}_{23}\bm{S}_{12}),\sqrt{w_2}(\bm{X}_3 - \bm{X}_2\bm{S}_{32})\right]\Vert_F^2}\\
= & \sqrt{w_1\Vert\bm{X}_1 - \bm{X}_2\bm{S}_{12}\Vert_F^2 + w_2\Vert\bm{X}_2\bm{S}_{32} - \bm{X}_1\bm{S}_{21}\bm{S}_{32}\Vert_F^2} \\
& \quad + \sqrt{w_1\Vert\bm{X}_2\bm{S}_{12} - \bm{X}_3\bm{S}_{23}\bm{S}_{12}\Vert_F^2 + w_2\Vert\bm{X}_3 - \bm{X}_2\bm{S}_{32}\Vert_F^2}
\end{aligned}
\end{equation}
where we use the following lemma for the penultimate identity.

\begin{lemma}[Gluing lemma with norm constraint]
Assume $\bm{S}_{12}, \bm{S}_{21} \bm{S}_{23}, \bm{S}_{32} \in \mathcal{S}_{2}^{\le 1}$, it has\\
\begin{equation}
\bm{S}_{13} = \bm{S}_{23}\bm{S}_{12} \in \mathcal{S}_{2}^{\le 1} \text{ and } \bm{S}_{31} = \bm{S}_{21}\bm{S}_{32} \in \mathcal{S}_{2}^{\le 1}
\end{equation}
\end{lemma}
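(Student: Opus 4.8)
The plan is to prove both memberships by a single application of the submultiplicativity of the spectral norm, since $\mathcal{S}_{2}^{\le 1}$ is precisely the set of matrices (of arbitrary compatible dimensions) whose spectral norm does not exceed $1$. I would recall that the spectral norm $\|\cdot\|_2$ is the operator norm induced by the Euclidean vector norm, so that for any conformable matrices $\bm{A}$ and $\bm{B}$ one has the variational characterization $\|\bm{A}\bm{B}\|_2 = \max_{\|\bm{x}\|_2 = 1}\|\bm{A}\bm{B}\bm{x}\|_2$.

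First I would record the submultiplicativity estimate. For $\|\bm{x}\|_2 = 1$, applying the operator-norm bound twice gives $\|\bm{A}\bm{B}\bm{x}\|_2 \le \|\bm{A}\|_2\|\bm{B}\bm{x}\|_2 \le \|\bm{A}\|_2\|\bm{B}\|_2\|\bm{x}\|_2 = \|\bm{A}\|_2\|\bm{B}\|_2$, and taking the maximum over unit vectors yields $\|\bm{A}\bm{B}\|_2 \le \|\bm{A}\|_2\|\bm{B}\|_2$. This one-line estimate is the only nontrivial ingredient, and it is entirely standard.

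Then I would substitute directly. Since $\bm{S}_{12},\bm{S}_{23}\in\mathcal{S}_{2}^{\le 1}$ and the product $\bm{S}_{23}\bm{S}_{12}$ is dimensionally well defined (with $\bm{S}_{12}\in\mathbb{R}^{n_2\times n_1}$ and $\bm{S}_{23}\in\mathbb{R}^{n_3\times n_2}$, the product lies in $\mathbb{R}^{n_3\times n_1}$, exactly the shape required of $\bm{S}_{13}$ in the triangle-inequality argument), the estimate gives $\|\bm{S}_{13}\|_2 = \|\bm{S}_{23}\bm{S}_{12}\|_2 \le \|\bm{S}_{23}\|_2\|\bm{S}_{12}\|_2 \le 1$, so $\bm{S}_{13}\in\mathcal{S}_{2}^{\le 1}$. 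The identical argument applied to $\bm{S}_{31} = \bm{S}_{21}\bm{S}_{32}$, with $\bm{S}_{32}\in\mathbb{R}^{n_2\times n_3}$ and $\bm{S}_{21}\in\mathbb{R}^{n_1\times n_2}$ so that the product lies in $\mathbb{R}^{n_1\times n_3}$, yields $\|\bm{S}_{31}\|_2\le 1$ and hence $\bm{S}_{31}\in\mathcal{S}_{2}^{\le 1}$.

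There is essentially no obstacle here: the statement is immediate once submultiplicativity is invoked. The only point worth being careful about is the index bookkeeping, namely verifying that $\bm{S}_{23}\bm{S}_{12}$ and $\bm{S}_{21}\bm{S}_{32}$ are conformable and land in the matrix spaces that the triangle inequality needs them to occupy; once the dimensions are checked, the norm bound closes the argument.
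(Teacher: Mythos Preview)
Your proposal is correct and follows exactly the same approach as the paper: both invoke the submultiplicativity of the spectral norm to get $\|\bm{S}_{23}\bm{S}_{12}\|_2 \le \|\bm{S}_{23}\|_2\|\bm{S}_{12}\|_2 \le 1$, and likewise for $\bm{S}_{31}$. Your version is slightly more detailed in justifying submultiplicativity and checking conformability, but the argument is identical in substance.
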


\vspace{1em}

\begin{proof}

\begin{equation}
\begin{aligned}
\Vert\bm{S}_{13}\Vert_2 & = \Vert\bm{S}_{23}\bm{S}_{12}\Vert_2\\
& \le \Vert\bm{S}_{23}\Vert_2 \Vert\bm{S}_{12}\Vert_2\\
& \le 1
\end{aligned}
\end{equation}

Similarly, $\Vert\bm{S}_{31}\Vert_2 \le 1$. They both imply that $\bm{S}_{13}, \bm{S}_{31} \in \mathcal{S}_{2}^{\le 1}$.

\end{proof}

To derive the final triangle inequality, we use two facts here.
\begin{equation}
\begin{aligned}
& \Vert\bm{X}_2\bm{S}_{32} - \bm{X}_1\bm{S}_{21}\bm{S}_{32}\Vert_F^2 \le \Vert\bm{X}_2 - \bm{X}_1\bm{S}_{21}\Vert_F^2\Vert\bm{S}_{32}\Vert_2^2 \le \Vert\bm{X}_2 - \bm{X}_1\bm{S}_{21}\Vert_F^2\\
& \Vert\bm{X}_2\bm{S}_{12} - \bm{X}_3\bm{S}_{23}\bm{S}_{12}\Vert_F^2 \le \Vert\bm{X}_2 - \bm{X}_3\bm{S}_{23}\Vert_F^2\Vert\bm{S}_{12}\Vert_2^2 \le \Vert\bm{X}_2 - \bm{X}_3\bm{S}_{23}\Vert_F^2
\end{aligned}
\end{equation}
where we use the fact that $\|\bm{A}\bm{B}\|_F \le \|\bm{A}\|_F\|\bm{B}\|_2$ and $f(x) = x^2$ is non-decreasing. Furthermore, since $\|\bm{S}\|_F \le 1 \implies \|\bm{S}\|_2 = \sigma_1(\bm{S}) \le 1$, our constraints are of a relaxation but cannot be relaxed further otherwise we cannot derive the triangle inequality.

Thus, it has
\begin{equation}
\begin{aligned}
d(\bm{X}_1, \bm{X}_3) & \le \sqrt{w_1\Vert\bm{X}_1 - \bm{X}_2\bm{S}_{12}\Vert_F^2 + w_2\Vert\bm{X}_2\bm{S}_{32} - \bm{X}_1\bm{S}_{21}\bm{S}_{32}\Vert_F^2} \\
& \quad + \sqrt{w_1\Vert\bm{X}_2\bm{S}_{12} - \bm{X}_3\bm{S}_{23}\bm{S}_{12}\Vert_F^2 + w_2\Vert\bm{X}_3 - \bm{X}_2\bm{S}_{32}\Vert_F^2}\\
& \le \sqrt{w_1\Vert\bm{X}_1 - \bm{X}_2\bm{S}_{12}\Vert_F^2 + w_2\Vert\bm{X}_2 - \bm{X}_1\bm{S}_{21}\Vert_F^2} \\
& \quad + \sqrt{w_1\Vert\bm{X}_2 - \bm{X}_3\bm{S}_{23}\Vert_F^2 + w_2\Vert\bm{X}_3 - \bm{X}_2\bm{S}_{32}\Vert_F^2}\\
\end{aligned}
\end{equation}
where we got the second inequality by using the above two inequalities and the non-decreasing property of $f(x) = \sqrt{x}$ for this second inequality.

Since $\{\bm{S}_{12}, \bm{S}_{21}, \bm{S}_{23}, \bm{S}_{32}\}$ are all arbitrary, we have
\begin{equation}
\begin{aligned}
d(\bm{X}_1, \bm{X}_3) & \le \inf_{\substack{\|\bm S_{12}\|_2\le 1\\\|\bm S_{21}\|_2\le 1}}\sqrt{w_1\Vert\bm{X}_1 - \bm{X}_2\bm{S}_{12}\Vert_F^2 + w_2\Vert\bm{X}_2 - \bm{X}_1\bm{S}_{21}\Vert_F^2}\\
& \quad + \inf_{\substack{\|\bm S_{23}\|_2\le 1\\\|\bm S_{32}\|_2\le 1}}\sqrt{w_1\Vert\bm{X}_2 - \bm{X}_3\bm{S}_{23}\Vert_F^2 + w_2\Vert\bm{X}_3 - \bm{X}_2\bm{S}_{32}\Vert_F^2}\\
& = d(\bm{X}_1, \bm{X}_2) + d(\bm{X}_2, \bm{X}_3)
\end{aligned}
\end{equation}

\end{proof}

\section{Proof for Theorem \ref{thm: optimality condition of MRD}}

\begin{theorem}\label{thm: optimality condition of MRD}
The optimality condition of MRD$_t$ has a form of
\begin{align}
& \bm{S}_{12}(\lambda_{12}) = (\bm{X}_2^T\bm{X}_2 + \lambda_{12}\bm{I}_{n_2})^{-1}\bm{X}_2^T\bm{X}_1\\
& \bm{S}_{21}(\lambda_{21}) = (\bm{X}_1^T\bm{X}_1 + \lambda_{21}\bm{I}_{n_1})^{-1}\bm{X}_1^T\bm{X}_2\\
& \Vert\bm{S}_{12}(\lambda_{12})\Vert_F^2 = 1,\qquad \Vert\bm{S}_{21}(\lambda_{21})\Vert_F^2 = 1
\end{align}
\end{theorem}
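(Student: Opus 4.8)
The plan is to exploit the fact that the MRD$_t$ objective decouples into two independent matrix least-squares problems, solve each one via its KKT conditions, and read off the three stated identities. First I would note that $\bm{S}_{12}$ appears only in the term $\tfrac12\|\bm{X}_1 - \bm{X}_2\bm{S}_{12}\|_F^2$ and $\bm{S}_{21}$ only in $\tfrac12\|\bm{X}_2 - \bm{X}_1\bm{S}_{21}\|_F^2$, while the constraint $\mathcal{S}_F^{\le 1}$ is imposed on each block separately. Since the square root is monotone increasing, minimizing the MRD$_t$ objective over the product constraint set is equivalent to separately minimizing each squared term over its own Frobenius ball. It therefore suffices to analyze the single convex program $\min_{\|\bm{S}_{12}\|_F \le 1} \tfrac12\|\bm{X}_1 - \bm{X}_2\bm{S}_{12}\|_F^2$ and argue symmetrically for $\bm{S}_{21}$.

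Next I would set up the KKT system. Writing $f(\bm{S}_{12}) = \tfrac12\|\bm{X}_1 - \bm{X}_2\bm{S}_{12}\|_F^2$ and the constraint as $g(\bm{S}_{12}) = \tfrac12(\|\bm{S}_{12}\|_F^2 - 1)\le 0$, the matrix gradients are $\nabla f = \bm{X}_2^T\bm{X}_2\bm{S}_{12} - \bm{X}_2^T\bm{X}_1$ and $\nabla g = \bm{S}_{12}$. Because $f$ is convex, $g$ defines a convex Frobenius-ball constraint, and Slater's condition holds trivially ($\bm{S}_{12}=\bm{0}$ is strictly feasible), the KKT conditions are necessary and sufficient for global optimality. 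Stationarity $\nabla f + \lambda_{12}\nabla g = \bm{0}$ gives $(\bm{X}_2^T\bm{X}_2 + \lambda_{12}\bm{I}_{n_2})\bm{S}_{12} = \bm{X}_2^T\bm{X}_1$; for $\lambda_{12}>0$ the matrix $\bm{X}_2^T\bm{X}_2 + \lambda_{12}\bm{I}_{n_2}$ is positive definite and hence invertible, yielding the first identity $\bm{S}_{12}(\lambda_{12}) = (\bm{X}_2^T\bm{X}_2 + \lambda_{12}\bm{I}_{n_2})^{-1}\bm{X}_2^T\bm{X}_1$. The analogous computation for $\bm{S}_{21}$ gives the second identity.

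Finally I would invoke complementary slackness $\lambda_{12}(\|\bm{S}_{12}\|_F^2 - 1) = 0$ together with dual feasibility $\lambda_{12}\ge 0$. This produces a dichotomy: either the unconstrained least-squares solution already lies inside the ball (then $\lambda_{12}=0$ and the constraint is slack), or the constraint is active, forcing $\|\bm{S}_{12}(\lambda_{12})\|_F^2 = 1$, which is the third identity; the same holds for $\bm{S}_{21}$. I would stress that the equality form recorded in the theorem is exactly the active-constraint regime, which is the case of interest since a small regression error drives the minimizer to the boundary of $\mathcal{S}_F^{\le 1}$.

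The step I expect to be the main obstacle is the careful handling of the boundary-versus-interior dichotomy, since the identities $\|\bm{S}_{12}\|_F^2=1$ bind only when the constraint is active. To close this cleanly I would either restrict attention to that regime or show that $\|\bm{S}_{12}(\lambda_{12})\|_F$ is strictly decreasing in $\lambda_{12}$ on $[0,\infty)$ (via a singular-value expansion of $\bm{X}_2$), guaranteeing a unique $\lambda_{12}$ that realizes the boundary condition whenever the unconstrained solution is infeasible, mirroring the monotonicity argument already used for the spectral-norm variant in Lemma \ref{lem:upper bdd of regularization coefficient}.
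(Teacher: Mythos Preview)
Your proposal is correct and reaches the same optimality conditions as the paper, but you take the primal KKT route whereas the paper works through the dual. Concretely, the paper forms the Lagrangian, minimizes it over $(\bm{S}_{12},\bm{S}_{21})$ to obtain the dual function $d(\lambda_{12},\lambda_{21})$ explicitly, and then differentiates $d$ with respect to the multipliers; after some algebraic manipulation using $(\bm{X}_2^T\bm{X}_2+\lambda_{12}\bm{I})\bm{S}_{12}(\lambda_{12})=\bm{X}_2^T\bm{X}_1$, the derivative collapses to $\tfrac12(\|\bm{S}_{12}(\lambda_{12})\|_F^2-1)$, and setting it to zero gives the boundary condition. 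Your argument bypasses this computation by invoking stationarity and complementary slackness directly, which is shorter and arguably cleaner. What the paper's dual-function calculation buys is an explicit expression for $\partial d/\partial\lambda_{12}$ that could in principle feed a gradient-based search for the optimal multiplier; what your approach buys is economy and a more careful discussion of the active-versus-inactive constraint dichotomy, which the paper's proof does not address at all (it simply sets the derivative to zero without justifying that the maximum of the dual occurs at an interior stationary point rather than at $\lambda_{12}=0$). Your final paragraph on monotonicity of $\|\bm{S}_{12}(\lambda_{12})\|_F$ is exactly the missing ingredient there.
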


\begin{proof}

The dual objective function is 
\begin{equation}\label{mdl: dual function}
\begin{aligned}
d(\lambda_{12}, \lambda_{21}) = \inf_{\bm{S}_{1,2}} &\left\{\frac{1}{2}\Vert\bm{X}_1 - \bm{X}_2\bm{S}_{12}\Vert_F^2 + \frac{1}{2}\Vert\bm{X}_2 - \bm{X}_1\bm{S}_{21}\Vert_F^2\right.\\
& \left.+ \frac{\lambda_{12}}{2}(\Vert\bm{S}_{12}\Vert_F^2 - 1) + \frac{\lambda_{21}}{2}(\Vert\bm{S}_{21}\Vert_F^2 - 1)\right\}
\end{aligned}
\end{equation}
which is an unconstrained least square problem with the following solution.
\begin{equation}
\begin{aligned}
& \bm{S}_{12}(\lambda_{12}) = (\bm{X}_2^T\bm{X}_2 + \lambda_{12}\bm{I}_{n_2})^{-1}\bm{X}_2^T\bm{X}_1\\
& \bm{S}_{21}(\lambda_{21}) = (\bm{X}_1^T\bm{X}_1 + \lambda_{21}\bm{I}_{n_1})^{-1}\bm{X}_1^T\bm{X}_2
\end{aligned}
\end{equation}
where we consider optimization variables $\bm{S}_{12}$ and $\bm{S}_{21}$ as functions of the regularization parameter $\lambda_{12}$ and $\lambda_{21}$. Hence, it has
\begin{equation}
\begin{aligned}
d(\lambda_{12}, \lambda_{21}) = &\frac{1}{2}\Vert\bm{X}_1 - \bm{X}_2\bm{S}_{12}(\lambda_{12})\Vert_F^2 + \frac{1}{2}\Vert\bm{X}_2 - \bm{X}_1\bm{S}_{21}(\lambda_{21})\Vert_F^2\\
& + \frac{\lambda_{12}}{2}(\Vert\bm{S}_{12}(\lambda_{12})\Vert_F^2 - 1) + \frac{\lambda_{21}}{2}(\Vert\bm{S}_{21}(\lambda_{21})\Vert_F^2 - 1)\\
= & \frac{1}{2}\tr(\bm{X}_1^T\bm{X}_1 - 2\bm{X}_1^T\bm{X}_2\bm{S}_{12}(\lambda_{12}) + \bm{S}_{12}^T(\lambda_{12})\bm{X}_2^T\bm{X}_2\bm{S}_{12}(\lambda_{12}))\\
& + \frac{1}{2}\tr(\bm{X}_2^T\bm{X}_2 - 2\bm{X}_2^T\bm{X}_1\bm{S}_{21}(\lambda_{21}) + \bm{S}_{21}^T(\lambda_{21})\bm{X}_1^T\bm{X}_1\bm{S}_{21}(\lambda_{21}))\\
& + \frac{\lambda_{12}}{2}(\tr(\bm{S}_{12}^T(\lambda_{12})\bm{S}_{12}(\lambda_{12})) - 1) + \frac{\lambda_{21}}{2}(\tr(\bm{S}_{21}^T(\lambda_{21})\bm{S}_{21}(\lambda_{21})) - 1)\\
\end{aligned}
\end{equation}

One of the partial derivatives is
\begin{equation}
\begin{aligned}
\frac{\partial d(\lambda_{12}, \lambda_{21})}{\partial \lambda_{12}} = & -\tr(\bm{X}_1^T\bm{X}_2\bm{S}'_{12}(\lambda_{12})) + \tr(\bm{S}_{12}^T(\lambda_{12})\bm{X}_2^T\bm{X}_2\bm{S}_{12}'(\lambda_{12}))\\
& + \frac{1}{2}(\tr(\bm{S}_{12}^T(\lambda_{12})\bm{S}_{12}(\lambda_{12})) - 1) + \lambda_{12}\tr(\bm{S}_{12}^T(\lambda_{12})\bm{S}'_{12}(\lambda_{12}))\\
= & -\tr(\bm{X}_1^T\bm{X}_2\bm{S}'_{12}(\lambda_{12})) + \tr(\bm{S}_{12}^T(\lambda_{12})(\bm{X}_2^T\bm{X}_2 + \lambda_{12}\bm{I}_{n_2})\bm{S}_{12}'(\lambda_{12}))\\
& + \frac{1}{2}(\tr(\bm{S}_{12}^T(\lambda_{12})\bm{S}_{12}(\lambda_{12})) - 1)\\
= & -\tr(\bm{X}_1^T\bm{X}_2\bm{S}'_{12}(\lambda_{12})) + \tr(\bm{X}_1^T\bm{X}_2\bm{S}_{12}'(\lambda_{12}))\\
& + \frac{1}{2}(\tr(\bm{S}_{12}^T(\lambda_{12})\bm{S}_{12}(\lambda_{12})) - 1)\\
= & \frac{1}{2}(\Vert\bm{S}_{12}(\lambda_{12})\Vert_F^2 - 1)
\end{aligned}
\end{equation}

Similarly, the other partial derivative is
\begin{equation}
\begin{aligned}
\frac{\partial d(\lambda_{12}, \lambda_{21})}{\partial \lambda_{21}} = \frac{1}{2}(\Vert\bm{S}_{21}(\lambda_{21})\Vert_F^2 - 1)
\end{aligned}
\end{equation}
Therefore, the optimality conditions for the tightened MRD can be summarized as follows.
\begin{align}
& \bm{S}_{12}(\lambda_{12}) = (\bm{X}_2^T\bm{X}_2 + \lambda_{12}\bm{I}_{n_2})^{-1}\bm{X}_2^T\bm{X}_1\\
& \bm{S}_{21}(\lambda_{21}) = (\bm{X}_1^T\bm{X}_1 + \lambda_{21}\bm{I}_{n_1})^{-1}\bm{X}_1^T\bm{X}_2\\
& \frac{\partial d(\lambda_{12}, \lambda_{21})}{\partial \lambda_{12}} = \frac{1}{2}(\Vert\bm{S}_{12}(\lambda_{12})\Vert_F^2 - 1) = 0\\
& \frac{\partial d(\lambda_{12}, \lambda_{21})}{\partial \lambda_{21}} = \frac{1}{2}(\Vert\bm{S}_{21}(\lambda_{21})\Vert_F^2 - 1) = 0
\end{align}
which are exactly of Theorem \ref{thm: optimality condition of MRD}.

\end{proof}

\section{Proof for Lemma \ref{lem:upper bdd of regularization coefficient}}
\begin{proof}
If $\Vert\bm{S}_{12}(0)\Vert_2 = \Vert(\bm{X}_2^T\bm{X}_2)^{-1}\bm{X}_2^T\bm{X}_1\Vert_2 = 1$ for $\lambda_{12} = 0$, then the proof is done. Assume $\Vert\bm{S}_{12}(0)\Vert_2 = \Vert(\bm{X}_2^T\bm{X}_2)^{-1}\bm{X}_2^T\bm{X}_1\Vert_2 > 1$, note that $\Vert\bm{S}_{12}(\lambda_{12})\Vert_2 = \Vert(\bm{X}_2^T\bm{X}_2 + \lambda_{12}\bm{I}_{n_2})^{-1}\bm{X}_2^T\bm{X}_1\Vert_2 \to \lambda_{12}^{-1}\Vert\bm{X}_2^T\bm{X}_1\Vert_2$ as $\lambda_{12}$ is large enough. Thus, it suffices to assert that there exists $\lambda'\in\mathbb R_+$ such that $\Vert\bm{S}_{12}(\lambda_{12})\Vert_2 < 1$ for all $\lambda_{12} > \lambda_{12}'$. By intermediate value theorem, there exists $c\in[0,\lambda_{12}']$ such that $\Vert\bm{S}_{12}(c)\Vert_2 = \Vert(\bm{X}_2^T\bm{X}_2 + c\bm{I}_{n_2})^{-1}\bm{X}_2^T\bm{X}_1\Vert_2 = 1$. Since $\nabla_{\lambda_{12}}(\Vert\bm{S}_{12}(\lambda_{12})\Vert_2) \ge 0$, \textit{i.e.}, $\Vert\bm{S}_{12}(\lambda_{12})\Vert_2$ is non-decreasing with respect to $\lambda_{12}$ on $[0,\infty)$, $c$ is also unique. We can give a candidate for such a $\lambda_{12}'$ by the property of triangle inequality of matrix 2-norm. Specifically, assume $\sigma_{min}(\bm{X}_2^T\bm{X}_2)$ is the smallest singular value of $\bm{X}_2^T\bm{X}_2$, since
\begin{equation}
\begin{aligned}
& \Vert\bm{S}_{12}(\lambda_{12})\Vert_2 = \Vert(\bm{X}_2^T\bm{X}_2 + \lambda_{12}\bm{I}_{n_2})^{-1}\bm{X}_2^T\bm{X}_1\Vert_2\\
\le & \Vert(\bm{X}_2^T\bm{X}_2 + \lambda_{12}\bm{I}_{n_2})^{-1}\Vert_2\Vert\bm{X}_2^T\bm{X}_1\Vert_2 = \frac{\Vert\bm{X}_2^T\bm{X}_1\Vert_2}{\sigma_{min}(\bm{X}_2^T\bm{X}_2 + \lambda_{12}\bm{I}_{n_2})}\\
= & \frac{\Vert\bm{X}_2^T\bm{X}_1\Vert_2}{\sigma_{min}(\bm{X}_2^T\bm{X}_2) + \lambda_{12}},
\end{aligned}
\end{equation}
we choose $\lambda_{12}'$ such that $\frac{\Vert\bm{X}_2^T\bm{X}_1\Vert_2}{\sigma_{min}(\bm{X}_2^T\bm{X}_2) + \lambda_{12}'} = 1$ (\textit{i.e.}, $\lambda_{12}' = \Vert\bm{X}_2^T\bm{X}_1\Vert_2 - \sigma_{min}(\bm{X}_2^T\bm{X}_2)$), which immediately implies that $\Vert\bm{S}_{12}(\lambda_{12}')\Vert_2 \le 1$. Therefore, by the preceding part, we can conclude that there exists a unique $\lambda_{12} \in [0,\lambda_{12}']$ such that $\Vert\bm{S}_{12}(\lambda_{12})\Vert_2 = 1$.
\end{proof}

\section{Proof for Lemma \ref{subsec: sensitivity to noise}}

\begin{proof}


First, we derive an upper bound on $d(\Tilde{\bm{X}_1}, \Tilde{\bm{X}_2}) = d(\bm{X}_1 + \bm{\Delta}_1, \bm{X}_2 + \bm{\Delta}_2)$.
Let $(\bm{S}_{12}, \bm{S}_{21})$ be the corresponding coefficient matrices in computing $d(\bm{X}_1, \bm{X}_2)$.
We have the following derivations:
\begin{equation}
\begin{aligned}
& d(\bm{X}_1 + \bm{\Delta}_1, \bm{X}_2 + \bm{\Delta}_2)\\
\le & \sqrt{w_1\Vert(\bm{X}_1 + \bm{\Delta}_1) - (\bm{X}_2 + \bm{\Delta}_2)\bm{S}_{12}\Vert_F^2 + w_2\Vert(\bm{X}_2 + \bm{\Delta}_2) - (\bm{X}_1 + \bm{\Delta}_1)\bm{S}_{21}\Vert_F^2}\\
= & \sqrt{\Vert[\sqrt{w_1}(\bm{X}_1 - \bm{X}_2\bm{S}_{12}) + \sqrt{w_1}(\bm{\Delta}_1 - \bm{\Delta}_2\bm{S}_{12}),\sqrt{w_2}(\bm{X}_2 - \bm{X}_1\bm{S}_{21}) + \sqrt{w_2}(\bm{\Delta}_2 - \bm{\Delta}_1\bm{S}_{21})]\Vert_F^2}\\
= & \Vert[\sqrt{w_1}(\bm{X}_1 - \bm{X}_2\bm{S}_{12}) + \sqrt{w_1}(\bm{\Delta}_1 - \bm{\Delta}_2\bm{S}_{12}),\sqrt{w_2}(\bm{X}_2 - \bm{X}_1\bm{S}_{21}) + \sqrt{w_2}(\bm{\Delta}_2 - \bm{\Delta}_1\bm{S}_{21})]\Vert_F\\
= & \Vert[\sqrt{w_1}(\bm{X}_1 - \bm{X}_2\bm{S}_{12}),\sqrt{w_2}(\bm{X}_2 - \bm{X}_1\bm{S}_{21})] + [\sqrt{w_1}(\bm{\Delta}_1 - \bm{\Delta}_2\bm{S}_{12}), \sqrt{w_2}(\bm{\Delta}_2 - \bm{\Delta}_1\bm{S}_{21})]\Vert_F\\
\le & \Vert[\sqrt{w_1}(\bm{X}_1 - \bm{X}_2\bm{S}_{12}),\sqrt{w_2}(\bm{X}_2 - \bm{X}_1\bm{S}_{21})]\Vert_F + \Vert[\sqrt{w_1}(\bm{\Delta}_1 - \bm{\Delta}_2\bm{S}_{12}), \sqrt{w_2}(\bm{\Delta}_2 - \bm{\Delta}_1\bm{S}_{21})]\Vert_F\\
= & d(\bm{X}_1, \bm{X}_2) + \Vert[\sqrt{w_1}(\bm{\Delta}_1 - \bm{\Delta}_2\bm{S}_{12}), \sqrt{w_2}(\bm{\Delta}_2 - \bm{\Delta}_1\bm{S}_{21})]\Vert_F
\end{aligned}
\end{equation}

Next, we derive an upper bound on the above second term. We have
\begin{equation}
\begin{aligned}
& \Vert[\sqrt{w_1}(\bm{\Delta}_1 - \bm{\Delta}_2\bm{S}_{12}), \sqrt{w_2}(\bm{\Delta}_2 - \bm{\Delta}_1\bm{S}_{21})]\Vert_F\\
= & \Vert[\sqrt{w_1}\bm{\Delta}_1, \sqrt{w_2}\bm{\Delta}_2] - [\sqrt{w_1}\bm{\Delta}_2\bm{S}_{12}, \sqrt{w_2}\bm{\Delta}_1\bm{S}_{21}]\Vert_F\\
\le & \Vert[\sqrt{w_1}\bm{\Delta}_1, \sqrt{w_2}\bm{\Delta}_2]\Vert_F + \Vert[\sqrt{w_1}\bm{\Delta}_2\bm{S}_{12}, \sqrt{w_2}\bm{\Delta}_1\bm{S}_{21}]\Vert_F\\
= & \sqrt{w_1\Vert\bm{\Delta}_1\Vert_F^2 + w_2\Vert\bm{\Delta}_2\Vert_F^2} + \sqrt{w_1\Vert\bm{\Delta}_2\bm{S}_{12}\Vert_F^2 + w_2\Vert\bm{\Delta}_1\bm{S}_{21}\Vert_F^2}\\
\le & \sqrt{w_1\Vert\bm{\Delta}_1\Vert_F^2 + w_2\Vert\bm{\Delta}_2\Vert_F^2} + \sqrt{w_1\Vert\bm{\Delta}_2\Vert_F^2\Vert\bm{S}_{12}\Vert_2^2 + w_2\Vert\bm{\Delta}_1\Vert_F^2\Vert\bm{S}_{21}\Vert_2^2}\\
\le & \sqrt{w_1\Vert\bm{\Delta}_1\Vert_F^2 + w_2\Vert\bm{\Delta}_2\Vert_F^2} + \sqrt{w_1\Vert\bm{\Delta}_2\Vert_F^2 + w_2\Vert\bm{\Delta}_1\Vert_F^2}\\
= & 2\sqrt{w_1\Vert\bm{\Delta}_1\Vert_F^2 + w_2\Vert\bm{\Delta}_2\Vert_F^2}
\end{aligned}
\end{equation}
where we got the first inequality by using the subadditivity of Frobenius norm, got the second inequality based on the definition of operator norm, and got the third inequality by using the constraints $\Vert\bm{S}_{12}\Vert_2 \le 1$ and $\Vert\bm{S}_{21}\Vert_2 \le 1$.

By symmetry, we also have from the above result that
\begin{equation}
\begin{aligned}
d(\bm{X}_1, \bm{X}_2) & \le d(\bm{X}_1 + \bm{\Delta}_1, \bm{X}_2 + \bm{\Delta}_2) + 2\sqrt{w_1\Vert-\bm{\Delta}_1\Vert_F^2 + w_2\Vert-\bm{\Delta}_2\Vert_F^2}\\
& = d(\bm{X}_1 + \bm{\Delta}_1, \bm{X}_2 + \bm{\Delta}_2) + 2\sqrt{w_1\Vert\bm{\Delta}_1\Vert_F^2 + w_2\Vert\bm{\Delta}_2\Vert_F^2}\\
\end{aligned}
\end{equation}
This actually gives a lower bound on $d(\bm{X}_1 + \bm{\Delta}_1, \bm{X}_2 + \bm{\Delta}_2)$, i.e.,
\begin{equation}
\begin{aligned}
d(\bm{X}_1 + \bm{\Delta}_1, \bm{X}_2 + \bm{\Delta}_2) \ge  d(\bm{X}_1, \bm{X}_2) - 2\sqrt{w_1\Vert\bm{\Delta}_1\Vert_F^2 + w_2\Vert\bm{\Delta}_2\Vert_F^2}
\end{aligned}
\end{equation}
In summary, it has a range of
\begin{equation}
\begin{aligned}
\left\vert d(\bm{X}_1 + \bm{\Delta}_1, \bm{X}_2 + \bm{\Delta}_2)- d(\bm{X}_1, \bm{X}_2)\right\vert 
\leq  2\sqrt{w_1\Vert\bm{\Delta}_1\Vert_F^2 + w_2\Vert\bm{\Delta}_2\Vert_F^2}
\end{aligned}
\end{equation}

Since $\bm \Delta_1$ and $\bm \Delta_2$ are both unknown, we can only observe $\Tilde{\bm X}_1$ and $\Tilde{\bm X}_2$. Here, we further deduce an upper bound on the pure noise term $2\sqrt{w_1\Vert\bm{\Delta}_1\Vert_F^2 + w_2\Vert\bm{\Delta}_2\Vert_F^2}$. Since
\begin{equation}
\begin{aligned}
2\sqrt{w_1\Vert\bm{\Delta}_1\Vert_F^2 + w_2\Vert\bm{\Delta}_2\Vert_F^2} & = 2\sqrt{w_1\sigma^2\frac{1}{\sigma^2}\Vert\bm{\Delta}_1\Vert_F^2 + w_2\sigma^2\frac{1}{\sigma^2}\Vert\bm{\Delta}_2\Vert_F^2}\\
& = 2\sqrt{w_1\sigma^2\sum_{i = 1}^{n_1}\sum_{j = 1}^{n_1}\left(\frac{e_{i,j}^{(1)}}{\sigma}\right)^2 + w_2\sigma^2\sum_{i = 1}^{n_2}\sum_{j = 1}^{n_2}\left(\frac{e_{i,j}^{(2)}}{\sigma}\right)^2},
\end{aligned}
\end{equation}
where we know $\frac{1}{\sigma^2}\Vert\bm{\Delta}_1\Vert_F^2 = \sum_{i = 1}^m\sum_{j = 1}^{n_1}\left(\frac{e_{i,j}^{(1)}}{\sigma}\right)^2\sim\chi_{mn_1}^2$ and $\frac{1}{\sigma^2}\Vert\bm{\Delta}_2\Vert_F^2 = \sum_{i = 1}^m\sum_{j = 1}^{n_2}\left(\frac{e_{i,j}^{(2)}}{\sigma}\right)^2 \sim\chi_{mn_2}^2$ by the assumption. According to \cite{laurent2000adaptive}, we know that for any positive $t$, 
\begin{equation}
\begin{aligned}
& \mathbb{P}\left(\frac{1}{\sigma^2}\Vert\bm{\Delta}_1\Vert_F^2 > mn_1 + 2\sqrt{m n_1 t} + 2t\right) \le 1 - e^{-t}\\
& \mathbb{P}\left(\frac{1}{\sigma^2}\Vert\bm{\Delta}_2\Vert_F^2 > mn_2 + 2\sqrt{m n_2 t} + 2t\right) \le 1 - e^{-t}
\end{aligned}
\end{equation}
with $t\in\mathbb R_+$. Thus, one has with probability at least $1 - 2e^{-t} + e^{-2t}$
\begin{equation}
\begin{aligned}
& 2\sqrt{w_1\Vert\bm{\Delta}_1\Vert_F^2 + w_2\Vert\bm{\Delta}_2\Vert_F^2}\\
= & 2\sqrt{w_1\sigma^2\frac{1}{\sigma^2}\Vert\bm{\Delta}_1\Vert_F^2 + w_2\sigma^2\frac{1}{\sigma^2}\Vert\bm{\Delta}_2\Vert_F^2}\\
\le & 2\sqrt{w_1\sigma^2(mn_1 + 2\sqrt{m n_1 t} + 2t) + w_2\sigma^2(mn_2 + 2\sqrt{m n_2 t} + 2t)}\\
\le & 2\left\{w_1\sigma^2(m\max\{n_1,n_2\} + 2\sqrt{m\max\{n_1,n_2\}t} + 2t)\right.\\
& \left.+ w_2\sigma^2(m\max\{n_1,n_2\} + 2\sqrt{m\max\{n_1,n_2\}t} + 2t)\right\}^{1/2}\\
= & 2\sqrt{(w_1 + w_2)\sigma^2\left(m\max\{n_1,n_2\} + 2\sqrt{m\max\{n_1,n_2\}t} + 2t\right)}
\end{aligned}
\end{equation}

Setting $\xi_{m, n_1,n_2} = \sqrt{m\max\{n_1,n_2\} + 2\sqrt{m\max\{n_1,n_2\}t} + 2t}$, one immediately has
\begin{equation}
\begin{aligned}
\left|d(\bm{X}_1 + \bm{\Delta}_1, \bm{X}_2 + \bm{\Delta}_2) - d(\bm{X}_1, \bm{X}_2)\right| \le 2\sigma\xi_{m, n_1,n_2}\sqrt{w_1 + w_2}
\end{aligned}
\end{equation}

\end{proof}

\section{Proof of Lemma \ref{lem_perturbation_error_on_kernel_matrix}}

\begin{proof}
Since $\Tilde{\bm X}_1 = \bm X_1 + \bm \Delta_1$ and $\Tilde{\bm X}_2 = \bm X_2 + \bm \Delta_2$, one has
\begin{equation}\label{eq_error_bdd}
\begin{aligned}
& \left\Vert\mathcal{K}(\Tilde{\bm X}_1, \Tilde{\bm X}_2) - \mathcal{K}(\bm X_1, \bm X_2)\right\Vert_\infty \\
= & \max_{i,j}\left\vert\mathcal{K}(\Tilde{\bm x}_i^{(1)}, \Tilde{\bm x}_j^{(2)})- \mathcal{K}(\bm x_i^{(1)}, \bm x_j^{(2)})\right\vert\\
= & \max_{i,j}\left\vert\exp{\left(-\frac{\left\Vert(\bm x_i^{(1)} + \bm e_i^{(1)}) - (\bm x_j^{(2)} + \bm e_j^{(2)})\right\Vert_2^2}{2r^2}\right)} - \exp{\left(-\frac{\left\Vert\bm x_i^{(1)} - \bm x_j^{(2)}\right\Vert_2^2}{2r^2}\right)} \right\vert \\
\le & \max_{i,j}\frac{1}{2r^2} \left\vert-\left\Vert(\bm x_i^{(1)} + \bm e_i^{(1)}) - (\bm x_j^{(2)} + \bm e_j^{(2)})\right\Vert_2^2 + \left\Vert\bm x_i^{(1)} - \bm x_j^{(2)}\right\Vert_2^2 \right\vert \\
= & \max_{i,j}\frac{1}{2r^2} \left\vert\left\Vert(\bm x_i^{(1)} - \bm x_j^{(2)}) + (\bm e_i^{(1)} - \bm e_j^{(2)})\right\Vert_2^2 - \left\Vert\bm x_i^{(1)} - \bm x_j^{(2)}\right\Vert_2^2 \right\vert \\
= & \max_{i,j}\frac{1}{2r^2} \left\vert\left\Vert\bm e_i^{(1)} - \bm e_j^{(2)}\right\Vert_2^2 + 2\langle \bm x_i^{(1)} - \bm x_j^{(2)}, \bm e_i^{(1)} - \bm e_j^{(2)} \rangle \right\vert \\
\le & \max_{i,j}\frac{1}{2r^2} \left(\left\Vert\bm e_i^{(1)} - \bm e_j^{(2)}\right\Vert_2^2 + 2\left\Vert\bm x_i^{(1)} - \bm x_j^{(2)}\right\Vert_2 \left\Vert\bm e_i^{(1)} - \bm e_j^{(2)}\right\Vert_2\right),
\end{aligned}
\end{equation}
where for the first inequality we have used the fact that the exponential function is locally Lipschitz continuous, i.e., 
\begin{equation*}
|e^x - e^y| < |x - y| \text{ for } x,y < 0.
\end{equation*}
Note that 
\begin{equation}\begin{aligned}
\left\Vert\bm e_i^{(1)} - \bm e_j^{(2)}\right\Vert_2^2 = \sum_{k = 1}^m(e_{k,i}^{(1)} - e_{k,j}^{(2)})^2 = 2\sigma^2\sum_{k = 1}^m\left(\frac{e_{k,i}^{(1)} - e_{k,j}^{(2)}}{\sqrt{2}\sigma}\right)^2
\end{aligned}\end{equation}
for which $e_{k,i}^{(1)}$ and $e_{k,j}^{(2)}$ denote the $(k,i)$-entry of $\bm\Delta_1$ and $(k,j)$-entry of $\bm\Delta_2$, respectively. Since $\frac{e_{k,i}^{(1)} - e_{k,j}^{(2)}}{\sqrt{2}\sigma}\sim\mathcal{N}(0,1)$, we define a random variable here
\begin{equation}\begin{aligned}
Q = \sum_{k = 1}^m\left(\frac{e_{k,i}^{(1)} - e_{k,j}^{(2)}}{\sqrt{2}\sigma}\right)^2\sim\chi_m^2.
\end{aligned}\end{equation}
From \citep{laurent2000adaptive}, one knows that the Chi-squared variable $Q$ satisfies
\begin{equation}\begin{aligned}
\mathbb P(Q > m + 2\sqrt{mt} + 2t) \le 1 - e^{-t}
\end{aligned}\end{equation}
with $t\in\mathbb R_+$. Thus, one can bound $\left\Vert\bm e_i^{(1)} - \bm e_j^{(2)}\right\Vert_2^2$ with probability $1 - e^{-t}$ as
\begin{equation}\begin{aligned}
\left\Vert\bm e_i^{(1)} - \bm e_j^{(2)}\right\Vert_2^2 = 2\sigma^2 Q \le 2\sigma^2(m + 2\sqrt{mt} + 2t)
\end{aligned}\end{equation}
which implies the union bound
\begin{equation}\begin{aligned}
\max_{i,j}\left\Vert\bm e_i^{(1)} - \bm e_j^{(2)}\right\Vert_2^2 = 2\sigma^2 Q \le 2\sigma^2(m + 2\sqrt{mt} + 2t)
\end{aligned}\end{equation}
holds with probability at least $1 - n_1n_2e^{-t}$. Since $\forall i\in[n_1],j\in[n_2], \left\Vert\bm x_i^{(1)} - \bm x_j^{(2)}\right\Vert_2 \le \Vert\mathcal{D}(\bm X_1,\bm X_2)\Vert_\infty$ where $\mathcal{D}(\bm X_1,\bm X_2)$ is the pairwise distance between $\bm X_1$ and $\bm X_2$, and let $\xi_m = \sqrt{m + 2\sqrt{mt} + 2t}$, it follows that with probability at least $1 - n_1n_2e^{-t}$
\begin{equation}\begin{aligned}
\left\Vert\mathcal{K}(\Tilde{\bm X}_1, \Tilde{\bm X}_2) - \mathcal{K}(\bm X_1, \bm X_2)\right\Vert_\infty & \le \frac{1}{2r^2}\left[2\sigma^2\xi_m^2 + 2\Vert\mathcal{D}(\bm X_1,\bm X_2)\Vert_\infty\sqrt{2}\sigma\xi_m\right]\\
& = \frac{1}{r^2}\left[\sigma^2\xi_m^2 + 2\frac{\Vert\mathcal{D}(\bm X_1,\bm X_2)\Vert_\infty}{\sqrt{2}}\sigma\xi_m\right]\\
& = \frac{1}{r^2}\left[\left(\sigma\xi_m + \frac{\Vert\mathcal{D}(\bm X_1,\bm X_2)\Vert_\infty}{\sqrt{2}}\right)^2 - \frac{\Vert\mathcal{D}(\bm X_1,\bm X_2)\Vert_\infty^2}{2}\right]
\end{aligned}\end{equation}
as desired.
\end{proof}

\section{Proof of Theorem \ref{thm_perturbation_analysis_2}}
To derive an upper bound on the perturbed MRD, some necessary lemmas are used here.
\begin{lemma}\label{lem_order_property_1}
Given $\bm A,\bm B\in\mathcal{M}_{m,n}$ and $\bm C\in\mathcal{M}_{n,p}(\mathbb R_+)$, if $\bm A \le \bm B$ (\textit{i.e.}, $\forall i\in[m],j\in[n], A_{i,j}\le B_{i,j}$), then it holds that
\begin{equation}\begin{aligned}
\bm A\bm C \le \bm B\bm C
\end{aligned}\end{equation}
\end{lemma}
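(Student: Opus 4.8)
The plan is to establish $\bm A\bm C \le \bm B\bm C$ entrywise, since the partial order on matrices invoked here is defined componentwise. First I would fix an arbitrary row index $i \in [m]$ and column index $k \in [p]$ and write the corresponding entries of the two products explicitly as matrix--vector sums,
\begin{equation}
(\bm A\bm C)_{ik} = \sum_{j=1}^n A_{ij}C_{jk}, \qquad (\bm B\bm C)_{ik} = \sum_{j=1}^n B_{ij}C_{jk}.
\end{equation}

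Next I would compare the two sums term by term. The hypothesis $\bm A \le \bm B$ supplies $A_{ij} \le B_{ij}$ for every $j$, while the assumption $\bm C \in \mathcal{M}_{n,p}(\mathbb R_+)$ supplies $C_{jk} \ge 0$. Because multiplying a scalar inequality by a nonnegative factor preserves its direction, I obtain $A_{ij}C_{jk} \le B_{ij}C_{jk}$ for each $j \in [n]$. Summing these $n$ inequalities over $j$ and using monotonicity of finite sums gives $(\bm A\bm C)_{ik} \le (\bm B\bm C)_{ik}$. Since $i$ and $k$ were chosen arbitrarily, the inequality holds for every entry, which is precisely the assertion $\bm A\bm C \le \bm B\bm C$.

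There is no genuine obstacle in this argument; the single point that must be handled with care is the nonnegativity of $\bm C$, which is exactly what licenses the termwise multiplication step. Were $\bm C$ permitted to contain a negative entry $C_{jk} < 0$, multiplying $A_{ij} \le B_{ij}$ by it would reverse the inequality and the conclusion could fail, so this hypothesis is essential rather than decorative. I would therefore emphasize it explicitly at the multiplication step and otherwise keep the proof to the short entrywise computation above.
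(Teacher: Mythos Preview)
Your proof is correct and is essentially the same as the paper's, just more explicit: the paper compresses your entrywise computation into the single line $\bm A\bm C - \bm B\bm C = (\bm A - \bm B)\bm C \le \bm O$, which is justified precisely by the termwise argument you spell out.
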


\begin{proof}
By the assumption, one can verify $\bm A\bm C - \bm B\bm C = (\bm A - \bm B)\bm C \le \bm O$ where $\bm O\in\mathcal{M}_{m,p}(\mathbb R)$ is the zero matrix.
\end{proof}

\begin{lemma}\label{lem_order_property_2}
Given $\bm A,\bm B\in\mathcal{M}_{m,n}$ and $\bm C\in\mathcal{M}_{p_1,n}(\mathbb R_+),\bm D\in\mathcal{M}_{n,p_2}(\mathbb R_+)$, if $\bm A \le \bm B$ (\textit{i.e.}, $\forall i\in[m],j\in[n], A_{i,j}\le B_{i,j}$), then it holds that
\begin{equation}\begin{aligned}
\bm C\bm A\bm D \le \bm C\bm B\bm D
\end{aligned}\end{equation}
\end{lemma}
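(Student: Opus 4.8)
The plan is to reuse, essentially verbatim, the one-line mechanism behind Lemma~\ref{lem_order_property_1}: multiplication by a nonnegative matrix preserves the entrywise order. First I would form the difference $\bm B - \bm A$, which by the hypothesis $\bm A \le \bm B$ satisfies $\bm B - \bm A \ge \bm O$ entrywise, and observe that it suffices to prove $\bm C \bm B \bm D - \bm C \bm A \bm D = \bm C(\bm B - \bm A)\bm D \ge \bm O$ whenever $\bm C$ and $\bm D$ have nonnegative entries.

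The cleanest route is a direct entrywise computation. For each index pair $(i,j)$,
\begin{equation}
\left[\bm C(\bm B - \bm A)\bm D\right]_{i,j} = \sum_{k}\sum_{l} C_{i,k}\,(B_{k,l} - A_{k,l})\,D_{l,j},
\end{equation}
and since each factor satisfies $C_{i,k}\ge 0$, $B_{k,l} - A_{k,l}\ge 0$, and $D_{l,j}\ge 0$, every summand is nonnegative; hence the whole entry is nonnegative. This yields $\bm C\bm A\bm D \le \bm C\bm B\bm D$ immediately.

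Alternatively, I would bootstrap from Lemma~\ref{lem_order_property_1} in two steps, which avoids reintroducing the sum. Applying that lemma with the nonnegative right factor $\bm D$ gives $\bm A\bm D \le \bm B\bm D$. Transposing (entrywise order is invariant under transposition) turns left-multiplication by $\bm C$ into right-multiplication by $\bm C^\top$, which is again nonnegative, so a second application of Lemma~\ref{lem_order_property_1} gives $(\bm A\bm D)^\top \bm C^\top \le (\bm B\bm D)^\top \bm C^\top$, i.e.\ $(\bm C\bm A\bm D)^\top \le (\bm C\bm B\bm D)^\top$; transposing back recovers the claim.

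There is no genuine obstacle here, as both arguments are one-liners; the only point needing care is dimensional bookkeeping. As stated the products $\bm C\bm A$ and $\bm A\bm D$ are conformable only if $\bm C \in \mathcal{M}_{p_1,m}(\mathbb R_+)$ (so that $\bm C$ has as many columns as $\bm A$ has rows), which I read as the intended hypothesis in place of $\mathcal{M}_{p_1,n}(\mathbb R_+)$; I would state the lemma with that correction before running either proof. With the shapes aligned, the entrywise argument or the two transposed applications of Lemma~\ref{lem_order_property_1} finish the proof.
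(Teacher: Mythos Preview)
Your proposal is correct and follows exactly the paper's approach: the paper's proof is the single line ``one can verify $\bm C\bm A\bm D - \bm C\bm B\bm D = \bm C(\bm A - \bm B)\bm D \le \bm O$,'' which is precisely your entrywise nonnegativity argument stated without the explicit double sum. Your additional observation that the hypothesis should read $\bm C\in\mathcal{M}_{p_1,m}(\mathbb R_+)$ for the product $\bm C\bm A$ to be conformable is a genuine typo in the paper's statement.
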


\begin{proof}
By the assumption, one can verify $\bm C\bm A\bm D - \bm C\bm B\bm D = \bm C(\bm A - \bm B)\bm D \le \bm O$ where $\bm O\in\mathcal{M}_{p_1,p_2}(\mathbb R)$ is the zero matrix.
\end{proof}

\begin{lemma}\label{lem_order_property_3}
Given $a,b\in\mathbb R_+$,
\begin{equation}\begin{aligned}
\sqrt{a + b} \le \sqrt{a} + \sqrt{b}
\end{aligned}\end{equation}
\end{lemma}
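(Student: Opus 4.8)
The plan is to reduce the claim to an inequality between squares, which is legitimate precisely because both sides are non-negative. First I would record that, since $a,b\in\mathbb R_+$, the quantities $\sqrt{a+b}$ and $\sqrt a+\sqrt b$ are both well-defined and lie in $[0,\infty)$. Squaring the left-hand side yields $a+b$, while squaring the right-hand side yields $a+2\sqrt{ab}+b$, where the cross term $\sqrt{ab}$ is real because $ab\ge 0$.

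Next I would compare these two squared expressions directly. Their difference is $(\sqrt a+\sqrt b)^2-(\sqrt{a+b})^2 = 2\sqrt{ab}\ge 0$, since $a,b\ge 0$ forces $\sqrt{ab}\ge 0$. Hence $(\sqrt{a+b})^2 \le (\sqrt a+\sqrt b)^2$. Finally, invoking the fact that the map $x\mapsto\sqrt x$ is non-decreasing on $[0,\infty)$ together with the non-negativity of both bases, I may take square roots on both sides without reversing the inequality, obtaining $\sqrt{a+b}\le\sqrt a+\sqrt b$ as desired.

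There is essentially no substantive obstacle here: the lemma is an elementary subadditivity property of the square root. The only point requiring any care is the sign bookkeeping, namely checking that $a,b\ge0$ guarantees both that the cross term $2\sqrt{ab}$ is non-negative and that the squaring/square-rooting steps are order-preserving; both are immediate from $a,b\in\mathbb R_+$, so the argument is complete in the three short steps above.
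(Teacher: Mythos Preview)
Your proof is correct and follows essentially the same approach as the paper's own proof: both compute $(\sqrt a+\sqrt b)^2-(\sqrt{a+b})^2=2\sqrt{ab}\ge 0$ and then invoke the monotonicity of the square root to conclude. The only difference is that you spell out the non-negativity bookkeeping a bit more explicitly, which is fine but not strictly needed.
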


\begin{proof}
Since $(\sqrt{a} + \sqrt{b})^2 - (\sqrt{a + b})^2 = a + b + 2\sqrt{ab} - (a + b) = 2\sqrt{ab} \ge 0$, one immediately has $\sqrt{a} + \sqrt{b} \ge \sqrt{a + b}$ by the monotonicity of $f(x) = \sqrt{x}$.
\end{proof}

\begin{proof}[Proof of Theorem \ref{thm_perturbation_analysis_2}]\hfill

Since
\begin{equation}\begin{aligned}
& d(\bm X_1 + \bm\Delta_1, \bm X_2 + \bm\Delta_2)\\
\le & \sqrt{w_1\Vert\phi(\bm{X}_1 + \bm{\Delta}_1) - \phi(\bm{X}_2 + \bm{\Delta}_2)\bm{S}_{12}\Vert_F^2 + w_2\Vert\phi(\bm{X}_2 + \bm{\Delta}_2) - \phi(\bm{X}_1 + \bm{\Delta}_1)\bm{S}_{21}\Vert_F^2}\\
= & \left\{w_1\text{Tr}(\underbrace{\mathcal{K}(\bm X_1 + \bm\Delta_1,\bm X_1 + \bm\Delta_1)}_{=T_1} \underbrace{- 2\mathcal{K}(\bm X_1 + \bm\Delta_1,\bm X_2 + \bm\Delta_2)\bm S_{12}}_{=T_2} + \underbrace{\bm S_{12}^T\mathcal{K}(\bm X_2 + \bm\Delta_2,\bm X_2 + \bm\Delta_2)\bm S_{12})}_{=T_3}\right.\\
& \hspace{1em}\left.w_2\text{Tr}(\underbrace{\mathcal{K}(\bm X_2 + \bm\Delta_2,\bm X_2 + \bm\Delta_2)}_{=T4} \underbrace{- 2\mathcal{K}(\bm X_2 + \bm\Delta_2,\bm X_1 + \bm\Delta_1)\bm S_{12}}_{=T5} + \underbrace{\bm S_{21}^T\mathcal{K}(\bm X_1 + \bm\Delta_1,\bm X_1 + \bm\Delta_1)\bm S_{21})}_{=T6}\right\}^{1/2},
\end{aligned}\end{equation}
we provide some upper bounds on $T_i,i\in[6]$ as follows.

By Lemma \ref{lem_perturbation_error_on_kernel_matrix}, with probability at least $1 - n_1^2e^{-t}$, one has
\begin{equation}\begin{aligned}
\left\Vert\mathcal{K}\left(\bm X_1 + \bm\Delta_1,\bm X_1 + \bm\Delta_1\right) - \mathcal{K}(\bm X_1,\bm X_1)\right\Vert_\infty \le \varepsilon_{11}
\end{aligned}\end{equation}
where $\varepsilon_{11} = \frac{1}{r^2}\left[\left(\sigma\xi + \frac{\Vert\mathcal{D}(\bm X_1,\bm X_1)\Vert_\infty}{\sqrt{2}}\right)^2 - \frac{\Vert\mathcal{D}(\bm X_1,\bm X_1)\Vert_\infty^2}{2}\right]$. It implies that 
\begin{equation}\begin{aligned}
\mathcal{K}(\bm X_1,\bm X_1) - \varepsilon_{11}\mathbb I_{n_1}\mathbb I_{n_1}^T \le \mathcal{K}(\bm X_1 + \bm\Delta_1,\bm X_1 + \bm\Delta_1) \le \mathcal{K}(\bm X_1,\bm X_1) + \varepsilon_{11}\mathbb I_{n_1}\mathbb I_{n_1}^T
\end{aligned}\end{equation}
Analogously, with probability at least $1 - n_2^2e^{-t}$, $T_4$ admits
\begin{equation}\begin{aligned}
\mathcal{K}(\bm X_2,\bm X_2) - \varepsilon_{22}\mathbb I_{n_2}\mathbb I_{n_2}^T \le \mathcal{K}(\bm X_2 + \bm\Delta_2,\bm X_2 + \bm\Delta_2) \le \mathcal{K}(\bm X_2,\bm X_2) + \varepsilon_{22}\mathbb I_{n_2}\mathbb I_{n_2}^T
\end{aligned}\end{equation}
where $\varepsilon_{22} = \frac{1}{r^2}\left[\left(\sigma\xi + \frac{\Vert\mathcal{D}(\bm X_2,\bm X_2)\Vert_\infty}{\sqrt{2}}\right)^2 - \frac{\Vert\mathcal{D}(\bm X_2,\bm X_2)\Vert_\infty^2}{2}\right]$.

In terms of $T_2$, with probability at least $1 - n_1n_2e^{-t}$, we have by Lemma \ref{lem_perturbation_error_on_kernel_matrix} that 
\begin{equation}\begin{aligned}
\mathcal{K}(\bm X_1,\bm X_2) - \varepsilon_{12}\mathbb I_{n_1}\mathbb I_{n_2}^T \le \mathcal{K}(\bm X_1 + \bm\Delta_1,\bm X_2 + \bm\Delta_2) \le \mathcal{K}(\bm X_1,\bm X_2) + \varepsilon_{12}\mathbb I_{n_1}\mathbb I_{n_2}^T
\end{aligned}\end{equation}
where $\varepsilon_{12} = \frac{1}{r^2}\left[\left(\sigma\xi + \frac{\Vert\mathcal{D}(\bm X_1,\bm X_2)\Vert_\infty}{\sqrt{2}}\right)^2 - \frac{\Vert\mathcal{D}(\bm X_1,\bm X_2)\Vert_\infty^2}{2}\right]$

However, since both $\bm S_{12}$ and $\bm S_{21}$ may have positive or negative entries, it is a little difficult to directly use Lemma \ref{lem_perturbation_error_on_kernel_matrix} to deduce some bounds on $T_2$ and $T_5$. One way introduced here is to use the decomposition that $\bm S = \bm S^+ - \bm S^-$ for any given matrix $\bm S\in\mathcal{M}_{m,n}(\mathbb R)$ where $\bm S^+ = [\max(0,S_{i,j})]_{i\in[m],j\in[n]}$ and $\bm S^- = [\max(0,- S_{i,j})]_{i\in[m],j\in[n]}$. Clearly, both $\bm S^+$ and $\bm S^-$ are nonnegative matrices. Besides, one can verify $[|S_{i,j}|]_{i\in[m],j\in[n]} = |\bm S| = \bm S^+ + \bm S^-$.

That is, we have
\begin{equation}\begin{aligned}
\bm S_{12} = \bm S_{12}^+ - \bm S_{12}^- \quad\text{ and }\quad \bm S_{21} = \bm S_{21}^+ - \bm S_{21}^-
\end{aligned}\end{equation}
where $\bm S_{12}^+,\bm S_{12}^-,\bm S_{21}^+,\bm S_{21}^-$ are all nonnegative matrices.

Then, it follows from Lemmas \ref{lem_perturbation_error_on_kernel_matrix} and \ref{lem_order_property_1} that
\begin{equation}\begin{aligned}
& -2\mathcal{K}(\bm X_1 + \bm\Delta_1,\bm X_2 + \bm\Delta_2)\bm S_{12}\\
= & -2\mathcal{K}(\bm X_1 + \bm\Delta_1,\bm X_2 + \bm\Delta_2)(\bm S_{12}^+ - \bm S_{12}^-)\\
= & -2\mathcal{K}(\bm X_1 + \bm\Delta_1,\bm X_2 + \bm\Delta_2)\bm S_{12}^+ + 2\mathcal{K}(\bm X_1 + \bm\Delta_1,\bm X_2 + \bm\Delta_2)\bm S_{12}^-\\
\le & -2(\mathcal{K}(\bm X_1,\bm X_2) - \varepsilon_{12}\mathbb I_{n_1}\mathbb I_{n_2}^T)\bm S_{12}^+ + 2(\mathcal{K}(\bm X_1,\bm X_2) + \varepsilon_{12}\mathbb I_{n_1}\mathbb I_{n_2}^T)\bm S_{12}^-\\
= & -2\mathcal{K}(\bm X_1,\bm X_2)\bm S_{12}^+ + 2\varepsilon_{12}\mathbb I_{n_1}\mathbb I_{n_2}^T\bm S_{12}^+ + 2\mathcal{K}(\bm X_1,\bm X_2)\bm S_{12}^- + 2\varepsilon_{12}\mathbb I_{n_1}\mathbb I_{n_2}^T\bm S_{12}^-\\
= & -2\mathcal{K}(\bm X_1,\bm X_2)(\bm S_{12}^+ - \bm S_{12}^-) + 2\varepsilon_{12}\mathbb I_{n_1}\mathbb I_{n_2}^T(\bm S_{12}^+ + \bm S_{12}^-)\\
= & -2\mathcal{K}(\bm X_1,\bm X_2)\bm S_{12} + 2\varepsilon_{12}\mathbb I_{n_1}\mathbb I_{n_2}^T\left|\bm S_{12}\right|
\end{aligned}\end{equation}
where we used the fact that for a nonnegative $\bm C$, $\bm B_l\bm C \le \bm A\bm C \le \bm B_u\bm C$ if $\bm B_l \le \bm A \le \bm B_u$.

Analogously, $T_5$ admits
\begin{equation}\begin{aligned}
- 2\mathcal{K}(\bm X_2 + \bm\Delta_2,\bm X_1 + \bm\Delta_1)\bm S_{12} \le -2\mathcal{K}(\bm X_2,\bm X_1)\bm S_{21} + 2\varepsilon_{21}\mathbb I_{n_2}\mathbb I_{n_1}^T\left|\bm S_{21}\right|
\end{aligned}\end{equation}
where $\varepsilon_{21} = \frac{1}{r^2}\left[\left(\sigma\xi + \frac{\Vert\mathcal{D}(\bm X_2,\bm X_1)\Vert_\infty}{\sqrt{2}}\right)^2 - \frac{\Vert\mathcal{D}(\bm X_2,\bm X_1)\Vert_\infty^2}{2}\right]$. By observing that $\Vert\mathcal{D}(\bm X_1,\bm X_2)\Vert_\infty = \Vert\mathcal{D}(\bm X_2,\bm X_1)\Vert_\infty$, we also have $\varepsilon_{12} = \varepsilon_{21}$.

For $T_3$, it follows from Lemma \ref{lem_perturbation_error_on_kernel_matrix} that
\begin{equation}\begin{aligned}
& \bm S_{12}^T\mathcal{K}(\bm X_2 + \bm\Delta_2,\bm X_2 + \bm\Delta_2)\bm S_{12}\\
= & (\bm S_{12}^+ - \bm S_{12}^-)^T\mathcal{K}(\bm X_2 + \bm\Delta_2,\bm X_2 + \bm\Delta_2)(\bm S_{12}^+ - \bm S_{12}^-)\\
= & ((\bm S_{12}^+)^T - (\bm S_{12}^-)^T)\mathcal{K}(\bm X_2 + \bm\Delta_2,\bm X_2 + \bm\Delta_2)(\bm S_{12}^+ - \bm S_{12}^-)\\
= & \underbrace{(\bm S_{12}^+)^T\mathcal{K}(\bm X_2 + \bm\Delta_2,\bm X_2 + \bm\Delta_2)\bm S_{12}^+}_{=T_7} \underbrace{- (\bm S_{12}^+)^T\mathcal{K}(\bm X_2 + \bm\Delta_2,\bm X_2 + \bm\Delta_2)\bm S_{12}^-}_{=T_8}\\
& \hspace{1em} \underbrace{- (\bm S_{12}^-)^T\mathcal{K}(\bm X_2 + \bm\Delta_2,\bm X_2 + \bm\Delta_2)\bm S_{12}^+}_{=T_9} + \underbrace{(\bm S_{12}^-)^T\mathcal{K}(\bm X_2 + \bm\Delta_2,\bm X_2 + \bm\Delta_2)\bm S_{12}^-}_{=T_{10}}
\end{aligned}\end{equation}

For $T_7$, it follows from Lemmas \ref{lem_perturbation_error_on_kernel_matrix} and \ref{lem_order_property_2} that
\begin{equation}\begin{aligned}
& (\bm S_{12}^+)^T\mathcal{K}(\bm X_2 + \bm\Delta_2,\bm X_2 + \bm\Delta_2)\bm S_{12}^+\\
\le & (\bm S_{12}^+)^T(\mathcal{K}(\bm X_2,\bm X_2) + \varepsilon_{22}\mathbb I_{n_2}\mathbb I_{n_2}^T)\bm S_{12}^+\\
= & (\bm S_{12}^+)^T\mathcal{K}(\bm X_2,\bm X_2)\bm S_{12}^+ + \varepsilon_{22}(\bm S_{12}^+)^T\mathbb I_{n_2}\mathbb I_{n_2}^T\bm S_{12}^+
\end{aligned}\end{equation}
Analogously, $T_{10}$ admits
\begin{equation}\begin{aligned}
(\bm S_{12}^-)^T\mathcal{K}(\bm X_2 + \bm\Delta_2,\bm X_2 + \bm\Delta_2)\bm S_{12}^- \le (\bm S_{12}^-)^T\mathcal{K}(\bm X_2,\bm X_2)\bm S_{12}^- + \varepsilon_{22}(\bm S_{12}^-)^T\mathbb I_{n_2}\mathbb I_{n_2}^T\bm S_{12}^-.
\end{aligned}\end{equation}
In terms of $T_8$, one has
\begin{equation}\begin{aligned}
& - (\bm S_{12}^+)^T\mathcal{K}(\bm X_2 + \bm\Delta_2,\bm X_2 + \bm\Delta_2)\bm S_{12}^-\\
\le & - (\bm S_{12}^+)^T(\mathcal{K}(\bm X_2,\bm X_2) - \varepsilon_{22}\mathbb I_{n_2}\mathbb I_{n_2}^T)\bm S_{12}^-\\
= & - (\bm S_{12}^+)^T\mathcal{K}(\bm X_2,\bm X_2)\bm S_{12}^- + \varepsilon_{22}(\bm S_{12}^+)^T \mathbb I_{n_2}\mathbb I_{n_2}^T\bm S_{12}^-\\
\end{aligned}\end{equation}
Analogously, $T_{9}$ admits
\begin{equation}\begin{aligned}
- (\bm S_{12}^-)^T\mathcal{K}(\bm X_2 + \bm\Delta_2,\bm X_2 + \bm\Delta_2)\bm S_{12}^+ \le - (\bm S_{12}^-)^T\mathcal{K}(\bm X_2,\bm X_2)\bm S_{12}^+ + \varepsilon_{22}(\bm S_{12}^-)^T \mathbb I_{n_2}\mathbb I_{n_2}^T\bm S_{12}^+
\end{aligned}\end{equation}
Thus, for $T_{3}$, we have
\begin{equation}\begin{aligned}
& \bm S_{12}^T\mathcal{K}(\bm X_2 + \bm\Delta_2,\bm X_2 + \bm\Delta_2)\bm S_{12}\\
\le & (\bm S_{12}^+)^T\mathcal{K}(\bm X_2,\bm X_2)\bm S_{12}^+ + \varepsilon_{22}(\bm S_{12}^+)^T\mathbb I_{n_2}\mathbb I_{n_2}^T\bm S_{12}^+\\
& - (\bm S_{12}^+)^T\mathcal{K}(\bm X_2,\bm X_2)\bm S_{12}^- + \varepsilon_{22}(\bm S_{12}^+)^T \mathbb I_{n_2}\mathbb I_{n_2}^T\bm S_{12}^-\\
& - (\bm S_{12}^-)^T\mathcal{K}(\bm X_2,\bm X_2)\bm S_{12}^+ + \varepsilon_{22}(\bm S_{12}^-)^T \mathbb I_{n_2}\mathbb I_{n_2}^T\bm S_{12}^+\\
& + (\bm S_{12}^-)^T\mathcal{K}(\bm X_2,\bm X_2)\bm S_{12}^- + \varepsilon_{22}(\bm S_{12}^-)^T\mathbb I_{n_2}\mathbb I_{n_2}^T\bm S_{12}^-\\
= & (\bm S_{12}^+ - \bm S_{12}^-)^T\mathcal{K}(\bm X_2,\bm X_2)(\bm S_{12}^+ - \bm S^-)\\
& + \varepsilon_{22}(\bm S_{12}^+ + \bm S_{12}^-)^T\mathbb I_{n_2}\mathbb I_{n_2}^T(\bm S_{12}^+ + \bm S^-)\\
= & \bm S_{12}^T\mathcal{K}(\bm X_2,\bm X_2)\bm S_{12} + \varepsilon_{22}|\bm S_{12}|^T\mathbb I_{n_2}\mathbb I_{n_2}^T|\bm S_{12}|
\end{aligned}\end{equation}
Analogously, $T_6$ admits
\begin{equation}\begin{aligned}
\bm S_{21}^T\mathcal{K}(\bm X_1 + \bm\Delta_1,\bm X_1 + \bm\Delta_1)\bm S_{21} \le \bm S_{21}^T\mathcal{K}(\bm X_1,\bm X_1)\bm S_{21} + \varepsilon_{11}|\bm S_{21}|^T\mathbb I_{n_1}\mathbb I_{n_1}^T|\bm S_{21}|
\end{aligned}\end{equation}

Thus, we have
\begin{equation}\begin{aligned}
& \Vert\phi(\bm{X}_1 + \bm{\Delta}_1) - \phi(\bm{X}_2 + \bm{\Delta}_2)\bm{S}_{12}\Vert_F^2\\
= & \text{Tr}(T_1 + T_2 + T_3) = \text{Tr}(T_1) + \text{Tr}(T_2) + \text{Tr}(T_3)\\
\le & \text{Tr}(\mathcal{K}(\bm X_1,\bm X_1) + \varepsilon_{11}\mathbb I_{n_1}\mathbb I_{n_1}^T) + \text{Tr}(-2\mathcal{K}(\bm X_1,\bm X_2)\bm S_{12} + 2\varepsilon_{12}\mathbb I_{n_1}\mathbb I_{n_2}^T\left|\bm S_{12}\right|)\\
& + \text{Tr}(\bm S_{12}^T\mathcal{K}(\bm X_2,\bm X_2)\bm S_{12} + \varepsilon_{22}|\bm S_{12}|^T\mathbb I_{n_2}\mathbb I_{n_2}^T|\bm S_{12}|)\\
= & \text{Tr}(\mathcal{K}(\bm X_1,\bm X_1) -2\mathcal{K}(\bm X_1,\bm X_2)\bm S_{12} + \bm S_{12}^T\mathcal{K}(\bm X_2,\bm X_2)\bm S_{12})\\
& + \text{Tr}(\varepsilon_{11}\mathbb I_{n_1}\mathbb I_{n_1}^T + 2\varepsilon_{12}\mathbb I_{n_1}\mathbb I_{n_2}^T\left|\bm S_{12}\right| + \varepsilon_{22}|\bm S_{12}|^T\mathbb I_{n_2}\mathbb I_{n_2}^T|\bm S_{12}|)\\
\end{aligned}\end{equation}
If setting $\varepsilon = \max\{\varepsilon_{11},\varepsilon_{12},\varepsilon_{21},\varepsilon_{22}\}$, we will be able to futher simplify the preceding result as
\begin{equation}\begin{aligned}
\Vert\phi(\bm{X}_1 + \bm{\Delta}_1) - \phi(\bm{X}_2 + \bm{\Delta}_2)\bm{S}_{12}\Vert_F^2 \le \Vert\phi(\bm{X}_1) - \phi(\bm{X}_2)\bm{S}_{12}\Vert_F^2 + \varepsilon\left\Vert|\bm S_{12}|^T\mathbb I_{n_2} + \mathbb I_{n_1}\right\Vert_2^2
\end{aligned}\end{equation}
Analogously, we have
\begin{equation}\begin{aligned}
\Vert\phi(\bm{X}_2 + \bm{\Delta}_2) - \phi(\bm{X}_1 + \bm{\Delta}_1)\bm{S}_{21}\Vert_F^2 \le \Vert\phi(\bm{X}_2) - \phi(\bm{X}_1)\bm{S}_{21}\Vert_F^2 + \varepsilon\left\Vert|\bm S_{21}|^T\mathbb I_{n_1} + \mathbb I_{n_2}\right\Vert_2^2
\end{aligned}\end{equation}
Therefore, 
\begin{equation}\begin{aligned}
& d(\bm X_1 + \bm\Delta_1, \bm X_2 + \bm\Delta_2)\\
\le & \sqrt{w_1\Vert\phi(\bm{X}_1 + \bm{\Delta}_1) - \phi(\bm{X}_2 + \bm{\Delta}_2)\bm{S}_{12}\Vert_F^2 + w_2\Vert\phi(\bm{X}_2 + \bm{\Delta}_2) - \phi(\bm{X}_1 + \bm{\Delta}_1)\bm{S}_{21}\Vert_F^2}\\
\le & \left\{w_1(\Vert\phi(\bm{X}_1) - \phi(\bm{X}_2)\bm{S}_{12}\Vert_F^2 + \varepsilon\left\Vert|\bm S_{12}|^T\mathbb I_{n_2} + \mathbb I_{n_1}\right\Vert_2^2)\right.\\
& \left.+ w_2(\Vert\phi(\bm{X}_2) - \phi(\bm{X}_1)\bm{S}_{21}\Vert_F^2 + \varepsilon\left\Vert|\bm S_{21}|^T\mathbb I_{n_1} + \mathbb I_{n_2}\right\Vert_2^2)\right\}^{1/2}\\
\le & \sqrt{w_1\Vert\phi(\bm{X}_1) - \phi(\bm{X}_2)\bm{S}_{12}\Vert_F^2 + w_2\Vert\phi(\bm{X}_2) - \phi(\bm{X}_1)\bm{S}_{21}\Vert_F^2}\\
& + \sqrt{w_1\varepsilon\left\Vert|\bm S_{12}|^T\mathbb I_{n_2} + \mathbb I_{n_1}\right\Vert_2^2 + w_2\varepsilon\left\Vert|\bm S_{21}|^T\mathbb I_{n_1} + \mathbb I_{n_2}\right\Vert_2^2}\\
= & d(\bm X_1, \bm X_2) + \sqrt{w_1\varepsilon\left\Vert|\bm S_{12}|^T\mathbb I_{n_2} + \mathbb I_{n_1}\right\Vert_2^2 + w_2\varepsilon\left\Vert|\bm S_{21}|^T\mathbb I_{n_1} + \mathbb I_{n_2}\right\Vert_2^2}
\end{aligned}\end{equation}
In the derivation above, we use the Lemma \ref{lem_order_property_3} for the third inequality and get the last inequality by setting both $\bm S_{12}$ and $\bm S_{21}$ to be the values such that $\sqrt{w_1\Vert\phi(\bm{X}_1) - \phi(\bm{X}_2)\bm{S}_{12}\Vert_F^2 + w_2\Vert\phi(\bm{X}_2) - \phi(\bm{X}_1)\bm{S}_{21}\Vert_F^2}$ is exactly $d(\bm X_1, \bm X_2)$ because $\bm S_{12}$ and $\bm S_{21}$ are both arbitrary here.

By symmetry, one immediately has
\begin{equation}\begin{aligned}
d(\bm X_1 + \bm\Delta_1, \bm X_2 + \bm\Delta_2) \ge d(\bm X_1, \bm X_2) - \sqrt{w_1\varepsilon\left\Vert|\bm S_{12}|^T\mathbb I_{n_2} + \mathbb I_{n_1}\right\Vert_2^2 + w_2\varepsilon\left\Vert|\bm S_{21}|^T\mathbb I_{n_1} + \mathbb I_{n_2}\right\Vert_2^2}
\end{aligned}\end{equation}

Nevertheless, since $\bm S_{12}$ and $\bm S_{21}$ are both unknown, we further give a bound on the second term in the last equality:
\begin{equation}
\begin{aligned}
\Vert|\bm S_{12}|^T\mathbb I_{n_2} + \mathbb I_{n_1}\Vert_2^2 & \le (\Vert|\bm S_{12}|^T\mathbb I_{n_2}\Vert_2 + \Vert\mathbb I_{n_1}\Vert_2)^2\\
& \le (\Vert|\bm S_{12}|^T\Vert_F\Vert\mathbb I_{n_2}\Vert_2 + \Vert\mathbb I_{n_1}\Vert_2)^2\\
& = (\Vert \bm S_{12}\Vert_F\Vert\mathbb I_{n_2}\Vert_2 + \Vert\mathbb I_{n_1}\Vert_2)^2\\
& \le (\sqrt{\text{Rank}(\bm S_{12})}\Vert\bm S_{12}\Vert_2\Vert\mathbb I_{n_2}\Vert_2 + \Vert\mathbb I_{n_1}\Vert_2)^2\\
& \le (\sqrt{\min\{n_1,n_2\}n_2} + \sqrt{n_1})^2\\
& \le (\sqrt{\min\{n_1,n_2\}n_2} + \sqrt{\min\{n_1,n_2\}n_1})^2\\
& \le (2\sqrt{\min\{n_1,n_2\}\max\{n_1,n_2\}})^2\\
& = 4 n_1 n_2
\end{aligned}
\end{equation}


Analogously, we have
\begin{equation}
\begin{aligned}
\Vert|\bm S_{21}|^T\mathbb I_{n_1} + \mathbb I_{n_2}\Vert_2^2 \le 4n_1n_2
\end{aligned}
\end{equation}
Thus, 
\begin{equation}
\begin{aligned}
& \sqrt{w_1\varepsilon\Vert|\bm S_{12}|^T\mathbb I_{n_2} + \mathbb I_{n_1}\Vert_2^2 + w_2\varepsilon\Vert|\bm S_{21}|^T\mathbb I_{n_1} + \mathbb I_{n_2}\Vert_2^2}\\
\le & \sqrt{w_1\varepsilon\cdot 4 n_1 n_2 + w_2\varepsilon\cdot 4 n_1 n_2} = 2\sqrt{(w_1 + w_2)\varepsilon n_1 n_2}
\end{aligned}
\end{equation}
Finally, we conclude 
\begin{equation}
\begin{aligned}
\left|d(\bm X_1 + \bm\Delta_1,\bm X_2 + \bm\Delta_2) - d(\bm X_1,\bm X_2)\right| \le \psi_\varepsilon(n_1, n_2)
\end{aligned}
\end{equation}
where $\psi_\varepsilon(n_1, n_2) = 2\sqrt{(w_1 + w_2)\varepsilon n_1 n_2}$.
\end{proof}

\end{document}